\def \method {MARS}
\def \methodap {MARS-approx}
\icmltitlerunning{MARS: Unleashing the Power of Variance Reduction for Training Large Models}
\begin{document}

\twocolumn[
\icmltitle{MARS: Unleashing the Power of Variance Reduction for Training Large Models}



\icmlsetsymbol{equal}{*}

\begin{icmlauthorlist}
\icmlauthor{Huizhuo Yuan}{equal,ucla,bd}
\icmlauthor{Yifeng Liu}{equal,ucla}
\icmlauthor{Shuang Wu}{bdbj}
\icmlauthor{Xun Zhou}{bdbj}
\icmlauthor{Quanquan Gu}{ucla,bd}
\end{icmlauthorlist}

\icmlaffiliation{ucla}{Department of Computer Science, University of California, Los Angeles, California, USA (This work was done during Yifeng's internship at ByteDance Seed)}
\icmlaffiliation{bd}{ByteDance Seed, San Jose, California, USA}
\icmlaffiliation{bdbj}{ByteDance Seed, Beijing, China}

\icmlcorrespondingauthor{Quanquan Gu}{qgu@cs.ucla.edu}

\icmlkeywords{Optimization, Large language model}

\vskip 0.3in
]
\newcommand{\todoq}[2][]{\todo[size=\scriptsize,color=orange!20!white,#1]{Quanquan: #2}}


\printAffiliationsAndNotice{\icmlEqualContribution} 

\begin{abstract}
Training deep neural networks—and more recently, large models demands efficient and scalable optimizers. Adaptive gradient algorithms like \text{Adam}, \text{AdamW}, and their variants have been central to this task. %
Despite the development of numerous variance reduction algorithms in the past decade aimed at accelerating stochastic optimization in both convex and nonconvex settings, variance reduction has not found widespread success in training deep neural networks or large language models. Consequently, it has remained a less favored approach in modern AI. In this paper, to unleash the power of variance reduction for efficient training of large models, we propose a unified optimization framework, \method~(\textbf{M}ake v\textbf{A}riance \textbf{R}eduction \textbf{S}hine), which reconciles preconditioned gradient methods with variance reduction via a scaled stochastic recursive momentum technique. Within our framework, we introduce three instances of \method~that leverage preconditioned gradient updates based on \text{AdamW}, \text{Lion}, and \text{Shampoo}, respectively. We also draw a connection between our algorithms and existing optimizers. Experimental results on training GPT-2 models indicate that \text{\method} consistently outperforms \text{AdamW} by a large margin.
\end{abstract}

\section{Introduction}

Adaptive gradient methods such as Adam \citep{adam} and AdamW \citep{Adamw} have become the predominant optimization algorithms in deep learning. With the surge of large language models, the majority of the renowned models, including GPT-2 \citep{radford2019language}, GPT-3~\citep{gpt3}, PaLM~\citep{palm} and Llama 3~\citep{llama3} are trained with adaptive gradient methods. Numerous efforts have been made to improve adaptive gradient methods from both first-order and second-order optimization perspectives. For example, \citet{you2019large} introduced LAMB, a layerwise adaptation technique that boosts training efficiency for BERT~\citep{devlin2018bert}. Using symbolic search, \citet{lion} developed Lion, achieving faster training and reduced memory usage. \citet{liu2023sophia} designed Sophia, leveraging stochastic diagonal Hessian estimators to accelerate training.  \citet{gupta2018shampoo} proposed Shampoo, which performs stochastic optimization over tensor spaces with preconditioning matrices for each dimension. \citet{anil2020scalable} further refined Shampoo to give a scalable, practical version. Recently, \citet{vyas2024soap} showed that Shampoo is equivalent to Adafactor \citep{shazeer2018adafactor} in the eigenbasis of Shampoo’s preconditioner, and introduced SOAP, which stabilizes Shampoo with Adam. Importantly, recent studies \citep{kaddour2024no, zhao2024deconstructing} have shown that these optimizers perform on par with AdamW in LLM pretraining, yet do not outperform it. This suggests the ongoing challenge in developing adaptive gradient methods superior to Adam and AdamW for large-scale model training.

Since adaptive gradient methods face challenges of high stochastic gradient variance, and language model training inherently involves a high-variance optimization problem \citep{mccandlish2018empirical}, it is natural to consider variance reduction techniques to address this challenge.
 There exists a large body of literature on variance reduction for stochastic optimization, such as SAG \citep{roux2012stochastic}, SVRG \citep{johnson2013accelerating}, STORM \citep{STORM}, which can improve the convergence of stochastic optimization. However, variance reduction has not found widespread success in training deep neural networks or large language models. 
 \citet{defazio2019ineffectiveness} discussed why variance reduction can be ineffective in deep learning due to factors such as data augmentation, batch normalization, and dropout, which disrupt the finite-sum structure required by variance reduction principles. Nevertheless, in training language models, data augmentation, batch normalization, and dropout are nowadays rarely used, which opens the door for applying variance reduction techniques in optimizing these models. 
This naturally leads to the following research question: 
\begin{center}
\textit{Can variance reduction technique be applied to improve the performance of training large models?}
\end{center}

In this paper, we answer the above question affirmatively by introducing a novel optimization framework called \text{\method} (\textbf{M}ake v\textbf{A}riance \textbf{R}eduction \textbf{S}hine), which incorporates variance reduction into adaptive gradient methods. Notably, we introduce a scaling parameter into the stochastic recursive momentum (STORM) \citep{STORM} to adjust the strength of variance reduction and define a new gradient estimator. This gradient estimator undergoes gradient clipping and is subsequently subjected to exponential averaging. When the variance reduction strength is set to $1$, it recovers the vanilla STORM momentum. In addition, the second-order momentum update is defined by the reweighted intermediate variable. These together ensure optimization stability throughout the training process. 
We summarize our major contributions of this paper as follows:
\begin{itemize}
    \item We propose a unified framework for preconditioned variance reduction, namely \text{\method}. At its core, \method~comprises two major components: (1) a scaled stochastic recursive momentum, which provides a variance-reduced estimator of the full gradient for better gradient complexity; and (2) the preconditioned update, which approximates the second-order Newton's method for better per-iteration complexity.
By combining preconditioned gradient methods with variance reduction, \method~achieves the best of both worlds, accelerating the search for critical points in optimization.
\item  The \text{\method} framework is versatile, accommodating all existing full matrix or diagonal Hessian approximations. Under this framework, we utilize three distinct designs of the preconditioning matrix, resulting in three specific instances of our \text{\method} framework: \text{\method}-AdamW, \text{\method}-Lion, and \text{\method}-Shampoo. Each variant demonstrates compatibility with their corresponding preconditioning in \text{AdamW}, \text{Lion}, and \text{Shampoo}, showing that MARS can seamlessly integrate with and do variance reduction on these established methods.

\item Empirically, we evaluated MARS on GPT-2 fine-tuning tasks using the OpenWebText dataset. It demonstrates superior performance on GPT-2 large: AdamW requires 50 billion tokens to reach a validation loss of 2.58, whereas MARS only requires 28 billion tokens, and it achieves a final validation loss of 2.51. Furthermore, on the downstream task Hellaswag, \text{\method} improved accuracy to 44.64\%, outperforming AdamW's 41.70\% after training on 50 billion tokens. And the code is available at \url{https://github.com/AGI-Arena/MARS}.
\end{itemize}
\noindent\textbf{Notations}
In this paper, we assume $\xb_t$ denotes the parameter of the language model at step $t$ and $\bxi_1, ..., \bxi_T\in \mathbf{\Xi}$ are a sequence of independent random variables which denote the training data for each step. For some objective function $f$ that is differentiable, we assume $\EE[f(\xb, \bxi_t)|\xb]=F(\xb)$ for $\forall \xb,~\forall t$. In our algorithm, the training data of the current step $\bxi_t$ and previous step $\bxi_{t-1}$ are used for attaining different gradient for the same parameter $\xb_t$, so we just explicitly indicate these variables for function $f$.

\section{Preliminaries}

In this section, we review the preliminaries of stochastic optimization, including standard stochastic gradient methods and variance reduction. 

We consider minimizing an objective function $F(\cdot): \RR^d \rightarrow \RR$ as follows:
\begin{align}
\min_{\xb} F(\xb) = \EE_{\bxi\sim \cD}[f(\xb,\bxi)],
\end{align}
where $f(\xb,\bxi)$ is possibly nonconvex loss function, $\xb \in \RR^d$ is the optimization variable, $\bxi$ is a random vector (e.g., a training data point) drawn from an unknown data distribution $\cD$.
We assume the access to the first-order oracle, which returns an unbiased estimator of the gradient $\EE[\nabla f(\xb, \bxi)] = \nabla F(\xb)$. The standard stochastic gradient descent (SGD) algorithm yields:
\begin{align}
    \xb_{t+1} &= \xb_t-\eta_t \nabla f(\xb_t, \bxi_t),
\end{align}
where $\eta_t>0$ is the learning rate or step size. 
SGD needs $\mathcal{O}(\varepsilon^{-4})$
stochastic gradient evaluations (i.e., gradient complexity or incremental first-order oracle complexity) to find a $\epsilon$-approximate first-order stationary points, i.e., $\|\nabla F(\mathbf{x})\|_2\leq \epsilon$ \citep{ghadimi2013stochastic}.

To accelerate the convergence of SGD, variance reduction techniques have been extensively researched in both the machine learning and optimization communities over the past decade, resulting in numerous algorithms for convex optimization—such as SAG \citep{roux2012stochastic}, SVRG \citep{johnson2013accelerating}, SAGA \citep{defazio2014saga}, and SARAH~\citep{SARAH1}—as well as for nonconvex optimization, including SVRG \citep{allen2016improved,reddi2016stochastic}, SNVRG \citep{zhou2020stochastic}, SPIDER \citep{fang2018spider}, and STORM \citep{STORM}, among others. Notably, for nonconvex optimization, SNVRG \citep{zhou2020stochastic}, SPIDER \citep{fang2018spider} and STORM \citep{STORM} can improve the gradient complexity of SGD from $\mathcal{O}(\varepsilon^{-4})$ to $\mathcal{O}(\varepsilon^{-3})$, demonstrating a provable advantage. 

At the heart of variance reduction techniques is a variance-reduced stochastic gradient, exemplified by the method proposed by \citet{johnson2013accelerating} as follows:
\begin{align*}
    \mb_t = \nabla f(\xb_t, \bxi_t) - \nabla f(\tilde{\xb}, \bxi_t) + \nabla F(\tilde{\xb}),
\end{align*}
where $\tilde{\xb}$ is an anchoring point (a.k.a., reference point) that updates periodically.
This variance-reduced stochastic gradient can reduce the variance of the stochastic gradient by adding a correction term $- \nabla f(\tilde{\xb}, \bxi_t) + \nabla F(\tilde{\xb})$ based on a less frequently updated reference point $\tilde{\xb}$ and its full gradient $\nabla F(\tilde{\xb})$. It can be shown that the variance $\mb_t$ can be controlled by $\|\xb_t - \tilde{\xb}\|_2$, which will diminish as both $\xb_t$ and $\tilde{\xb}$ converges to the stationary points when the algorithm makes progress.
Subsequent improvements in variance reduction techniques were introduced in SARAH \citep{SARAH1} and SPIDER \citep{fang2018spider}, which get rid of the anchor point and result in the following momentum update:
\begin{align}
    \mb_t &= \nabla f(\xb_t, \bxi_t) - \nabla f(\xb_{t-1}, \bxi_t) + \mb_{t-1},\nonumber
        \\
    \xb_{t+1} &= \xb_t - \eta_t \mb_t.\label{eq:spider}
\end{align}
In the context of training neural networks, $\xb_t \in \RR^d$ represents the trained weights in the neural network, $\bxi_t$ represents random data, and $\mb_t$ is the variance-reduced (VR) first-order momentum.
The stochastic gradient difference term $\nabla f(\xb_t, \bxi_t) - \nabla f(\xb_{t-1}, \bxi_t)$ cancels out common noise brought by $\bxi_t$, while pushing the gradient estimation from the estimator of $\nabla F(\xb_{t-1})$ to the estimator of $\nabla F(\xb_t)$.
However, $\mb_t$ needs to be reset periodically to a full gradient (or a large batch stochastic gradient) $\nabla F(\xb_t)$, which we refer to as an anchoring step, analogous to the anchor point in SVRG.

Subsequently, \citet{STORM} introduced Stochastic Recursive Momentum (STORM), a variant of standard momentum with an additional term, achieving the same convergence rate as SPIDER while eliminating the need for periodic anchoring:
\begin{align}
    \mb_t &= 
    \beta_1 \mb_{t-1} + (1 - \beta_1) \nabla f(\xb_t, \bxi_t) \notag
    \\
    &+ \beta_1 \big(
     \nabla f(\xb_t, \bxi_t) - \nabla f(\xb_{t-1}, \bxi_t) 
    \big)\label{eq:storm}
\end{align}
where $\beta_1>0$ is momentum parameter, and $\beta_1(\nabla f(\xb_t, \bxi_t) -\nabla f(\xb_{t-1}, \bxi_t))$ is the additional term that has variance reduction effect. Note that if $\xb_t\approx \xb_{t-1}$, STORM becomes approximately the standard momentum.

Alternatively,~\eqref{eq:storm} can be rewritten as an exponential moving average (EMA) of the first order momentum from previous step/iteration and the stochastic gradient with a \emph{gradient correction} term:
\begin{align}
    \mb_t 
        &=
    \beta_1 \mb_{t-1} 
        +
    (1 - \beta_1) 
    \Big[
        \nabla f(\xb_t, \bxi_t)\notag
            \\
        &+
        \underbrace{\frac{\beta_1}{1 - \beta_1}
        \big(
        \nabla f(\xb_t, \bxi_t)
            -
        \nabla f(\xb_{t-1}, \bxi_t)
        \big)}_{\text{gradient correction}}
    \Big].\label{eq:storm_ema}
\end{align}
Theoretically, when assuming access to an unbiased stochastic first-order oracle to the objective function $F(\xb)$, STORM achieves the nearly optimal gradient complexity of $\cO(\varepsilon^{-3})$ for non-convex and smooth optimization problems~\citep{arjevani2023lower}.
\section{Method}
In this section, we introduce \text{\method} (\textbf{M}ake \textbf{vA}riance \textbf{R}eduction \textbf{S}hine), a family of preconditioned optimization algorithms that perform variance reduction in gradient estimation.

\subsection{\method~Framework}
We first introduce our framework for a preconditioned, variance-reduced stochastic optimization, which unifies both first-order (e.g., AdamW, Lion) and second-order (e.g., Shampoo) adaptive gradient methods.

\noindent\textbf{Preconditioned Variance Reduction.}
Variance reduction methods achieve faster convergence than SGD, yet identifying optimal learning rates remains a practical challenge. Particularly, different parameters often exhibit varying curvatures, requiring tailored learning rates for each.
One approach to addressing this issue is to use the Hessian matrix to precondition gradient updates, integrating curvature information into the updates. The idea stems from minimizing the second-order Taylor expansion at $\xb_t$:
\begin{align}
    F(\xb_{t+1})
        &\approx
    F(\xb_t)
        +
    \nabla F(\xb_t)(\xb_{t+1} - \xb_t)\notag
    \\
        &+
    \frac12 (\xb_{t+1} - \xb_t)^\top  \nabla^2 F(\xb_t) (\xb_{t+1} - \xb_t),\label{eq:Taylor}
\end{align}
resulting in the update formula
$
    \xb_{t+1} = \xb_t - \Hb_t^{-1} \nabla F(\xb_t)
$, where $\Hb_t := \nabla^2 F(\xb_t) \in \mathbb{R}^{d \times d}$ is the Hessian matrix.
In our paper, we encapsulate the preconditioned gradient $\Hb_t^{-1} \nabla F(\xb_t)$ update within a more generalized framework of Online Mirror Descent (OMD) as in~\citet{gupta2018shampoo}, leading to the following update rules:
\begin{align}
    \xb_{t+1} = \arg\min_{\xb\in \RR^d}
    \bigg\{
        \eta_t 
        \left\langle 
            \mb_t, \xb
        \right\rangle
            +
        \frac12 \|\xb - \xb_t\|_{\Hb_t}^2
    \bigg\},\label{eq:mirror_descent}
\end{align}
where $\eta_t>0$ can be viewed as a base learning rate.
Combining~\eqref{eq:mirror_descent} with the STORM momentum, we obtain the following preconditioned variance-reduced update:
\begin{align}
    \mb_t 
        &=
    \beta_1 \mb_{t-1} 
        +
    (1 - \beta_1) 
    \Big[
        \nabla f(\xb_t, \bxi_t)\notag
        \\
            &+
       \frac{\beta_1}{1 - \beta_1}
        \big(
        \nabla f(\xb_t, \bxi_t)
            -
        \nabla f(\xb_{t-1}, \bxi_t)
        \big)
    \Big], \label{eq:new momentum}
    \\ 
    \xb_{t+1} 
        &= 
    \arg\min_{\xb\in \RR^d}
    \bigg\{
        \eta_t 
        \left\langle 
            \mb_t, \xb
        \right\rangle
            +
        \frac12 \|\xb - \xb_t\|_{\Hb_t}^2
    \bigg\}.\label{eq:ideal_update}
\end{align}

\begin{remark}
    SuperAdam~\citep{huang2021super} also incorporates the STORM into the design of adaptive gradient methods. However, their precondition matrix can be viewed as a special case of our general framework. SuperAdam's design focuses on diagonal precondition matrix and draws heavily from the design used in Adam~\citep{adam}, AdaGrad-Norm~\citep{ward2020adagrad}, and AdaBelief~\citep{zhuang2020adabelief}.
    Furthermore, their preconditioner matrix is designed following Adam's structure but does not account for the revised definition of variance-reduced momentum, resulting in a significant mismatch between the first-order and second-order momentum. We will further clarify these differences when discussing specific instances of our framework.
\end{remark}
\noindent\textbf{Algorithm Design.}
In practice, alongside our preconditioned variance-reduced update~\eqref{eq:new momentum}, we introduce a scaling parameter $\gamma_t$ to control the scale of gradient correction in variance reduction. We also introduce a new gradient estimator $\mathbf{c}_t$, which is the combination of stochastic gradient and the scaled gradient correction term:
\begin{small}
\begin{align*}
    \mathbf{c}_t = \nabla f(\xb_t, \bxi_t)+\underbrace{{\color{red}\gamma_t} \frac{\beta_{1}}{1-\beta_{1}} \big(\nabla f(\xb_t, \bxi_t)-\nabla f(\xb_{t-1}, \bxi_t)\big)}_{\text{scaled gradient correction}}.
\end{align*}
\end{small}

When $\gamma_t = 1$, the above reduces to the second term of~\eqref{eq:new momentum}. On the other hand, when $\gamma_t = 0$, it reduces to the stochastic gradient. Thus, $\mathbf{c}_t$ can be seen a gradient estimator with adjustable variance control.

Following standard techniques in deep learning practice, we also perform gradient clipping on $\mathbf{c}_t$, which is calculated by:
\begin{align}
        \tilde{\mathbf{c}}_t = \text{Clip}(\mathbf{c}_t,1) =  \begin{cases}
\frac{\mathbf{c}_t}{\|\mathbf{c}_t\|_2} & \text{if } \|\mathbf{c}_t\|_2 > 1,\\
\mathbf{c}_t & \text{otherwise}.
\end{cases}\label{eq:grad_clip}
\end{align}
We note that the Second-order Clipped Stochastic Optimization (Sophia) algorithm~\citep{liu2023sophia} also incorporates clipping in their algorithm design. However, their approach does clipping upon the preconditioned gradient with clipping-by-value, while our method applies clipping to the intermediate gradient estimate using the more standard technique of clipping-by-norm.
After the gradient clipping, the VR momentum $\mb_t$ can be calculated as the EMA of $\tilde{\mathbf{c}}_t$. The resulting \text{\method} algorithm is summarized in Algorithm~\ref{varga_meta}.

\begin{algorithm}[ht]
\caption{MARS}
\begin{algorithmic}[1]
\label{varga_meta}
    \STATE \textbf{input:} $\xb_0, \beta_1, \{\gamma_t\}, \{\eta_t\}$
    \STATE Set $\mb_0\leftarrow \mathbf{0}$ and $\xb_1\leftarrow\xb_0$
    \FOR {$t=1,$ \textbf{to} $ n$}
        \STATE Sample $\bxi_t$ and let $\mathbf{c}_t = \nabla f(\xb_t, \bxi_t)+\gamma_t \frac{\beta_{1}}{1-\beta_{1}} \big(\nabla f(\xb_t, \bxi_t)-\nabla f(\xb_{t-1}, \bxi_t)\big)$\label{line:VR gradient}
        \STATE if $\|\mathbf{c}_t\|_2 > 1$, then $\Tilde{\mathbf{c}}_t=\frac{\mathbf{c}_t}{\|\mathbf{c}_t\|_2}$ else $\tilde{\mathbf{c}}_t = \mathbf{c}_t$\label{line:gradient clipping}
        \STATE $\mb_t = \beta_1 \mb_{t-1} + (1-\beta_{1})\Tilde{\mathbf{c}}_t$
        \STATE $\xb_{t+1} =\arg\min_{\xb}\left\{\eta_t \left\langle \mb_t, \xb\right\rangle+\frac12 \|\xb - \xb_t\|_{\Hb_t}^2\right\}$
    \ENDFOR
\end{algorithmic}
\end{algorithm}

\noindent\textbf{Why $\gamma_t$ improves convergence} 
A similar idea of adjusting the strength of variance reduction has been proposed by \citet{yin2023coefficient} in the context of SVRG. This approach originates from a classical line of work on control variates~\citep{asmussen2007stochastic, lavenberg1977application}. 

In the standard control variates setting, one considers the estimator:
\begin{align*}
&\mathbb{E} \left[X - \mathbb{E}[X] - {\color{red}\gamma}(Y - \mathbb{E}[Y])\right]^2 
\\&= \mathrm{Var}(X) - 2{\color{red}\gamma} \, \mathbb{E}[(X - \mathbb{E}[X])(Y - \mathbb{E}[Y])] + {\color{red}\gamma}^2 \, \mathrm{Var}(Y),
\end{align*}
which admits an optimal choice of ${\color{red}\gamma}$ that minimizes the variance:
\begin{align*}
&\arg\min_{{\color{red}\gamma}} \mathbb{E} \left[X - \mathbb{E}[X] - {\color{red}\gamma}(Y - \mathbb{E}[Y])\right]^2 
\\&= \frac{\mathbb{E}[(X - \mathbb{E}[X])(Y - \mathbb{E}[Y])]}{\mathrm{Var}(Y)}.
\end{align*}
In the context of STORM updates used in our work, the update rule includes both stochastic gradients and recursive momentum terms. Let us define:
\begin{align*}
X &= (1 - \beta) \nabla f(\mathbf{x}_{t+1}, \boldsymbol{\xi}_{t+1}), \\
Y &= \beta \left[\nabla f(\mathbf{x}_{t+1}, \boldsymbol{\xi}_{t+1}) - \nabla f(\mathbf{x}_t, \boldsymbol{\xi}_{t+1})\right], \\
Z_t &= \mathbf{m}_t - \nabla F(\mathbf{x}_t),
\end{align*}
and let $U := X - \mathbb{E}[X] + Z_t$. The updated $Z_{t+1}$ at step $t+1$ is of the form $U + ({\color{red} \gamma} Y - \mathbb{E}[Y])$, whose squared expectation we aim to minimize.

The optimal choice of ${\color{red} \gamma}$ in this setting is:
\begin{align*}
{\color{red} \gamma^*} &= 1 - \frac{\mathbb{E}[UY] + \mathrm{Var}(Y)}{\mathbb{E}[Y^2]}.
\end{align*}

With this choice, we obtain a variance reduction:
\begin{align*}
\mathbb{E} \left[U + ({\color{red} \gamma^*} Y - \mathbb{E}[Y])\right]^2 
&= \mathbb{E} \left[U + (Y - \mathbb{E}[Y])\right]^2 
\\&\quad - \frac{\left(\mathbb{E}[UY] + \mathrm{Var}(Y)\right)^2}{\mathbb{E}[Y^2]},
\end{align*}
which is strictly smaller than the variance under any non-optimal choice of $\gamma$. Thus, dynamically tuning $\gamma_t$ improves the variance of the gradient estimator, leading to better convergence behavior. A full convergence analysis is provided in Appendix~\ref{app:theorem}.

We provide the convergence analysis of Algorithm~\ref{varga_meta} in Theorem~\ref{thm:no-weight-decay} in Appendix~\ref{app:theorem}. 
We prove that under standard assumptions, \text{\method} achieves a superior convergence rate of $\mathcal{O}(T^{-1/3})$, outperforming the $\mathcal{O}(T^{-1/4})$ rate attainable by AdamW.

\noindent\textbf{Full Matrix Approximation.}
In practice, calculating the Hessian matrix is computationally expensive or even intractable due to the complexity of second-order differentiation and the significant memory cost of storing $\Hb_t$, especially when the parameters in a neural network constitute a high-dimensional matrix. Many existing algorithms employ various approximations of the Hessian. For instance, K-FAC~\citep{martens2015optimizing} and \text{Shampoo}~\citep{gupta2018shampoo} approximate the Gauss-Newton component of the Hessian (also known as the Fisher information matrix), using a layerwise Kronecker product approximation~\citep{morwani2024new}. Additionally, \text{Sophia}~\citep{liu2023sophia} suggests using Hutchinson's estimator or the Gauss-Newton-Barlett estimator for approximating the Hessian.
We take various designs of the preconditioning matrix into account and broaden the definition of $\Hb_t$ in~\eqref{eq:ideal_update} to encompass various specifically designed preconditioning matrix in the rest of the paper.

\noindent\textbf{Diagonal Matrix Approximation.}
Even when using approximated Hessian matrices, second-order algorithms mentioned above remain more computationally intensive compared to first-order gradient updates. Thus, another line of research focuses on approximating the Hessian matrix through diagonal matrices, as seen in optimization algorithms like  \text{AdaGrad}~\citep{adagrad}, \text{RMSProp}~\citep{RMSProp},  \text{AdaDelta}~\citep{zeiler2012adadelta}, \text{Adam}~\citep{adam} and \text{AdamW}~\citep{Adamw}, etc. This approach to diagonal preconditioning effectively transforms the updates into a first-order method, assigning adaptive learning rates to each gradient coordinate.
For example, in \text{AdaGrad}~\citep{adagrad}, the preconditioned matrix is defined by:
\begin{align*}
    [\Hb_t]_{ii} = \sqrt{\sum_{\tau=0}^t \big[\nabla f(\xb_\tau, \bxi_\tau)\big]_i^2}.
\end{align*}
On the other hand, \text{Adam} can be seen as using a diagonal $\Hb_t$, where each diagonal element is the EMA of $[\nabla f(\xb_t, \bxi_t)]_i^2$:
\begin{align}
    [\Hb_t]_{ii} = \beta [\Hb_{t-1}]_{ii} + (1 - \beta) [\nabla f(\xb_t, \bxi_t)]_i^2.
    \label{eq:adam_hessian}
\end{align}
Therefore, the update simplifies to elementwise adaptive gradient update, i.e., $[\xb_{t+1}]_i =  [\xb_{t}]_i - \eta [\mb_t]_i/[\Hb_t]_{ii}$.
Our unified framework accommodates both types of preconditioning: full Hessian approximation and diagonal Hessian approximation. Different definitions of $\Hb_t$ give rise to different algorithms. 

Notably, full-matrix approximations of the Hessian are potentially more powerful than diagonal approximations, as they can capture statistical correlations between the gradients of different parameters. Geometrically, full-matrix approximations allow both scaling and rotation of gradients, whereas diagonal matrices are limited to scaling alone.

\subsection{Instantiation of \method}
In previous subsection, we introduced our preconditioned variance reduction framework in Algorithm~\ref{varga_meta} and discussed various approaches for approximating the Hessian matrix. In this subsection, we introduce practical designs of \method~under different choices of $\Hb_t$. While here we only present three instantiations: \text{\method-AdamW}, \text{\method-Lion}, and \text{\method-Shampoo}, we believe there are many other instances of \method~can be derived similarly. 

\subsubsection{\text{\method-AdamW}}
The first instance of \method~is built up on the idea of Adam/AdamW \citep{Adamw}. To automatically adjust the learning rate and accelerate convergence, \text{Adam}~\citep{adam} adopts the adaptive preconditioned gradient in~\eqref{eq:adam_hessian} together with a bias correction and $\ell_2$ regularization. \text{AdamW}~\citep{Adamw} further changes the $\ell_2$ regularization to a decoupled weight decay. Overall, the full \text{AdamW} updates can be summarized as follows:
\begin{align}
    \mb_t&=\beta_1 \mb_{t-1}+(1-\beta_1)\nabla f(\xb_t, \bxi_t),\label{eq:adam_momentum}
    \\
    \mathbf{v}_t&=\beta_2 \mathbf{v}_{t-1}+(1-\beta_2) \big(\nabla f(\xb_t, \bxi_t)\big)^2,\label{eq:adam_v}
    \\
    \hat{\mb}_t&= \frac{\mb_t}{1-\beta_1^t},~~\hat{\mathbf{v}}_t = \frac{\mathbf{v}_t}{1-\beta_2^t},\notag
    \\
    \xb_{t+1}&= \xb_t-\eta_t\bigg(\frac{\hat{\mb}_t}{\sqrt{\hat{\mathbf{v}}_t}+\epsilon}+\lambda \xb_t\bigg).\notag
\end{align}
We see that except for the small $\epsilon$ introduced for computational stability, and the decoupled weight decay $\lambda \xb_t$, \text{AdamW} can be seen as a step of mirror descent update~\eqref{eq:mirror_descent} with $\mb_t$ defined in~\eqref{eq:adam_momentum}, $\mathbf{v}_t$ defined in~\eqref{eq:adam_v}, and $\Hb_t$ defined by
\begin{align}
\Hb_t := \sqrt{\text{diag}\Big(\mathbf{v}_t\Big)}\cdot \frac{1 - \beta_1^t}{\sqrt{1 - \beta_2^t}}.
\label{eq:adamw_H}
\end{align}
In \text{\method-AdamW}, we implement the preconditioned variance-reduced update as in~\eqref{eq:ideal_update}, and utilize the same definitions for $\Hb_t$, $\epsilon$, and weight decay as those specified in \text{AdamW}.
For $\mathbf{v}_t$, different from the EMA of squared gradients $\mb_t^2$ in \text{AdamW}, we redefine it to fit our variance-reduced stochastic gradient. Specifically, we denote the summation of the stochastic gradient and the scaled gradient correction term by $\mathbf{c}_t$ and define $\mathbf{v}_t$ as the EMA of $\mathbf{c}_t^2$ as follows: %
\begin{align}
    \mathbf{c}_t &:= \nabla f(\xb_t, \bxi_t)\notag
            \\&+
        {\color{red}\gamma_t} \frac{\beta_1}{1 - \beta_1}
        \big(
        \nabla f(\xb_t, \bxi_t)
            -
        \nabla f(\xb_{t-1}, \bxi_t)
        \big),\label{eq:adorm_c}
        \\
    \mb_t &= \beta_1 \mb_{t-1} + (1 - \beta_1)\mathbf{c}_t,\label{eq:adorm_m}
        \\
    \mathbf{v}_t &= \beta_2 \mathbf{v}_{t-1} + (1 - \beta_2) \mathbf{c}_t^2.\label{eq:adorm_v}
\end{align}
Here, $\gamma_t$ is a scaling parameter that controls the strength of gradient correction. When $\gamma_t = 0$, the algorithm reduces to \text{AdamW}. Conversely, when $\gamma_t = 1$, \eqref{eq:adorm_m} aligns with the STORM momentum.
Combining~\eqref{eq:adorm_c},~\eqref{eq:adorm_m},~\eqref{eq:adorm_v} together with~\eqref{eq:adamw_H} and the mirror descent update~\eqref{eq:mirror_descent}, we derive the \text{\method-AdamW} algorithm in Algorithm \ref{adorm_alg}.
In practice, $\gamma_t$ is often set between $0$ and $1$. Moreover, we employ gradient clipping-by-norm to $\mathbf{c}_t$ at Line 5, following the standard gradient clipping technique performed in neural network training. %
We provide a convergence analysis of Algorithm~\ref{adorm_alg} in Theorem~\ref{thm:weight-decay} in Appendix~\ref{app:theorem}.

\begin{remark}
Compared with SuperAdam~\citep{huang2021super}, one key difference is that our algorithm defines the second-order momentum $\mathbf{v}_t$ as the exponential moving average of the square norm of $\mathbf{c}_t$ rather than the square norm of the stochastic gradient.
This new definition of second-order momentum is crucial for accommodating the right scale of updates on a coordinate-wise basis.
Moreover, as we mentioned in Algorithm~\ref{varga_meta}, we introduce a scaling parameter $\gamma_t$ and implement gradient clipping on $\mathbf{c}_t$. 
In Section \ref{experiments}, we will demonstrate empirically that the changes contribute to effective performance in large language model training.
Finally, our algorithm utilizes bias correction and weight decay while SuperAdam does not.
\end{remark}

\begin{remark}\label{exact-vs-approx}
Careful readers might have noticed that in each iteration of our algorithm, we need to calculate the stochastic gradient twice for different data batches $\bxi_{t-1}$ and $\bxi_t$ with the same parameters. In order to overcome this problem, we propose to use $\big(\nabla f(\xb_t, \bxi_t) -\nabla f(\xb_{t-1}, \bxi_{t-1})\big)$ to approximate $\big(\nabla f(\xb_t, \bxi_t) -\nabla f(\xb_{t-1}, \bxi_{t})\big)$ in \eqref{eq:adorm_c} and $\mathbf{c}_t$ will be approximated by:
\begin{small}
\begin{align*}
\mathbf{c}_t &\approx \nabla f(\xb_t, \bxi_t)
+\gamma_t \frac{\beta_1}{1 - \beta_1}\big(\nabla f(\xb_t, \bxi_t) -\nabla f(\xb_{t-1}, \bxi_{t-1})\big).
\end{align*}
\end{small}
To avoid confusion, we refer to the approximate version as \text{\methodap}. While \text{\method} and \text{\methodap} differ in their updates and may theoretically exhibit distinct convergence guarantees, our experiments show that \text{\method} provides only marginal improvements over \text{\methodap} in practice. Thus, we recommend using \text{\methodap} for practical applications. %
\end{remark}

\noindent\textbf{Connection between MARS-AdamW and \text{Adan}.}
\text{Adan}~\citep{xie2024adan} is another adaptive gradient method improved upon \text{Adam} with reformulated Nesterov’s accelerated SGD (See Lemma 1 in \citet{xie2024adan} for more details).
The \text{Adan} algorithm takes the following momentum updates:
\begin{align*}
    \yb_t &= \beta_1 \yb_{t-1} + (1 - \beta_1) \nabla f(\xb_t, \bxi_t), 
        \\
    \zb_t &= \beta_2 \zb_{t-1} + (1 - \beta_2) \big( \nabla f(\xb_t, \bxi_t) - \nabla f(\xb_{t-1}, \bxi_{t-1})\big),
        \\
    \mb_t & := \yb_t + \beta_2 \zb_t.
\end{align*}
When $\beta_2 = \beta_1$, this reduces to 
\begin{align*}
    \mb_t &= \beta_1 \mb_{t-1} + (1 - \beta_1) 
    \big[ 
        \nabla f(\xb_t, \bxi_t) \\
        &\qquad+ \beta_1 \big( 
        \nabla f(\xb_t, \bxi_t) - \nabla f(\xb_{t-1}, \bxi_{t-1})\big)
    \big],
\end{align*}
which is a special case of \text{\methodap}'s momentum with $\gamma_t = 1 - \beta_1$.
It is worth noting that although motivated by the Nesterov's momentum, \text{Adan}'s momentum updates cannot recover Nesterov's momentum unless $\beta_1=\beta_2$. %

\begin{algorithm}[ht]
\caption{\text{\method-AdamW}}
\begin{algorithmic}[1]
\label{adorm_alg}
    \STATE \textbf{input:} $\xb_0, \lambda, \beta_1, \beta_2, \{\gamma_t\}, \{\eta_t\}$
    \STATE Set $\mb_0\leftarrow \mathbf{0}$, $\mathbf{v}_0\leftarrow \mathbf{0}$ and $\xb_1\leftarrow\xb_0$
    \FOR {$t=1,$ \textbf{to} $ n$}
        \STATE Sample $\bxi_t$ and let $\mathbf{c}_t = \nabla f(\xb_t, \bxi_t)+\gamma_t \frac{\beta_{1}}{1-\beta_{1}} \big(\nabla f(\xb_t, \bxi_t)-\nabla f(\xb_{t-1}, \bxi_t)\big)$
        \STATE 
        if $\|\mathbf{c}_t\|_2 > 1$, then $\Tilde{\mathbf{c}}_t=\frac{\mathbf{c}_t}{||\mathbf{c}_t||_2}$
        else $\tilde{\mathbf{c}}_t = \mathbf{c}_t$
        \label{line:norm_c}
        \STATE $\mb_t = \beta_1 \mb_{t-1} + (1-\beta_{1})\Tilde{\mathbf{c}}_t$
        \STATE $\mathbf{v}_t = \beta_2 \mathbf{v}_{t-1}+(1-\beta_2) \Tilde{\mathbf{c}}_t^2$
        \STATE $\hat{\mb}_t=\frac{\mb_t}{1-\beta_1^t}$, $\hat{\mathbf{v}}_t=\frac{\mathbf{v}_t}{1-\beta_2^t}$
        \STATE $\xb_{t+1} = \xb_t - \eta_t\Big(\frac{\hat{\mb}_t}{\sqrt{\hat{\mathbf{v}}_t}+\epsilon} +  \lambda \xb_t\Big)$
    \ENDFOR
\end{algorithmic}
\end{algorithm}

\subsubsection{\text{\method-Lion}}
Using symbolic program search, \citet{lion} introduced a simpler algorithm \text{Lion}~compared to \text{AdamW}, which employs a sign operation to maintain uniform magnitude across all parameters. The updates for \text{Lion} are illustrated as follows:
\begin{align}
    \mb_t &= \beta_1 \ub_{t} + ( 1 - \beta_1) \nabla f(\xb_t, \bxi_t),\label{eq:lion_momentum}
    \\
    \ub_{t+1} &=  \beta_2 \ub_{t} + (1 - \beta_2)\nabla f(\xb_t, \bxi_t),\label{eq:lion_u}
    \\
    \xb_{t+1}&= \xb_t-\eta_t\Big(\text{sign}(\mb_t)+\lambda \xb_t\Big).\notag
\end{align}
Instead of employing an EMA of gradient norms as in \eqref{eq:adam_v} and \eqref{eq:adamw_H} of AdamW, the sign preconditioning mechanism in \text{Lion} utilizes
\begin{align}
\Hb_t := \sqrt{\text{diag}(\mb_t^2)}.\label{eq:lion_H}
\end{align}
Following the same definition of $\Hb_t$ as in~\eqref{eq:lion_H}, we present \text{\method}-Lion in Algorithm \ref{varga-lion}.

\begin{algorithm}[ht]
\caption{\text{\method-Lion}}
\begin{algorithmic}[1]
\label{varga-lion}
    \STATE \textbf{input:} $\xb_0, \lambda, \beta_1, \{\gamma_t\}, \{\eta_t\}$
    \STATE Set $\mb_0\leftarrow \mathbf{0}$ and $\xb_1\leftarrow\xb_0$
    \FOR {$t=1,$ \textbf{to} $ n$}
        \STATE Sample $\bxi_t$ and let $\mathbf{c}_t = \nabla f(\xb_t, \bxi_t)+\gamma_t \frac{\beta_{1}}{1-\beta_{1}} \big(\nabla f(\xb_t, \bxi_t)-\nabla f(\xb_{t-1}, \bxi_t)\big)$
        \STATE 
        if $\|\mathbf{c}_t\|_2 > 1$, then $\Tilde{\mathbf{c}}_t=\frac{\mathbf{c}_t}{\|\mathbf{c}_t\|_2}$\label{line:lion_clip_c}
        else $\tilde{\mathbf{c}_t} = \mathbf{c}_t$
        \STATE $\mb_t = \beta_1 \mb_{t-1} + (1-\beta_{1})\Tilde{\mathbf{c}}_t$
        \STATE $\xb_{t+1} = \xb_t - \eta_t\Big(\text{sign}(\mb_t) +  \lambda \xb_t\Big)$
    \ENDFOR
\end{algorithmic}
\end{algorithm}

\noindent\textbf{Connection between MARS-Lion and \text{Lion}.}
\text{Lion} turns out to be a special case of \text{\method-Lion}. The momentum updates in Lion can be seen as an approximate implementation of our updates. To facilitate this claim, we present a lemma that follows directly from straightforward arithmetic calculations.

\begin{lemma}\label{lem:momentum_lion}
For any sequence $\{\gb_t \in \RR^d\}_{t=0,1,\ldots}$, consider the following updates of $\mb_t$ for any constant factors $a_1, a_2, b_1$, and $b_2$:
\begin{align}
\mb_t &= b_1 \ub_t + b_2 \gb_t.\label{eq:m_update_lion}
\\
\ub_{t+1} &= a_1 \ub_{t} + a_2 \gb_{t},\label{eq:u_update_lion}
\end{align}
The updates are equivalent to
\begin{align*}
    \mb_t &= a_1 \mb_{t-1}
        +
    (b_1 a_2 - a_1 b_2 + b_2) \gb_t
        \\&\qquad+
    (a_1 b_2 - b_1 a_2) (\gb_t - \gb_{t-1}).
\end{align*}
\end{lemma}
By setting $\gb_t = \nabla f(\xb_t, \bxi_t)$, $a_1 = \beta_2$, $a_2 = 1 - \beta_2$, $b_1 = \beta_1$, and $b_2 = 1 - \beta_1$ in Lemma~\ref{lem:momentum_lion}, we can show that \text{Lion} momentum updates in~\eqref{eq:lion_momentum} and~\eqref{eq:lion_u} are equivalent to the following single momentum update:
\begin{align}
    \mb_t &=\beta_2 \mb_{t-1} + (1 - \beta_2) \nabla f(\xb_t, \bxi_t)\notag
    \\
    &\qquad+(\beta_2 - \beta_1)\big(\nabla f(\xb_t, \bxi_t) - \nabla f(\xb_{t-1}, \bxi_{t-1})\big).\label{eq:lion_simplified}
\end{align}
On the other hand, by redefining $\beta_1 = \beta_2$, $\beta_2 = \beta_1$, and setting $\gamma_t = \tfrac{\beta_2 - \beta_1}{\beta_2}$ in the core updates of \text{\method}~\eqref{eq:adorm_c} and~\eqref{eq:adorm_m}, we obtain the update for $\mb_t$:
\begin{align}
    \mb_t &=\beta_2 \mb_{t-1} + (1 - \beta_2) \nabla f(\xb_t, \bxi_t) \notag
    \\
    &\qquad+ (\beta_2 - \beta_1)\big(\nabla f(\xb_t, \bxi_t) - \nabla f(\xb_{t-1}, \bxi_{t})\big).\label{eq:varga_combined}
\end{align}
The only difference between \eqref{eq:lion_simplified} and \eqref{eq:varga_combined} lies in the stochasticity used, specifically, $\bxi_{t}$ versus $\bxi_{t-1}$ when calculating $\nabla f(\bx_{t-1},\cdot)$.
Therefore, ignoring the gradient clipping at Line~\ref{line:lion_clip_c}, we can see \text{Lion} as a special case of \text{\method-Lion} using approximate gradient calculation on $\nabla f(\xb_{t-1}, \bxi_t)$.
In practice, we observe little difference between using $f(\xb_{t-1}, \bxi_{t-1})$ derived from the  STORM momentum and its approximation $f(\xb_{t-1}, \bxi_t)$.

\subsubsection{\text{\method-Shampoo}}
\text{Shampoo} \citep{gupta2018shampoo} introduces a preconditioning approach that operates on the eigenspace of matrices.
Given the gradient matrix $\mb_t := \nabla f_t(\xb_t, \bxi_t) \in \RR^{m\times n}$,
the update rules of Shampoo are displayed as follows:
\begin{align}
    \Lb_t &= \Lb_{t-1} + \mb_t \mb_t^\top \nonumber,
        \\
    \Rb_t &= \Rb_{t-1} + \mb_t^\top \mb_t \nonumber,
        \\
    \xb_{t+1}
        &= 
    \xb_t
        - 
    \eta_t \Lb_t^{-1/4} \mb_t \Rb_t^{-1/4},\label{eq:shampoo}
\end{align}
where $\xb_t \in \RR^{m \times n}$ (slightly abusing notation) represents the corresponding weight matrix.
It has been shown that the two-sided preconditioning in~\eqref{eq:shampoo} is equivalent to preconditioning on the flattened vector $\mb_t := \text{vec}(\mb_t)$ with a Kronecker product~\citep{gupta2018shampoo, morwani2024new}
\begin{align*}
\Hb_t := \Big(\sum_{\tau=1}^t \Gb_\tau \Gb_\tau^\top \Big)^{1/4} \otimes\Big(\sum_{\tau=1}^t \Gb_\tau^\top \Gb_\tau \Big)^{1/4}.
\end{align*}
In practice, an exponential moving average (EMA) is often used in place of the direct summation. The update rule in~\eqref{eq:shampoo} can be simplified to
$
\xb_{t+1} = 
\xb_t - \eta_t \big(\mb_t \mb_t^\top\big)^{-1/4} \mb_t \big(\mb_t^\top \mb_t\big)^{-1/4}
$.
This is equivalent to performing preconditioning on the eigenspace of $\mb_t$:
\begin{align}
    \mathbf{U}_t, \mathbf{\Sigma}_t, \mathbf{V}_t &= \text{SVD}(\mb_t),\nonumber
    \\
    \xb_{t+1}&=\xb_t-\eta_t\mathbf{U}_t\mathbf{V}_t^\top.\label{eq:SVD}
\end{align}
Therefore, we borrow the eigenspace preconditioning from Shampoo, and design our algorithm to precondition on any matrix-shaped update as in~\eqref{eq:SVD}.
In particular, we present our algorithm in Algorithm \ref{VARGA_Shampoo}.
\begin{algorithm}[htbp]
\caption{\text{\method-Shampoo}}
\begin{algorithmic}[1]
\label{VARGA_Shampoo}
    \STATE \textbf{input:} $\xb_0, \lambda, \beta_1, \{\gamma_t\}, \{\eta_t\}$
    \STATE Set $\mb_0\leftarrow \mathbf{0}$ and $\xb_1\leftarrow\xb_0$
    \FOR {$t=1,$ \textbf{to} $ n$}
        \STATE sample $\bxi_t$ and let $\mathbf{c}_t = \nabla f(\xb_t, \bxi_t)+\gamma_t(\frac{\beta_1}{1-\beta_1})\big(\nabla f(\xb_t, \bxi_t)-\nabla f(\xb_{t-1}, \bxi_t)\big)$
        \STATE $\mb_t = \beta_1 \mb_{t-1} + (1-\beta_1)\mathbf{c}_t$
        \STATE $\mathbf{U}_t, \mathbf{\Sigma}_t, \mathbf{V}_t = \text{SVD}(\mb_t)$
        \STATE $\xb_{t+1} = \xb_t - \eta_t(\mathbf{U}_t\mathbf{V}_t^\top +  \lambda \xb_t)$
    \ENDFOR
\end{algorithmic}
\end{algorithm}

To reduce the time complexity of SVD decomposition,~\citet{bernstein2024old} summarized four different approaches for computing or approximating~\eqref{eq:SVD} including SVD, sketching~\citep{martinsson2020randomized}, Newton iteration~\citep{lakic1998computation, higham2008functions,anil2020scalable}, and Newton-Schulz iteration~\citep{schulz1933iterative,higham2008functions}.
Our algorithm design accommodates any of these SVD solvers to best fit specific computational needs.

\noindent\textbf{Connection between MARS-Shampoo and \text{Muon}.} 
\text{Muon}~\citep{muon} is a recently proposed algorithm that utilizes the Newton-Schulz iteration \citep{higham2008functions, schulz1933iterative} to solve the SVD problem. It has demonstrated superior performance in terms of convergence speed when compared with \text{AdamW} and Shampoo in training large language models. The update rules of \text{Muon} are demonstrated as follows:
\begin{align}
    \ub_t &= \mu \ub_{t-1} + \nabla f(\xb_{t}, \bxi_{t}),\label{eq:muon_u}
        \\
    \mb_t &= \mu \ub_t + \nabla f(\xb_{t},\bxi_{t}),\label{eq:muon_m}
        \\
    \mathbf{O}_t &= \text{NewtonSchulz}\left(\mb_t\right),\nonumber
        \\
    \xb_{t+1} &= \xb_t-\eta_t(\mathbf{O}_t+\lambda\xb_t).\nonumber
\end{align}
Applying Lemma~\ref{lem:momentum} to~\eqref{eq:muon_u} and~\eqref{eq:muon_m}, with $\mb_t = \nabla f(\xb_{t}, \bxi_{t})$, $a_1 = \mu$, $a_2 = 1$, $b_1 = \mu$, $b_2 = 1$, we obtain an equivalent single update of momentum:
\begin{align}
    \mb_t &= \mu \mb_{t-1}
        +
    \nabla f(\xb_{t}, \bxi_{t}) \nonumber\\
    &\qquad+ \mu \big(\nabla f(\xb_{t}, \bxi_{t}) - \nabla f(\xb_{t-1}, \bxi_{t-1})\big).\label{eq:muon_momentum}
\end{align}
On the other hand, taking $\beta_1 = \mu$, $\gamma_t = 1 - \mu = 1 - \beta_1$ in \text{\method},~\eqref{eq:adorm_c} and~\eqref{eq:adorm_m} reduces to
\begin{align*}
    \mb_t 
        &=
    \mu \mb_{t-1}
        +
    (1 - \mu) \nabla f(\xb_t, \bxi_t)\\
    &\qquad+
     \mu (1 - \mu)\big(\nabla f(\xb_t, \bxi_t) - \nabla f(\xb_{t-1}, \bxi_t)\big).
\end{align*}
By dividing both sides of the above equation by $1 - \mu$, we obtain
\begin{align}
    \frac{\mb_t}{1 - \mu}
        &=
    \mu \cdot \frac{\mb_{t-1}}{1 - \mu}
        +
    \nabla f(\xb_t, \bxi_t)
        \nonumber\\&\qquad+
    \mu \big(
        \nabla f(\xb_t, \bxi_t)
            -
        \nabla f(\xb_{t-1}, \bxi_t)
    \big).\label{eq:muon_ours}
\end{align}
In can be seen that~\eqref{eq:muon_ours} is a rescaled version of~\eqref{eq:muon_momentum}, except that the stochastic gradients $\nabla f(\xb_t, \bxi_t)$ and $\nabla f(\xb_{t-1}, \bxi_t)$ are taken both at $\bxi_t$.%

\begin{figure*}[h!]
    \centering
    \subfigure[Training Loss]{
		\label{train_large}
		\includegraphics[width=0.31\linewidth]{./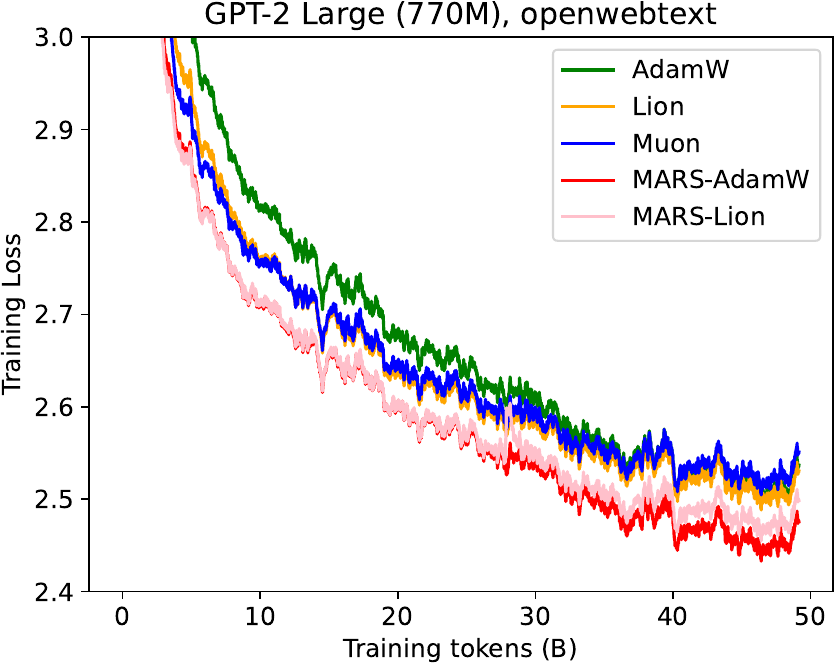}}
    \subfigure[Validation Loss]{
		\label{val_large}
		\includegraphics[width=0.31\linewidth]{./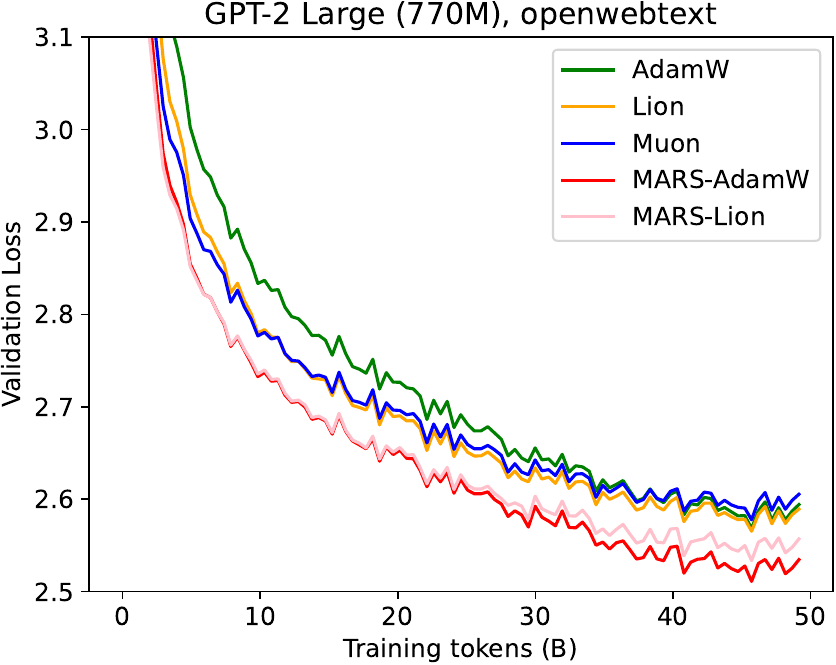}}
    \subfigure[Wall-clock time]{
		\label{time_large}
		\includegraphics[width=0.31\linewidth]{./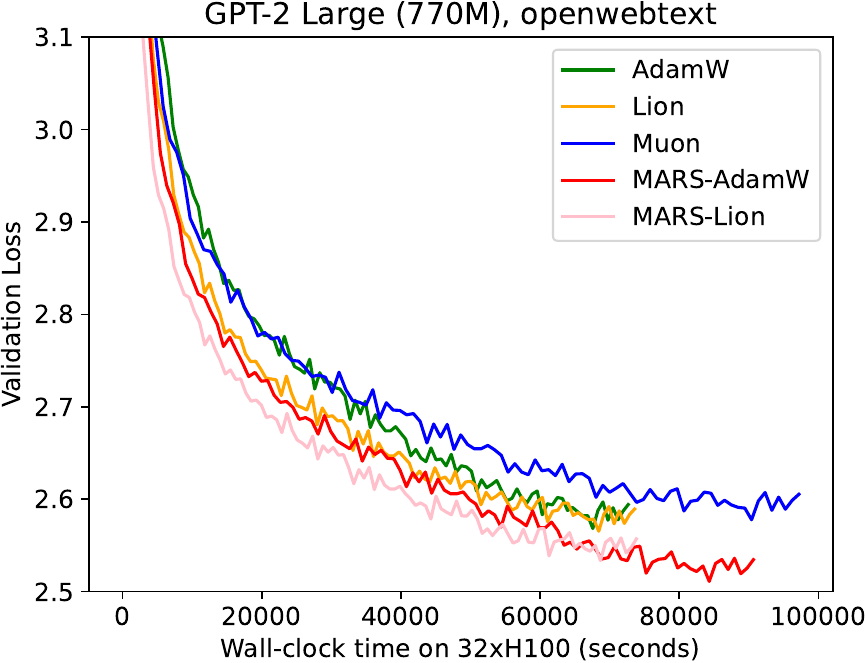}}
    \caption{The training and validation loss curves, plotted against both training tokens and wall-clock time on GPT-2 large model (770M).}
    \label{fig:curve_large}
\end{figure*}

\begin{table*}[htb!]
\caption{The evaluation results of large models pre-trained using the OpenWebText dataset (5-shot with lm-evaluation-harness). The best scores in each column are \textbf{bolded}. Abbreviations: HellaSwag = HellaSwag, WG = WinoGrande.}
\label{tab:large-5}
\centering
\small
\sc
\resizebox{\linewidth}{!}{
\begin{tabular}{ccccccccccc}
\hline
\textbf{Method} & \textbf{ARC-E} & \textbf{ARC-C} & \textbf{BoolQ} & \textbf{HellaSwag} & \textbf{OBQA}  & \textbf{PIQA}  & \textbf{WG}    & \textbf{MMLU}  & \textbf{SciQ}  & \textbf{Avg.}  \\ \hline
AdamW           & 52.95          & 28.67          & 56.33          & 42.55              & 29.40          & 67.68          & 52.01          & 25.27          & 82.90          & 48.64          \\
Lion            & 52.53          & 26.88          & 52.42          & 43.41              & 29.80          & 67.63          & 54.46          & 24.70          & \textbf{85.70} & 48.61          \\
Muon            & 49.58          & 26.88          & 55.78          & 40.42              & 30.20          & 66.65          & 52.64          & 24.58          & 79.10          & 47.31          \\ \hline
MARS-AdamW      & 54.04          & 26.28          & \textbf{62.78} & \textbf{45.66}     & \textbf{31.60} & 68.12          & 52.49          & 25.93          & 84.50          & \textbf{50.15} \\
MARS-Lion       & \textbf{54.25} & \textbf{28.92} & 56.36          & 44.00              & 29.20          & \textbf{69.10} & \textbf{54.93} & \textbf{25.98} & 85.50          & 49.80          \\ \hline
\end{tabular}
}
\end{table*}

\section{Experiments}\label{experiments}

In this section, we evaluate the performances of two instantiations of our algorithm, \text{\method}-AdamW and MARS-Lion\footnote[2]{{For the sake of training efficiency, we use MARS-approx for the experiments as the default configuration, except for Appendices~\ref{sec:approx} and~\ref{sec:cv}.} Discussion of the difference in performance between MARS-exact and MARS-approx is in Appendix \ref{sec:approx}.}, in comparison with \text{AdamW}~\citep{Adamw}, the predominant algorithm for training large language models, Lion~\citep{lion} and \text{Muon}~\citep{muon} on GPT-2 model series. More experiment results and abaltion study, including the computer vision experiments, the effect of different learning rate schedulers, as well as sensitivity to $\gamma$ and batch size, are postponed to Section \ref{sec:add_exp}. 

\subsection{Experimental Setup}
All our experiments are done based on the nanoGPT~\citep{Karpathy2022} implementation of the GPT-2 \citep{radford2019language} architecture, and on the OpenWebText~\citep{Gokaslan2019OpenWeb} dataset.
The training and validation sets contain approximately $9$ billion and $4.4$ million tokens, respectively, all preprocessed using the GPT-2 tokenizer.
We conduct experiments on three scales of GPT-2 models: small (125M parameters), medium (355M parameters), and large (770M parameters).
Per the nanoGPT configurations, we disabled biases, applied GeLU activations, and set the Dropout rate \citep{srivastava2014dropout} to $0.0$.
We utilized 16 NVIDIA A100 GPUs for training the small models. For the medium and large models, training was conducted on 32 NVIDIA A100 GPUs and 32 NVIDIA H100 GPUs, respectively. Other hyper-parameters of training are listed in Appendix~\ref{appendix_hyper}.

\subsection{Results}\label{sec:exp-result}
In Figure \ref{fig:curve_large} and Figures \ref{fig:curve_small}--\ref{fig:curve_medium} (in the Appendix), we demonstrate the training and validation losses as a function of training tokens and wall-clock time for various model sizes\footnote[3]{The training loss curves are smoothed using Exponential Moving Average.}.
Across the small, medium, and large GPT-2 models, \text{\method} consistently surpasses both the \text{AdamW} and \text{Muon} baselines in training and validation losses. The performance gap becomes more pronounced with increasing model size. Notably, \text{\method} exhibits both rapid initial decay and sustained superiority throughout the training process.
Further, we explore the performance of additional learning rate choices in Appendix \ref{sec:add_exp}. Notably, the best validation losses of MARS-AdamW and MARS-Lion achieved in our GPT-2 large experiments are 2.511 and 2.534. For comparison, the best validation losses are 2.568, 2.565 and 2.606 for AdamW, Lion and Muon. These results demonstrate that our reported performance is highly competitive with state-of-the-art optimizers.

In Figure \ref{time_large}, as well as Figures \ref{time_small} and \ref{time_medium} in the Appendix, we compare the wall-clock time of different algorithms. We observe that MARS-AdamW and MARS-Lion have a slightly higher per-iteration cost compared to AdamW but is much faster than Muon. Additionally, they consistently demonstrate lower validation losses than both AdamW, Muon and Lion within equivalent training durations.

We also evaluate 0-shot and 5-shot performances of our optimizer on common benchmarks including ARC~\citep{ARC}, BoolQ~\citep{BoolQ}, HellaSwag~\citep{hellaswag}, OBQA~\citep{OpenBookQA}, PIQA~\citep{PIQA}, WinoGrande~\citep{WinoGrande} and MMLU~\citep{MMLU}, with the \texttt{lm-evaluation-harness} codebase~\citep{eval-harness}.
We only list the 5-shot performances for large models in Table \ref{tab:large-5}, and leave other results in the Appendix~\ref{sec:supp_main_exp}. The models pre-trained with MARS-AdamW and MARS-Lion outperform those pre-trained with AdamW, Muon and Lion optimizers, validating an enhanced downstream performance within the same number of pre-training steps.

\section{Conclusion}
In this work, we introduce \text{\method}, a unified framework for adaptive gradient methods that integrates variance reduction techniques to improve the training of large models. Our approach combines the adaptive learning rate introduced by preconditioning with the faster convergence enabled by variance reduction. Within our framework, we have developed three optimization algorithms based on the ideas of AdamW, Lion, and Shampoo. Through extensive empirical experiments on GPT-2 pre-training tasks, we demonstrate that \text{\method} consistently outperforms baseline algorithms in terms of both token efficiency and wall-clock time. Our results establish a generic framework for combining adaptive gradient methods with variance reduction techniques, contributing to the advancement of optimizers in large model training.

\newpage
\section*{Impact Statement}
This paper presents work whose goal is to advance the field of optimization theory in deep learning. We believe that our work contributes meaningfully to the field, specifically on advancing the efficiency in the pre-training stage of deep learning models, especially Large Language Models. 
By involvement of variance reduction in adaptive learning methods, our method can greatly lower the cost for pre-training language models on limited training corpora and more resource-constrained devices as well as in broader settings, opening new avenues for their application in various downstream tasks. Improvement in efficiency typically correlates with reduced energy consumption, potentially decreasing the environmental footprint of LLM pre-training. This advancement underscores the potential of optimization method development in deep learning field in both technological and societal contexts.

\bibliography{icml_reference}
\bibliographystyle{icml2025}

\newpage
\appendix
\hypersetup{linkcolor=black, citecolor=mydarkblue,urlcolor=black}
\onecolumn
\renewcommand{\appendixpagename}{\centering \LARGE Appendix}
\appendixpage

\startcontents[section]
\printcontents[section]{l}{1}{\setcounter{tocdepth}{2}}
\hypersetup{linkcolor=mydarkblue, citecolor=mydarkblue,urlcolor=mydarkblue}
\vspace{20ex}

\newpage

\section{Related Work}

In this section, we provide a review of additional related works, including some previously mentioned, to help readers gain a deeper understanding of the history and development of adaptive gradient methods and variance reduction techniques.

\noindent\textbf{Adaptive Gradient Methods.} \text{RProp}~\citep{riedmiller1993direct} is probably one of the earliest adaptive gradient methods by dynamically adjusting the learning rate. \text{AdaGrad}~\citep{adagrad,mcmahan2010adaptive} adjusts the learning rate based on the geometry of the training data observed during earlier iterations. To tackle with the issue of diminishing gradient in AdaGrad~\citep{carlson2015stochastic_boltzmann,carlson2015stochastic_graphical}, \citet{RMSProp} introduced \text{RMSProp} by incorporating the idea of exponential moving average. A significant advancement came with \text{Adam}~\citep{adam}, which integrated RMSProp with Nesterov's momentum \citep{Nesterov1983AMF,nesterov2013introductory} achieving superior performance and becoming a prevalent optimizer in deep neural network training. Later, \citet{Adamw} proposed to decouple weight decay from gradient calculations in \text{Adam} and introduced \text{AdamW}, an optimization algorithm having become the predominant optimization algorithm in contemporary deep learning applications. To fix the convergence issue of \text{Adam}, \citet{reddi2019convergence} introduced the \text{AMSGrad} optimizer, which maintains a running maximum of past second-order momentum terms to achieve non-increasing step sizes. Subsequently, \citet{chen2018closing} unified AMSGrad and \text{SGD} within the \text{Padam} framework by introducing a partial adaptive parameter to control the degree of adaptiveness. Notably, \text{AdamW} and its variations have been widely used in the training of popular large language models, including OPT~\citep{OPT}, Llama 3~\citep{llama3}, and DeepSeek-V2~\citep{deepseekv2}.

\noindent\textbf{Variance Reduction Methods.}  \text{SAG}\citep{roux2012stochastic} and \text{SDCA}\citep{shalev2013stochastic} were among the first attempts to apply variance reduction techniques to accelerate the convergence of SGD. Subsequently, simpler algorithms like \text{SVRG}\citep{johnson2013accelerating} and \text{SAGA}\citep{defazio2014saga} were introduced, achieving the same improved convergence rates. \text{SARAH}~\citep{SARAH1} further simplified these approaches by employing biased recursive gradient estimation, which reduces storage requirements while achieving the complexity bounds for convex optimization problems. And some researchers have also attempted to apply preconditioning into variance reduction in the convex setting~\citep{frangella2024promise,derezinski2023stochastic}. For non-convex optimization, besides SVRG \citep{allen2016improved,reddi2016stochastic} and SARAH \citep{SARAH2}, SPIDER \citep{fang2018spider} integrates Normalized Gradient Descent~\citep{nesterov2013introductory,hazan2015beyond} with recursive estimation of gradients, while SNVRG \citep{zhou2020stochastic} introduces multiple reference points for semi-stochastic gradient calculation for improved variance reduction and convergence rate. SpiderBoost \citep{wang2019spiderboost} refines SPIDER by enabling the use of a significantly larger constant step size while preserving the same near-optimal oracle complexity.
Subsequently, \text{STORM} \citep{STORM} was proposed to further simplifies the SPIDER and SNVRG algorithms through the use of stochastic recursive momentum. This was later improved into a parameter-free variant, namely \text{STORM+}\citep{levy2021storm+}. 

\noindent\textbf{Variance Reduction for Adaptive Gradient Methods.} Few works have explored the application of variance reduction techniques to adaptive gradient methods. To the best of our knowledge, the only exceptions are \text{Adam}$^+$, SuperAdam and AdaSPIDER.
\text{Adam}$^+$ \citep{liu2020adam} attempts to reduce the variance of first-order moment into Adam by estimating the gradient only at extrapolated points. SuperAdam \citep{huang2021super} and VRAdam~\citep{li2024smoothness} integrates variance reduction with AdamW to achieve improved convergence rates. And AdaSPIDER~\citep{kavis2022adaptive} introduced adaptive step size in SPIDER algorithm. However, these variance-reduced adaptive gradient methods have primarily been validated on basic computer vision tasks, such as MNIST~\citep{scholkopf2002learning} and CIFAR-10~\citep{krizhevsky2009learning}, and simple natural language modeling tasks, like SWB-300~\citep{saon2017english}, using straightforward architectures such as LeNet~\citep{lenet}, ResNet-32~\citep{he2016deep}, 2-layer LSTMs~\citep{graves2012long}, and 2-layer Transformers~\citep{vaswani2017attention}. As a result, a significant gap remains in the successful application of variance reduction techniques to adaptive gradient methods, particularly in the rapidly evolving domain of large language models.

\section{Theoretical Analysis}
\subsection{Connection to Nesterov's Acceleration}
Many optimization algorithms exhibit similarities with Nesterov's acceleration and can be considered adaptations of Nesterov’s acceleration with varying parameterization schedules, as discussed in works such as \citet{defazio2024road} and \citet{xie2024adan}.
In this section, we compare and contrast Nesterov's momentum and STORM momentum used in our paper. Specifically, Nesterov's accelerated gradient descent can be equivalently written as \citep{xie2024adan}:
$$
\mb_t = \beta_1 \mb_{t-1} + \big[\nabla f(\xb_t, \bxi_t) + \beta_1(\nabla f(\xb_t, \bxi_t) - \nabla f(\xb_{t-1}, \bxi_{t-1}) \big],
$$
while the STORM momentum is
$$
    \mb_t = 
    \beta_1 \mb_{t-1} + (1 - \beta_1) \nabla f(\xb_t, \bxi_t)
     +\beta_1 \big(
     \nabla f(\xb_t, \bxi_t) - \nabla f(\xb_{t-1}, \bxi_t) 
    \big)
$$
The most significant difference lies in the noise handling schemes. In Nesterov's acceleration, \(\nabla f(\xb_t, \bxi_t)\) is subtracted by \(\nabla f(\xb_{t-1}, \bxi_{t-1})\) to determine a direction of improvement. In contrast, STORM variance reduction subtracts \(\nabla f(\xb_{t-1}, \bxi_t)\) from \(\nabla f(\xb_t, \bxi_t)\) to cancel out the noise introduced by \(\bxi_t\).
Furthermore, in our theoretical analysis (Section~\ref{app:theorem}), we prove that variance-reduced variants of AdamW achieve an improved convergence rate of \(\mathcal{O}(T^{-1/3})\). Empirically, we also show that a variance reduced noise schedule performs better than its approximate counterpart.

\subsection{Convergence of \text{\method}}\label{app:theorem}
Although \citet{adam} did convergence analysis for Adam, \citet{amsgrad} pointed out that they made some mistakes in the proof, and they also proved that in some special cases, Adam does not converge. However, there are some attempts to prove the convergence of Adam and AdamW in special circumstances~\citep{zhang2022adam, li2024convergence, zhou2024towards}. Our algorithm, \method, is also based on AdamW. In addition, it involves the property of variance reduction. We prove that \text{\method} can converge with a better convergence rate with careful selection of hyperparameters. 

To help analyze the convergence of the algorithm, we make the following assumptions:
\begin{assumption}[Bounded Variance]\label{assum:variance}
We assume that the variance of gradient estimator is bounded by $\sigma^2$. i.e., for any noise $\bxi$, parameter $\bx$, and $\nabla F(\bx) = \EE [\nabla f(\bx, \bxi)]$, there exists a positive $\sigma$ such that:
\begin{align}
    \EE\big[\|\nabla f(\xb, \bxi)-\nabla F(\xb)\|_2^2\big]\le \sigma^2.
\end{align}
\end{assumption}

\begin{assumption}[$L$-Smoothness]\label{assum:L-smooth}
We assume that for arbitrary $\bxi$, $f(\bx, \bxi)$ is $L$-smooth:
\begin{align}
    \|\nabla f(\xb, \bxi)-\nabla f(\yb, \bxi)\|_2\le L\|\xb-\yb\|_2,~\forall \xb,\yb.
\end{align}
\end{assumption}
%

\begin{assumption}[$H$ Lower Bounded]\label{assum:H-bound}
We assume that there is a constant $\rho > 0$ such that for all $\Hb_t, t>0$, $\Hb_t \succ \rho \bI$.
\end{assumption}
\begin{remark}
    Note that this is implicitly satisfied by the instantiations of MARS since we add a small $\epsilon$ to semi-positive definite $\Hb_t$ for computational stability.
\end{remark}
We proposed Theorem~\ref{thm:no-weight-decay} for our main Algorithm~\ref{varga_meta} and Theorem~\ref{thm:weight-decay} for MARS-AdamW (Algorithm~\ref{adorm_alg}, where an additional weight decay is involved).
We note that for theoretical analysis, it is necessary to consider time-varying parameters $\beta_{1,t}$ and $\beta_{2,t}$. However, in practice, these parameters are typically set as constants.
\begin{theorem}\label{thm:no-weight-decay} In Algorithm~\ref{varga_meta}, under Assumptions~\ref{assum:variance},~\ref{assum:L-smooth} and~\ref{assum:H-bound}, when choosing $\eta_t=(s+t)^{-1/3}, s\ge 8L^3/\rho^3$. Suppose $c\ge 32L^2\rho^{-2}+1$, $ \beta_{1, t+1}= 1 - c\eta_t^2$ and $\beta_{2, t+1} = 1 - \eta_t^6$, then $\forall T\ge s$, it holds that 
\begin{align*}
        \frac{1}{T}\sum_{t=1}^{T}\EE\|\nabla F(\xb_t)-\mb_t\|_2^2&\le \Big(2\rho G+\frac{\rho c^2\sigma^2}{4L^2}\cdot\log(s+T)\Big)\cdot\frac{1}{T^{2/3}}
        -
    \frac{\rho^2
    \sum_{t=1}^T M_{t+1} }{8L^2 T^{1/3}},\\
    \frac{1}{T}\sum_{t=1}^T\frac{1}{\eta_t}\cdot\EE\|\xb_{t+1}-\xb_t\|_2^2
    &\le 
    \Big(\frac{16G}{3\rho}+\frac{2c^2\sigma^2}{3L^2}\cdot\log(s+T)\Big) \cdot 
    \frac{1}{T^{2/3}}
        -
    \frac{
    \sum_{t=1}^T M_{t+1} }{6L^2 T^{1/3}},
\end{align*}
where $G=F(\xb_1)-\min_\xb F(\xb)+\frac{\rho s^{1/3}\sigma^2}{16L^2}$ and $M_{t+1}$ is defined in~\eqref{eq:M_value}.
\end{theorem}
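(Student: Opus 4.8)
\textbf{Proof proposal for Theorem~\ref{thm:no-weight-decay}.}

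The plan is to bound the two quantities of interest by tracking the evolution of the error $\ab_t := \mb_t - \nabla F(\xb_t)$ through a carefully chosen potential. First I would decompose the recursion for $\mb_t$. Since $\mb_t = \beta_{1,t}\mb_{t-1} + (1-\beta_{1,t})\tilde{\mathbf{c}}_t$ and, ignoring clipping for the moment, $\mathbf{c}_t = \nabla f(\xb_t,\bxi_t) + \tfrac{\beta_{1,t}}{1-\beta_{1,t}}\gamma_t(\nabla f(\xb_t,\bxi_t)-\nabla f(\xb_{t-1},\bxi_t))$, I would write
\begin{align*}
\ab_t &= \beta_{1,t}\ab_{t-1} + \beta_{1,t}\big(\nabla F(\xb_{t-1}) - \nabla F(\xb_t)\big) \\
&\quad + (1-\beta_{1,t})\big(\nabla f(\xb_t,\bxi_t) - \nabla F(\xb_t)\big) \\
&\quad + \gamma_t\beta_{1,t}\big(\nabla f(\xb_t,\bxi_t) - \nabla f(\xb_{t-1},\bxi_t) - \nabla F(\xb_t) + \nabla F(\xb_{t-1})\big).
\end{align*}
The key observation is that the last term is a martingale difference whose second moment is controlled by $L^2\|\xb_t-\xb_{t-1}\|_2^2$ via $L$-smoothness (Assumption~\ref{assum:L-smooth}), while the third term contributes the variance $\sigma^2$ (Assumption~\ref{assum:variance}), scaled by $(1-\beta_{1,t})^2 = c^2\eta_{t-1}^4$. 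Conditioning on the filtration up to step $t-1$, the cross terms between the $\beta_{1,t}\ab_{t-1}$ part and the fresh noise vanish in expectation, so squaring and taking expectations gives a recursion of the schematic form $\EE\|\ab_t\|_2^2 \le \beta_{1,t}^2(1+\text{something})\EE\|\ab_{t-1}\|_2^2 + (\text{drift from }\|\xb_t-\xb_{t-1}\|) + c^2\eta_{t-1}^4\sigma^2$.

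Next I would relate $\|\xb_{t+1}-\xb_t\|_2$ to $\mb_t$ and hence to $\ab_t$. From the mirror-descent update~\eqref{eq:ideal_update} and Assumption~\ref{assum:H-bound} ($\Hb_t \succ \rho\bI$), we have $\|\xb_{t+1}-\xb_t\|_2 \le (\eta_t/\rho)\|\mb_t\|_2$, and $\|\mb_t\|_2^2 \le 2\|\ab_t\|_2^2 + 2\|\nabla F(\xb_t)\|_2^2$. Combined with the descent lemma applied to $F$ (using $L$-smoothness), a telescoping argument should produce the standard bound $\sum_t \eta_t\|\nabla F(\xb_t)\|_2^2 \lesssim G + \sum_t(\text{error terms})$, where $G = F(\xb_1) - \min F + \tfrac{\rho s^{1/3}\sigma^2}{16L^2}$ absorbs the initial suboptimality and an initialization correction for $\ab$. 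The choices $\eta_t = (s+t)^{-1/3}$, $\beta_{1,t+1} = 1 - c\eta_t^2$ with $c \ge 32L^2\rho^{-2}+1$, and $\beta_{2,t+1} = 1-\eta_t^6$ are calibrated so that the contraction factor $\beta_{1,t}^2$ beats the $(1+\cdot)$ inflation and the accumulated drift; the $\log(s+T)$ factor appears because $\sum_{t=1}^T \eta_t^4 \cdot \eta_t^{-2} = \sum \eta_t^2 \asymp \sum (s+t)^{-2/3}$ — wait, more precisely it comes from $\sum_{t=1}^T c^2\eta_t^2\sigma^2 / \eta_t$ type terms after dividing through, i.e. $\sum \eta_t = \Theta(T^{2/3})$ giving the $T^{-1/3}$ rate and the harmonic-like sum $\sum \eta_t^3/\eta_t^{?}$ producing $\log(s+T)$. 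The negative terms $-\rho^2\sum M_{t+1}/(8L^2T^{1/3})$ retain a slack quantity $M_{t+1}$ (defined in~\eqref{eq:M_value}) that the proof does not discard, presumably the leftover of $\EE\|\xb_{t+1}-\xb_t\|_2^2/\eta_t$ or a residual of $\EE\|\ab_{t+1}\|_2^2$.

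I would then assemble everything: sum the error recursion for $\EE\|\ab_t\|_2^2$ with weights $1/\eta_t$ (or $\eta_t$), telescope, plug in the explicit schedules to evaluate $\sum_t \eta_t^a$, and divide by $T$. Tracking the constants carefully yields the claimed coefficients $2\rho G + \tfrac{\rho c^2\sigma^2}{4L^2}\log(s+T)$ and $\tfrac{16G}{3\rho} + \tfrac{2c^2\sigma^2}{3L^2}\log(s+T)$, with the $M_{t+1}$ correction terms surviving. The main obstacle I anticipate is handling the gradient clipping in Line~\ref{line:gradient clipping} rigorously: the clean martingale decomposition above assumes $\tilde{\mathbf{c}}_t = \mathbf{c}_t$, but clipping-by-norm introduces a bias $\tilde{\mathbf{c}}_t - \mathbf{c}_t$ that is nonlinear and data-dependent. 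One would need either to argue that under the stated smoothness and variance bounds the clipping is inactive along the trajectory (so $\|\mathbf{c}_t\|_2 \le 1$ eventually, perhaps after a burn-in governed by $s$), or to carry the clipping bias as an additional controlled term; reconciling this with the sharp $\mathcal{O}(T^{-1/3})$ rate is the delicate part. A secondary technical point is the treatment of the time-varying $\beta_{2,t}$ and the preconditioner $\Hb_t$: since the theorem's conclusion is stated purely in terms of $\mb_t$ and $\xb_{t+1}-\xb_t$ (not coordinate-wise adaptive quantities), Assumption~\ref{assum:H-bound} should suffice, but one must ensure the $\beta_{2,t+1} = 1-\eta_t^6$ schedule does not leak into the bounds except through $\Hb_t \succ \rho\bI$.
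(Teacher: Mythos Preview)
Your error-recursion for $\ab_t := \mb_t - \nabla F(\xb_t)$ is exactly right and matches the paper's Lemma~\ref{lem:recursion}: one obtains
\[
\EE\|\ab_{t+1}\|_2^2 \le \beta_{1,t+1}^2\EE\|\ab_t\|_2^2 + 2\beta_{1,t+1}^2 L^2\EE\|\xb_{t+1}-\xb_t\|_2^2 + 2(1-\beta_{1,t+1})^2\sigma^2 - M_{t+1}.
\]
Where your plan diverges from the paper is in the treatment of the $\|\xb_{t+1}-\xb_t\|_2^2$ term. You propose to bound it from above via $\|\xb_{t+1}-\xb_t\|_2 \le (\eta_t/\rho)\|\mb_t\|_2$ and then split $\|\mb_t\|_2^2 \le 2\|\ab_t\|_2^2 + 2\|\nabla F(\xb_t)\|_2^2$, hoping to close the loop through a descent bound on $\sum_t \eta_t\|\nabla F(\xb_t)\|_2^2$. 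This detour is the weak spot: for a \emph{preconditioned} update $\xb_{t+1}-\xb_t = -\eta_t\Hb_t^{-1}\mb_t$, the $L$-smoothness descent lemma does not naturally produce $\sum_t \eta_t\|\nabla F(\xb_t)\|_2^2$; with only $\Hb_t \succ \rho\bI$ (and no upper bound on $\Hb_t$), the relationship between $\|\xb_{t+1}-\xb_t\|_2$ and $\|\nabla F(\xb_t)\|_2$ is one-sided in the wrong direction. The paper avoids this entirely by defining the Lyapunov function
\[
\Phi_t = \EE\Big[F(\xb_t) + \tfrac{\rho}{16L^2\eta_{t-1}}\|\ab_t\|_2^2\Big],
\]
and proving a descent lemma (Lemma~\ref{lem:function_value_gap_no_decay}) of the form $F(\xb_{t+1}) \le F(\xb_t) - \tfrac{\rho}{2\eta_t}\|\xb_{t+1}-\xb_t\|_2^2 + \tfrac{\eta_t}{\rho}\|\ab_t\|_2^2$, obtained from $\langle\mb_t,\xb_t-\xb_{t+1}\rangle \ge \tfrac{\rho}{\eta_t}\|\xb_t-\xb_{t+1}\|_2^2$ via the mirror-descent optimality. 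The \emph{negative} $\|\xb_{t+1}-\xb_t\|_2^2/\eta_t$ from this lemma is what cancels the positive $L^2\|\xb_{t+1}-\xb_t\|_2^2/\eta_t$ appearing in the weighted error recursion; no reference to $\|\nabla F(\xb_t)\|_2$ is ever needed, and both inequalities in the theorem fall out of a single telescoping of $\Phi_{t+1}-\Phi_t$.

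Two smaller points. First, $M_{t+1}$ is not a leftover displacement term: it is the nonnegative gain in the error recursion from choosing $\gamma_{t+1}$ optimally as a control-variate coefficient (see~\eqref{eq:M_value} and the minimization over $\gamma_{t+1}$ in the proof of Lemma~\ref{lem:recursion}); it vanishes at $\gamma_{t+1}=1$. Second, the $\log(s+T)$ arises directly from $\sum_{t=1}^T \eta_t^3 = \sum_{t=1}^T (s+t)^{-1} \le \log(s+T)$, which is the accumulated variance term $\sum_t \tfrac{\rho c^2\eta_t^3\sigma^2}{8L^2}$ in the telescoped $\Phi$-difference. Your concern about gradient clipping is well-placed: the paper's proof of Lemma~\ref{lem:recursion} in fact works with $\mathbf{c}_t$ rather than $\tilde{\mathbf{c}}_t$, so the analysis implicitly treats clipping as inactive; this is a genuine gap in the paper's argument as well, not just in your proposal.
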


\begin{theorem}\label{thm:weight-decay} In Algorithm~\ref{adorm_alg}, under Assumptions~\ref{assum:variance},~\ref{assum:L-smooth} and~\ref{assum:H-bound}, when choosing $\eta_t=(s+t)^{-1/3}, s\ge\max(8 L^3 / \rho^3, 64 \lambda^3)$. Suppose $||\xb_t||_2\le D$, $c\ge 32L^2\rho^{-2}+1$, $ \beta_{1, t+1}= 1 - c\eta_t^2$ and $\beta_{2, t+1} = 1 - \eta_t^6$, then $\forall T\ge s$, it holds that
\begin{align*}
        \frac{1}{T}\sum_{t=1}^{T}\EE\|\nabla F(\xb_t)-\mb_t\|_2^2&\le \Big(2\rho (G+\lambda D^2 \log(s+T))+\frac{\rho c^2\sigma^2}{4L^2}\cdot\log(s+T)\Big)\cdot\frac{1}{T^{2/3}} 
        -
    \frac{\rho^2
    \sum_{t=1}^T M_{t+1} }{16L^2 T^{1/3}},\\
    \frac{1}{T}\sum_{t=1}^T\frac{1}{\eta_t}\cdot\EE\|\xb_{t+1}-\xb_t\|_2^2&\le \Big(\frac{16(G+\lambda D^2 \log(s+T))}{\rho }+\frac{2c^2\sigma^2}{L^2}\cdot\log(s+T)\Big) \cdot \frac{1}{T^{2/3}}-
    \frac{\sum_{t=1}^TM_{t+1}}{L^2 T^{1/3}}.
\end{align*}
where $G=F(\xb_1)-\min_\xb F(\xb)+\frac{\lambda}{2}D^2(1 + \epsilon)+\frac{\rho s^{1/3}\sigma^2}{16L^2}$ and $M_{t+1}$ is defined in~\eqref{eq:M_value}.
\end{theorem}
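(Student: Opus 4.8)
I would run a potential--function (Lyapunov) argument of the STORM type, adapted to the preconditioned update and the decoupled weight decay, essentially re-deriving the proof of Theorem~\ref{thm:no-weight-decay} and then tracking the extra terms generated by the $\lambda\xb_t$ piece. For the analysis I take $\gamma_t=1$, so that $\mb_t$ is the exact STORM momentum of the clipped correction $\tilde{\mathbf{c}}_t$; I write $\mathcal{F}_{t-1}$ for the $\sigma$-field of $\bxi_1,\dots,\bxi_{t-1}$ and abbreviate $\Delta_t := \mb_t-\nabla F(\xb_t)$. The target quantities in the theorem are exactly the two ``negative terms'' that naturally appear on the right side of such an argument, namely $\tfrac1{\eta_t}\|\xb_{t+1}-\xb_t\|_2^2$ (from the descent step) and $\eta_t\|\Delta_t\|_2^2$ (from the error recursion).

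\textbf{Step 1: one-step descent.} From the stationarity condition of the subproblem in Algorithm~\ref{adorm_alg} one gets $\xb_{t+1}-\xb_t=-\eta_t\big(\Hb_t^{-1}\mb_t+\lambda\xb_t\big)$ and hence the identity $\langle \mb_t+\lambda\Hb_t\xb_t,\ \xb_{t+1}-\xb_t\rangle=-\tfrac1{\eta_t}\|\xb_{t+1}-\xb_t\|_{\Hb_t}^2$. Plug $\xb_{t+1}-\xb_t$ into the $L$-smoothness inequality for $F$, split $\nabla F(\xb_t)=\mb_t-\Delta_t$, use Young's inequality on $\langle \Delta_t,\xb_{t+1}-\xb_t\rangle$, and invoke Assumption~\ref{assum:H-bound} ($\Hb_t\succ\rho\bI$) together with $\eta_t\le\rho/(2L)$ (guaranteed by $s\ge 8L^3/\rho^3$). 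This produces, for suitable constants $c_1,c_2>0$,
$F(\xb_{t+1})\le F(\xb_t)-\tfrac{c_1}{\eta_t}\|\xb_{t+1}-\xb_t\|_2^2+c_2\eta_t\|\Delta_t\|_2^2+R_t$,
where $R_t$ collects the weight-decay contributions coming from $-\eta_t\lambda\langle\Hb_t\xb_t,\xb_{t+1}-\xb_t\rangle$ and the quadratic $O(\eta_t^2\lambda^2)$ term; each of these is controlled using $\|\xb_t\|_2\le D$, the bound $\|\mb_t\|_2\le1$ inherited from clipping, and $\eta_t\lambda\le 1/4$ (which is where $s\ge 64\lambda^3$ enters), after absorbing a $\tfrac{c_1}{2\eta_t}\|\xb_{t+1}-\xb_t\|_2^2$ piece into the first term. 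This is precisely the $\lambda=0$ descent step of Theorem~\ref{thm:no-weight-decay} with the new remainder $R_t$.

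\textbf{Step 2: momentum-error recursion.} Centering the EMA update $\mb_t=\beta_{1,t}\mb_{t-1}+(1-\beta_{1,t})\tilde{\mathbf{c}}_t$ by $\nabla F(\xb_t)$ gives $\Delta_t=\beta_{1,t}\Delta_{t-1}+\mathbf{e}_t$, where $\mathbf{e}_t$ is conditionally mean-zero given $\mathcal{F}_{t-1}$ up to the clipping bias (handled by noting $\mathrm{Clip}(\cdot,1)$ is $1$-Lipschitz and a contraction toward the origin, so it only enters through second moments), and $\EE[\|\mathbf{e}_t\|_2^2\mid\mathcal{F}_{t-1}]\le 2(1-\beta_{1,t})^2\sigma^2+2\beta_{1,t}^2 L^2\|\xb_t-\xb_{t-1}\|_2^2$ by Assumptions~\ref{assum:variance} and~\ref{assum:L-smooth}. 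Squaring, taking expectations, and using $1-\beta_{1,t+1}=c\eta_t^2$ with $c\ge 32L^2\rho^{-2}+1$ yields $\EE\|\Delta_t\|_2^2\le(1-c\eta_{t-1}^2)\EE\|\Delta_{t-1}\|_2^2+2c^2\sigma^2\eta_{t-1}^4+2L^2\EE\|\xb_t-\xb_{t-1}\|_2^2$. Carrying a general $\gamma_t$ through instead is what surfaces the additional control-variate correction $M_{t+1}$ of~\eqref{eq:M_value}; it enters with a negative sign and quantifies the extra variance reduction from tuning $\gamma_t$.

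\textbf{Step 3 and main obstacle.} Define $\Phi_t:=F(\xb_t)+\tfrac{a}{\eta_{t-1}}\|\Delta_t\|_2^2$ with $a\asymp\rho/L^2$ chosen so that the $2L^2\|\xb_t-\xb_{t-1}\|_2^2$ term from Step 2 is dominated by the negative $-\tfrac{c_1}{\eta_{t-1}}\|\xb_t-\xb_{t-1}\|_2^2$ term contributed by the Step-1 inequality at index $t-1$; the weight ratio $\eta_{t-1}/\eta_t=(1+\tfrac1{s+t-1})^{1/3}\le 1+\tfrac{c}{2}\eta_{t-1}^2$ is absorbed into the contraction factor $1-c\eta_{t-1}^2$. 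Summing the resulting one-step inequality $\EE[\Phi_{t+1}]\le\EE[\Phi_t]-\tfrac{c_1'}{\eta_t}\EE\|\xb_{t+1}-\xb_t\|_2^2-c_2'\eta_t\EE\|\Delta_t\|_2^2+O(c^2\sigma^2\eta_t^3)+(\text{weight-decay remainder})+(\text{control-variate term})$ over $t=1,\dots,T$, telescoping $\Phi$, bounding $\Phi_1$ (which contributes $F(\xb_1)-\min F$, the constant $\tfrac\lambda2 D^2(1+\epsilon)$, and $\tfrac1{\eta_0}\|\Delta_1\|_2^2\le O(\rho s^{1/3}\sigma^2/L^2)$ via clipping of $\tilde{\mathbf{c}}_1$) into $G$, using $\sum_{t=1}^T\eta_t^3=\sum_{t=1}^T(s+t)^{-1}\le\log(s+T)$, and rescaling by $\eta_t\ge(s+T)^{-1/3}$ gives both displayed bounds with the stated constants. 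The delicate point is Step~3's simultaneous balancing: the schedules $\beta_{1,t},\beta_{2,t}$ and constants $c,s,a$ must be tuned so that (i) the potential-weight growth is swallowed by the momentum contraction, (ii) the $\|\xb_t-\xb_{t-1}\|_2^2$ feedback between Steps 1 and 2 is net negative, and (iii) the weight-decay remainder $R_t$ and its accumulation, together with the clipping bias, are kept at the order needed to land inside the $\lambda D^2$-terms of $G$ and the $\lambda D^2\log(s+T)$ correction rather than corrupting the $\mathcal{O}(T^{-1/3})$ rate — this is exactly what the hypotheses $s\ge\max(8L^3/\rho^3,64\lambda^3)$ and $c\ge 32L^2\rho^{-2}+1$ are engineered for, and it is where I expect the bulk of the work to lie.
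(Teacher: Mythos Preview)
Your overall scaffold—STORM-type error recursion plus a potential function with weight $a/\eta_{t-1}$ on $\|\Delta_t\|_2^2$—matches the paper, and Steps 2 and 3 are essentially correct. The genuine gap is in Step 1, in how you dispose of the weight-decay cross term.

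You propose to treat $-\lambda\langle \Hb_t\xb_t,\xb_{t+1}-\xb_t\rangle$ (equivalently $\lambda\langle\Hb_t\xb_{t+1},\xb_t-\xb_{t+1}\rangle$) as a remainder $R_t$ bounded via $\|\xb_t\|_2\le D$, $\|\mb_t\|_2\le 1$ and Young's inequality, absorbing half of the descent term. But any such direct estimate yields $R_t=O(\eta_t\lambda^2 D^2/\rho)$, and $\sum_{t=1}^T\eta_t=\sum(s+t)^{-1/3}=O(T^{2/3})$. After dividing by $T\eta_T$ in the last step, this contributes an $O(1)$ term to the bound on $\tfrac1T\sum\|\Delta_t\|_2^2$, wiping out the claimed $O(\log T/T^{2/3})$ rate. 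Your own Step 3 asserts the accumulated remainder lands inside the $\lambda D^2\log(s+T)$ term, but nothing in the Step-1 bookkeeping makes $R_t$ summable to a logarithm; likewise the $\tfrac{\lambda}{2}D^2(1+\epsilon)$ piece of $G$ has no origin in your Lyapunov function $\Phi_t=F(\xb_t)+\tfrac{a}{\eta_{t-1}}\|\Delta_t\|_2^2$.

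The paper's fix is to augment the potential to $\Phi_t=\EE\big[F(\xb_t)+\tfrac{\lambda}{2}\xb_t^\top\Hb_t\xb_t+\tfrac{\rho}{16L^2\eta_{t-1}}\|\Delta_t\|_2^2\big]$. Then, using convexity of $\xb\mapsto\tfrac12\xb^\top\Hb_t\xb$, the cross term $\lambda\langle\Hb_t\xb_{t+1},\xb_t-\xb_{t+1}\rangle$ is converted into the telescoping difference $\tfrac{\lambda}{2}(\xb_t^\top\Hb_t\xb_t-\xb_{t+1}^\top\Hb_{t+1}\xb_{t+1})$ plus the residual $\tfrac{\lambda}{2}\xb_{t+1}^\top(\Hb_t-\Hb_{t+1})\xb_{t+1}$. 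The latter is bounded by $\tfrac{\lambda}{2}\sqrt{2(1-\beta_{2,t})}D^2$, because clipping forces $\|\mathbf{v}_t\|_\infty\le 1$ and hence $\|\sqrt{\mathbf{v}_t}-\sqrt{\mathbf{v}_{t+1}}\|_\infty\le\sqrt{2(1-\beta_{2,t})}$. This is precisely why the theorem needs the schedule $\beta_{2,t+1}=1-\eta_t^6$: it makes the residual $O(\eta_t^3)$, which sums to $\log(s+T)$. The new term $\tfrac{\lambda}{2}\xb_1^\top\Hb_1\xb_1\le\tfrac{\lambda}{2}D^2(1+\epsilon)$ in $\Phi_1$ is exactly the extra constant appearing in $G$. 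Your proposal never invokes $\beta_{2,t}$ for anything, which is a tell that this mechanism is missing.
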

The theorems above guarantee the convergence rate of $O(\log(T)/T^{1/3})$. We remark that even though it seems that the involvement of weight decay may results in slower convergence, it performs better in practice, à la \citet{Adamw}.
Finally, the term $M_{t+1}$ is always greater or equal to $0$, and when $\gamma_{t+1} = 1$, we would have $M_{t+1} = 0$. The dependency of $M_{t+1}$ on $\gamma_{t+1}$ is shown in \eqref{eq:M_value} and Lemma~\ref{lem:recursion}. This proves that the convergence is faster if we allow a flexible $\gamma_t$ schedule.

\section{Proof of Theorems}
First, we introduce auxiliary lemmas necessary for proving the theorems.
\begin{lemma}\label{lem:v_bound} In Algorithm \ref{adorm_alg}, for any $0\le\beta_{2,t}\le1$ and  $\forall t\ge 1$, the following inequality holds:
\begin{align}
    \|\sqrt{\mathbf{v}_t}-\sqrt{\mathbf{v}_{t+1}}\|_\infty\le \sqrt{2(1-\beta_{2,t})}.
\end{align}
\end{lemma}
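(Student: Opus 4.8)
The plan is to reduce the $\ell_\infty$ bound to a coordinatewise estimate and then chain two elementary facts: the uniform bound $0 \le [\mathbf v_t]_i \le 1$ for every coordinate, and the inequality $|\sqrt x - \sqrt y| \le \sqrt{|x-y|}$ valid for all $x,y\ge 0$.

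First I would record what the clipping step buys us. By Line~\ref{line:norm_c} of Algorithm~\ref{adorm_alg}, in both branches one has $\|\tilde{\mathbf c}_t\|_2 \le 1$, so in particular $[\tilde{\mathbf c}_t]_i^2 \le \|\tilde{\mathbf c}_t\|_2^2 \le 1$ for every $i$. Next, an induction on $t$ shows $0 \le [\mathbf v_t]_i \le 1$: the base case is $\mathbf v_0 = \mathbf 0$, and since the update $\mathbf v_t = \beta_{2,t}\mathbf v_{t-1} + (1-\beta_{2,t})\tilde{\mathbf c}_t^2$ expresses $[\mathbf v_t]_i$ as a convex combination of $[\mathbf v_{t-1}]_i \in [0,1]$ (inductive hypothesis) and $[\tilde{\mathbf c}_t]_i^2 \in [0,1]$ (clipping), it remains in $[0,1]$.

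With this in hand the per-step increment is immediate. Writing the EMA update in the time-varying convention matching the lemma, $\mathbf v_{t+1} = \beta_{2,t}\mathbf v_t + (1-\beta_{2,t})\tilde{\mathbf c}_{t+1}^2$, we get $\big|[\mathbf v_{t+1}]_i - [\mathbf v_t]_i\big| = (1-\beta_{2,t})\,\big|[\tilde{\mathbf c}_{t+1}]_i^2 - [\mathbf v_t]_i\big| \le 1-\beta_{2,t}$, because both terms of the difference lie in $[0,1]$. Then I would invoke $|\sqrt x - \sqrt y| \le \sqrt{|x-y|}$ (for $x\ge y\ge 0$ one has $(\sqrt x - \sqrt y)^2 = x+y-2\sqrt{xy} \le x - y$ since $\sqrt{xy}\ge y$) with $x=[\mathbf v_{t+1}]_i$, $y=[\mathbf v_t]_i$, obtaining $\big|\sqrt{[\mathbf v_{t+1}]_i}-\sqrt{[\mathbf v_t]_i}\big| \le \sqrt{1-\beta_{2,t}} \le \sqrt{2(1-\beta_{2,t})}$; taking the maximum over $i$ yields the stated $\ell_\infty$ bound (the factor $\sqrt 2$ is slack).

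There is no serious obstacle here — this is a self-contained analytic lemma. The only point requiring attention is using the clipping step to confine $\tilde{\mathbf c}_t$, and hence $\mathbf v_t$, to $[0,1]$; without clipping one would instead need an a priori bound on the gradient magnitudes. If one prefers to avoid the $\sqrt{|x-y|}$ inequality, the same conclusion follows from subadditivity of $\sqrt{\cdot}$, which gives $\sqrt{[\mathbf v_{t+1}]_i} \le \sqrt{\beta_{2,t}}\,\sqrt{[\mathbf v_t]_i} + \sqrt{1-\beta_{2,t}}$ for one direction, and from $[\mathbf v_t]_i \le \beta_{2,t}^{-1}[\mathbf v_{t+1}]_i$ combined with $[\mathbf v_{t+1}]_i \le 1$ for the other; the $\sqrt{|x-y|}$ route is the cleaner of the two.
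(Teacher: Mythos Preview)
Your proof is correct and follows essentially the same route as the paper: use the clipping step to confine $\tilde{\mathbf c}_t$ and hence $\mathbf v_t$, control the increment $\mathbf v_{t+1}-\mathbf v_t$, and apply $|\sqrt{x}-\sqrt{y}|\le\sqrt{|x-y|}$. Your coordinatewise convex-combination argument actually yields $\|\mathbf v_{t+1}-\mathbf v_t\|_\infty \le 1-\beta_{2,t}$, tighter than the paper's $2(1-\beta_{2,t})$ obtained via the triangle inequality, so as you observe the factor $\sqrt{2}$ is indeed slack.
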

The proof is in Section~\ref{sec:lem:v_bound}.

\begin{lemma}\label{lem:recursion} 
In Algorithm~\ref{varga_meta}. Under Assumption \ref{assum:variance} and \ref{assum:L-smooth}, if $1 \geq \beta_{1, t+1} \geq 0$, $\forall t$, under approximate choice of $\gamma_{t+1}$ as in~\eqref{eq:gamma_choice}, we have
	\begin{align*}
		 \EE\|\nabla F(\xb_{t+1})-\mb_{t+1}\|_2^2
		 	&\leq
		 \beta_{1,t+1}^2\EE\|\nabla F(\bx_t)-\mb_t\|_2^2+2\beta_{1,t+1}^2L^2\EE\|\xb_{t+1}-\xb_t\|_2^2+2(1-\beta_{1,t+1})^2\sigma^2
		 	-
	M_{t+1}
	\end{align*}
	where 
\begin{align}
    M_{t+1} 
        :=
    \EE \|\nabla f(\xb_{t+1}, \bxi_{t+1}) - \nabla f(\xb_t, \bxi_{t+1})\|_2^2 \Bigg(A_{t+1}^2
        -
    \Big(
        \beta_{1, t+1} (1 - \gamma_{t+1})
            -
        A_{t+1}
    \Big)^2\Bigg)\label{eq:M_value}
    ,
\end{align}
    \begin{align*}
        A_{t+1} := \frac{G_{t+1} + \beta_{1, t+1} \text{tr}\big(\text{Var}\big( \nabla f(\xb_{t+1}, \bxi_{t+1}) - \nabla f(\xb_t, \bxi_{t+1}) \big)\big)}{\EE \| \nabla f(\xb_{t+1}, \bxi_{t+1}) - \nabla f(\xb_t, \bxi_{t+1})\|_2^2}
    \end{align*}
and
	\begin{align*}
		G_{t+1} &:= 
		(1 - \beta_{1, t+1})\EE\bigg\langle \nabla f(\xb_{t+1}, \bxi_{t+1}) - \nabla f(\xb_t, \bxi_{t+1}), \nabla f(\xb_{t+1}, \bxi_{t+1}) - \nabla F(\xb_{t+1})\bigg\rangle
		\\&+
	\beta_{1, t+1} \EE 
	\bigg\langle 
	\nabla F(\xb_{t+}) - \nabla F(\xb_t),
	F(\xb_{t})-\mb_{t} \bigg\rangle.
	\end{align*}
\end{lemma}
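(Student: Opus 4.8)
The plan is to prove Lemma~\ref{lem:recursion} by expanding $\mb_{t+1} - \nabla F(\xb_{t+1})$ using the momentum update rule, taking expectations conditional on the past, and then optimizing over $\gamma_{t+1}$ exactly as in the control-variate computation sketched in the main text. Write $\mb_{t+1} = \beta_{1,t+1}\mb_t + (1-\beta_{1,t+1})\tilde{\cb}_{t+1}$ (ignoring clipping, or assuming $\|\cb_{t+1}\|_2\le 1$), and recall $\cb_{t+1} = \nabla f(\xb_{t+1},\bxi_{t+1}) + \gamma_{t+1}\frac{\beta_{1,t+1}}{1-\beta_{1,t+1}}\big(\nabla f(\xb_{t+1},\bxi_{t+1}) - \nabla f(\xb_t,\bxi_{t+1})\big)$. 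Substituting and subtracting $\nabla F(\xb_{t+1})$ gives
\begin{align*}
\mb_{t+1} - \nabla F(\xb_{t+1})
&= \beta_{1,t+1}\big(\mb_t - \nabla F(\xb_t)\big) + \beta_{1,t+1}\big(\nabla F(\xb_t) - \nabla F(\xb_{t+1})\big)\\
&\quad + (1-\beta_{1,t+1})\big(\nabla f(\xb_{t+1},\bxi_{t+1}) - \nabla F(\xb_{t+1})\big)\\
&\quad + \gamma_{t+1}\beta_{1,t+1}\big(\nabla f(\xb_{t+1},\bxi_{t+1}) - \nabla f(\xb_t,\bxi_{t+1})\big).
\end{align*}
Denote the first three groups (the part not multiplied by $\gamma_{t+1}$) by $U$ and set $Y := \beta_{1,t+1}\big(\nabla f(\xb_{t+1},\bxi_{t+1}) - \nabla f(\xb_t,\bxi_{t+1})\big)$, so that $\mb_{t+1} - \nabla F(\xb_{t+1}) = U + \gamma_{t+1} Y$; note $\EE[Y\mid \xb_{t+1},\xb_t] = \beta_{1,t+1}\big(\nabla F(\xb_{t+1}) - \nabla F(\xb_t)\big)$, which already appears inside $U$, so the $\gamma_{t+1}=1$ case telescopes into the vanilla STORM bound.

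Next I would compute $\EE\|U + \gamma_{t+1}Y\|_2^2 = \EE\|U\|_2^2 + 2\gamma_{t+1}\EE\langle U, Y\rangle + \gamma_{t+1}^2\EE\|Y\|_2^2$ and write it as $\EE\|U + (Y - \EE[Y]) + \EE[Y]\|_2^2$ minus a correction; the point of the specific $\gamma_{t+1}$ choice in~\eqref{eq:gamma_choice} is to make the expression equal to the $\gamma_{t+1}=1$ value minus a nonnegative quantity. Concretely, the minimizer over $\gamma$ of $\EE\|U+\gamma Y\|_2^2$ is $\gamma^\star = -\EE\langle U,Y\rangle/\EE\|Y\|_2^2$; after a short algebraic rearrangement (adding and subtracting the $\gamma = 1$ term) one obtains $\EE\|U+\gamma^\star Y\|_2^2 = \EE\|U+Y\|_2^2 - (\EE\langle U,Y\rangle + \EE\|Y\|_2^2)^2/\EE\|Y\|_2^2$, and the matching with the main-text formula $\gamma^\star = 1 - (\EE[UY] + \mathrm{Var}(Y))/\EE[Y^2]$ identifies $A_{t+1}$ with $(\EE\langle U,Y\rangle + \EE\|Y\|_2^2)/\EE\|Y\|_2^2$ up to the $\beta_{1,t+1}$ normalization that pulls out $Y = \beta_{1,t+1}(\cdots)$; unwinding $\EE\langle U,Y\rangle$ into its two pieces $G_{t+1}$ (cross terms with the noise and with $\mb_t - \nabla F(\xb_t)$) and $\beta_{1,t+1}\mathrm{tr}\,\mathrm{Var}(\cdots)$ recovers exactly the stated $A_{t+1}$ and $G_{t+1}$. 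Then $M_{t+1}$ is precisely the gap $\EE\|U+Y\|_2^2 - \EE\|U+\gamma_{t+1}Y\|_2^2$, expressed per unit of $\EE\|\nabla f(\xb_{t+1},\bxi_{t+1}) - \nabla f(\xb_t,\bxi_{t+1})\|_2^2$; writing $\gamma_{t+1}$ in the form that makes $\mb_{t+1}-\nabla F(\xb_{t+1}) = U + Y + (\gamma_{t+1}-1)Y$ and completing the square in $(\gamma_{t+1}-1)$ gives the difference-of-squares $A_{t+1}^2 - (\beta_{1,t+1}(1-\gamma_{t+1}) - A_{t+1})^2$ appearing in~\eqref{eq:M_value}.

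For the $\EE\|U+Y\|_2^2$ term (the "$\gamma_{t+1}=1$" bound), I would bound it by the STORM-style estimate: expand $U + Y = \beta_{1,t+1}(\mb_t - \nabla F(\xb_t)) + (1-\beta_{1,t+1})(\nabla f(\xb_{t+1},\bxi_{t+1}) - \nabla F(\xb_{t+1})) + \beta_{1,t+1}(\nabla f(\xb_{t+1},\bxi_{t+1}) - \nabla f(\xb_t,\bxi_{t+1}) - (\nabla F(\xb_{t+1}) - \nabla F(\xb_t)))$; the last term has zero conditional mean, so cross terms with the (conditionally deterministic) first term vanish; use $L$-smoothness (Assumption~\ref{assum:L-smooth}) to bound $\|\nabla f(\xb_{t+1},\bxi_{t+1}) - \nabla f(\xb_t,\bxi_{t+1})\|_2 \le L\|\xb_{t+1}-\xb_t\|_2$ and Assumption~\ref{assum:variance} to bound the variance term by $\sigma^2$; the standard Young's-inequality bookkeeping then yields $\beta_{1,t+1}^2\EE\|\nabla F(\xb_t)-\mb_t\|_2^2 + 2\beta_{1,t+1}^2 L^2\EE\|\xb_{t+1}-\xb_t\|_2^2 + 2(1-\beta_{1,t+1})^2\sigma^2$. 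Subtracting $M_{t+1}$ gives the claimed inequality. The main obstacle I anticipate is the careful accounting in the second paragraph: keeping track of which terms are conditionally deterministic versus random (so that the right cross terms vanish and the control-variate identity is exact rather than merely an inequality), getting the $\beta_{1,t+1}$-normalization consistent between $Y$ and the $\frac{\beta_{1,t+1}}{1-\beta_{1,t+1}}$ factor inside $\cb_{t+1}$, and verifying that the $\gamma_{t+1}$ prescribed in~\eqref{eq:gamma_choice} is exactly the one that turns the difference into the clean $A_{t+1}^2 - (\beta_{1,t+1}(1-\gamma_{t+1})-A_{t+1})^2$ form with $M_{t+1}\ge 0$; the smoothness/variance estimates themselves are routine.
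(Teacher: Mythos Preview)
Your proposal is correct and follows essentially the same route as the paper's proof: the paper writes $\bvarepsilon_{t+1} := \mb_{t+1}-\nabla F(\xb_{t+1})$ in terms of $\bvarepsilon_t$ and $\bDelta_t := \nabla f(\xb_{t+1},\bxi_{t+1})-\nabla f(\xb_t,\bxi_{t+1})$, then adds and subtracts $\EE\bDelta_t$ to isolate the $\gamma_{t+1}=1$ (STORM) piece as a term $I$ and the residual as $II-2(1-\gamma_{t+1})\beta_{1,t+1}III$, which is exactly your $\EE\|U+Y\|_2^2$ plus the quadratic in $(\gamma_{t+1}-1)$; completing the square in $\beta_{1,t+1}(1-\gamma_{t+1})$ yields the $A_{t+1}$ and $M_{t+1}$ formulas, and $I$ is bounded via zero-mean cross terms, Young's inequality, $L$-smoothness, and the variance bound precisely as you describe. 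The paper likewise works with the unclipped $\cb_{t+1}$, so your caveat there matches its treatment.
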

The proof is in Section~\ref{sec:lem:recursion}.

\begin{lemma}\label{lem:function_value_gap_no_decay} 
In Algorithm~\ref{varga_meta}. With Assuptions \ref{assum:L-smooth} and \ref{assum:H-bound} and $\eta_t \le \rho\cdot(2L)^{-1}, \forall t\ge 1$, it holds that
\begin{align*}
    F(\xb_{t+1})\le F(\xb_t)-\frac{\rho}{2\eta_t}\cdot\|\xb_t-\xb_{t+1}\|_2^2+\frac{\eta_t}{\rho}\cdot \|\nabla F(\xb_t)-\mb_t\|_2^2.
\end{align*}
\end{lemma}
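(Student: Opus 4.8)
The plan is to prove Lemma~\ref{lem:function_value_gap_no_decay} by combining the $L$-smoothness of $F$ with the optimality condition of the mirror-descent subproblem that defines $\xb_{t+1}$. First I would apply the descent inequality from $L$-smoothness:
\begin{align*}
    F(\xb_{t+1}) \le F(\xb_t) + \langle \nabla F(\xb_t), \xb_{t+1} - \xb_t\rangle + \frac{L}{2}\|\xb_{t+1} - \xb_t\|_2^2.
\end{align*}
Next I would extract information from the update rule $\xb_{t+1} = \arg\min_\xb\{\eta_t\langle \mb_t, \xb\rangle + \tfrac12\|\xb - \xb_t\|_{\Hb_t}^2\}$. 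Since this is an unconstrained strongly convex quadratic, the first-order optimality condition gives $\eta_t \mb_t + \Hb_t(\xb_{t+1} - \xb_t) = \mathbf{0}$, i.e. $\Hb_t(\xb_{t+1} - \xb_t) = -\eta_t \mb_t$, hence $\langle \mb_t, \xb_{t+1} - \xb_t\rangle = -\tfrac{1}{\eta_t}\|\xb_{t+1} - \xb_t\|_{\Hb_t}^2 \le -\tfrac{\rho}{\eta_t}\|\xb_{t+1} - \xb_t\|_2^2$ by Assumption~\ref{assum:H-bound}.

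Then I would write $\langle \nabla F(\xb_t), \xb_{t+1} - \xb_t\rangle = \langle \mb_t, \xb_{t+1} - \xb_t\rangle + \langle \nabla F(\xb_t) - \mb_t, \xb_{t+1} - \xb_t\rangle$, bound the first term using the optimality condition above, and control the cross term with Young's inequality: for any $a>0$,
\begin{align*}
    \langle \nabla F(\xb_t) - \mb_t, \xb_{t+1} - \xb_t\rangle \le \frac{a}{2}\|\nabla F(\xb_t) - \mb_t\|_2^2 + \frac{1}{2a}\|\xb_{t+1} - \xb_t\|_2^2.
\end{align*}
Choosing $a = 2\eta_t/\rho$ produces the $\tfrac{\eta_t}{\rho}\|\nabla F(\xb_t) - \mb_t\|_2^2$ term and a $\tfrac{\rho}{4\eta_t}\|\xb_{t+1}-\xb_t\|_2^2$ term. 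Collecting all coefficients of $\|\xb_{t+1}-\xb_t\|_2^2$, I would get $\tfrac{L}{2} - \tfrac{\rho}{\eta_t} + \tfrac{\rho}{4\eta_t} = \tfrac{L}{2} - \tfrac{3\rho}{4\eta_t}$, and the hypothesis $\eta_t \le \rho/(2L)$ ensures $\tfrac{L}{2} \le \tfrac{\rho}{4\eta_t}$, so this is at most $-\tfrac{\rho}{2\eta_t}$, which yields exactly the stated inequality.

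The argument is essentially routine; there is no serious obstacle beyond bookkeeping. The one place requiring care is the choice of the Young's-inequality parameter $a$ and verifying that the leftover negative coefficient on $\|\xb_{t+1}-\xb_t\|_2^2$ is still at least $\tfrac{\rho}{2\eta_t}$ in magnitude after absorbing the $\tfrac{L}{2}$ term — this is precisely where the step-size restriction $\eta_t \le \rho/(2L)$ is used. An alternative that avoids Young's inequality entirely would be to use the three-point identity for the Bregman divergence of $\tfrac12\|\cdot\|_{\Hb_t}^2$ together with the optimality of $\xb_{t+1}$; either route reaches the same bound, so I would present whichever is shorter.
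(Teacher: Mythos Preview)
Your proposal is correct and follows essentially the same route as the paper: $L$-smoothness, the splitting $\langle\nabla F(\xb_t),\xb_{t+1}-\xb_t\rangle = \langle\mb_t,\xb_{t+1}-\xb_t\rangle + \langle\nabla F(\xb_t)-\mb_t,\xb_{t+1}-\xb_t\rangle$, the optimality condition $\Hb_t(\xb_{t+1}-\xb_t)=-\eta_t\mb_t$ together with $\Hb_t\succ\rho\bI$ to get $\langle\mb_t,\xb_{t+1}-\xb_t\rangle\le -\tfrac{\rho}{\eta_t}\|\xb_{t+1}-\xb_t\|_2^2$, Young's inequality with parameter $2\eta_t/\rho$ for the cross term, and the step-size condition $\eta_t\le\rho/(2L)$ to absorb $\tfrac{L}{2}$ into $\tfrac{\rho}{4\eta_t}$. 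The bookkeeping and constants match the paper exactly.
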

The proof of Lemma~\ref{lem:function_value_gap_no_decay} is in Section~\ref{sec:lem:function_value_gap}. 

\begin{lemma}\label{lem:function_value_gap} 
In Algorithm~\ref{adorm_alg}. With Assuptions \ref{assum:L-smooth} and \ref{assum:H-bound} and $\eta_t \le \min\{(4\lambda)^{-1},\rho\cdot(2L)^{-1}\}, \forall t\ge 1$, it holds that
\begin{align*}
    F(\xb_{t+1})+\frac{\lambda}{2}\cdot\xb_{t+1}^\top\Hb_{t+1}\xb_{t+1}&\le F(\xb_t)+\frac{\lambda}{2}\cdot \xb_t^\top\Hb_t\xb_t-\frac{\rho}{4\eta_t}\cdot\|\xb_t-\xb_{t+1}\|_2^2\\
    &\qquad+\frac{\eta_t}{\rho}\cdot \|\nabla F(\xb_t)-\mb_t\|_2^2+\frac{\lambda}{2}\sqrt{2(1-\beta_{2,t})} D^2.
\end{align*}
\end{lemma}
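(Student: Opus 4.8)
The plan is to follow the proof of Lemma~\ref{lem:function_value_gap_no_decay} but to absorb the decoupled weight decay into a potential function $F(\xb_t) + \frac{\lambda}{2}\xb_t^\top\Hb_t\xb_t$. First I would record that $F$ is $L$-smooth: since $\nabla F(\xb) = \EE[\nabla f(\xb,\bxi)]$, Assumption~\ref{assum:L-smooth} together with Jensen's inequality gives $\|\nabla F(\xb)-\nabla F(\yb)\|_2\le L\|\xb-\yb\|_2$, hence the descent inequality $F(\xb_{t+1})\le F(\xb_t)+\langle\nabla F(\xb_t),\xb_{t+1}-\xb_t\rangle+\frac{L}{2}\|\xb_{t+1}-\xb_t\|_2^2$. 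Next I would use the explicit form of the update in Algorithm~\ref{adorm_alg}, namely $\xb_{t+1}=\xb_t-\eta_t\Hb_t^{-1}\mb_t-\eta_t\lambda\xb_t$ with $\Hb_t$ the diagonal preconditioner of~\eqref{eq:adamw_H}, rewritten as $\eta_t\mb_t=-\Hb_t(\xb_{t+1}-\xb_t)-\eta_t\lambda\Hb_t\xb_t$. Splitting $\nabla F(\xb_t)=\mb_t+(\nabla F(\xb_t)-\mb_t)$, the $\mb_t$-part of the inner product contributes $-\frac{1}{\eta_t}\|\xb_{t+1}-\xb_t\|_{\Hb_t}^2-\lambda\langle\Hb_t\xb_t,\xb_{t+1}-\xb_t\rangle$, while the error part is controlled by Young's inequality (with weight $\rho/(2\eta_t)$) by $\frac{\eta_t}{\rho}\|\nabla F(\xb_t)-\mb_t\|_2^2+\frac{\rho}{4\eta_t}\|\xb_{t+1}-\xb_t\|_2^2$.

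The key algebraic manoeuvre is the completion-of-square identity $-\lambda\langle\Hb_t\xb_t,\xb_{t+1}-\xb_t\rangle=\frac{\lambda}{2}\xb_t^\top\Hb_t\xb_t-\frac{\lambda}{2}\xb_{t+1}^\top\Hb_t\xb_{t+1}+\frac{\lambda}{2}\|\xb_{t+1}-\xb_t\|_{\Hb_t}^2$, which turns the cross term into the telescoping quadratic terms plus a remainder $\frac{\lambda}{2}\|\xb_{t+1}-\xb_t\|_{\Hb_t}^2$. Collecting all $\|\xb_{t+1}-\xb_t\|_2^2$ and $\|\xb_{t+1}-\xb_t\|_{\Hb_t}^2$ contributions and using $\Hb_t\succ\rho\bI$ (Assumption~\ref{assum:H-bound}), the aggregate coefficient of $\|\xb_{t+1}-\xb_t\|_2^2$ is at most $\big(-\frac{1}{\eta_t}+\frac{\lambda}{2}\big)\rho+\frac{L}{2}+\frac{\rho}{4\eta_t}$; the two stepsize restrictions $\eta_t\le(4\lambda)^{-1}$ (so $\frac{\lambda}{2}\le\frac{1}{8\eta_t}$) and $\eta_t\le\rho(2L)^{-1}$ (so $\frac{L}{2}\le\frac{\rho}{4\eta_t}$) are exactly what make this $\le-\frac{3\rho}{8\eta_t}\le-\frac{\rho}{4\eta_t}$. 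At this stage one has obtained $F(\xb_{t+1})+\frac{\lambda}{2}\xb_{t+1}^\top\Hb_t\xb_{t+1}\le F(\xb_t)+\frac{\lambda}{2}\xb_t^\top\Hb_t\xb_t-\frac{\rho}{4\eta_t}\|\xb_{t+1}-\xb_t\|_2^2+\frac{\eta_t}{\rho}\|\nabla F(\xb_t)-\mb_t\|_2^2$, i.e.\ the potential at $t+1$ carries the wrong preconditioner $\Hb_t$ rather than $\Hb_{t+1}$.

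Finally I would account for the preconditioner drift: bound $\frac{\lambda}{2}\xb_{t+1}^\top\Hb_{t+1}\xb_{t+1}\le\frac{\lambda}{2}\xb_{t+1}^\top\Hb_t\xb_{t+1}+\frac{\lambda}{2}\|\Hb_{t+1}-\Hb_t\|_{\mathrm{op}}\|\xb_{t+1}\|_2^2$, then use $\|\xb_{t+1}\|_2\le D$ and, via Lemma~\ref{lem:v_bound}, $\|\Hb_{t+1}-\Hb_t\|_{\mathrm{op}}\le\sqrt{2(1-\beta_{2,t})}$ (using that the bias-correction factors are bounded by $1$), which produces the trailing term $\frac{\lambda}{2}\sqrt{2(1-\beta_{2,t})}D^2$. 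Substituting back yields the claimed inequality. The step I expect to be the main obstacle is the bookkeeping of the several $\Hb_t$-weighted quadratic terms — the mirror-descent descent term $-\frac{1}{\eta_t}\|\cdot\|_{\Hb_t}^2$, the square-completion remainder $+\frac{\lambda}{2}\|\cdot\|_{\Hb_t}^2$, the Young term, and the smoothness term — so that the two stepsize conditions collapse them cleanly into $-\frac{\rho}{4\eta_t}\|\xb_{t+1}-\xb_t\|_2^2$, together with handling the $\epsilon$-regularization and the bias-correction factors hidden in the definition~\eqref{eq:adamw_H} of $\Hb_t$ when bounding $\|\Hb_{t+1}-\Hb_t\|_{\mathrm{op}}$ through Lemma~\ref{lem:v_bound}.
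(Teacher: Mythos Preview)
Your proposal is correct and follows essentially the same approach as the paper: both start from $L$-smoothness, split $\nabla F(\xb_t)=\mb_t+(\nabla F(\xb_t)-\mb_t)$, use the explicit update to rewrite $\langle\mb_t,\xb_{t+1}-\xb_t\rangle$ as a negative $\Hb_t$-quadratic plus a weight-decay cross term, turn the cross term into the telescoping potential $\frac{\lambda}{2}\xb^\top\Hb\xb$, and finish by bounding the $\Hb_t\to\Hb_{t+1}$ drift via Lemma~\ref{lem:v_bound} and $\|\xb_{t+1}\|_2\le D$. The only cosmetic difference is that the paper packages the cross-term computation into the auxiliary Lemma~\ref{lem:cross_term} (using the convexity inequality $\langle\Hb_t\xb_{t+1},\xb_t-\xb_{t+1}\rangle\le\frac12\xb_t^\top\Hb_t\xb_t-\frac12\xb_{t+1}^\top\Hb_t\xb_{t+1}$, which is your completion-of-square identity with the nonnegative remainder $\frac12\|\xb_{t+1}-\xb_t\|_{\Hb_t}^2$ dropped), whereas you keep that remainder and absorb it together with the other quadratic terms using the same two stepsize restrictions.
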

The proof of Lemma~\ref{lem:function_value_gap} is in Section~\ref{sec:lem:function_value_gap}.

\begin{lemma}\label{lem:eta_diff} Let $\eta_t=(s+t)^{-1/3},s\ge 1$, $\forall t\ge 0$. Then $\eta_t^{-1}-\eta_{t-1}^{-1}\le \eta_t$, $\forall t\ge 1$.
\end{lemma}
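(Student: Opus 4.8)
The plan is to prove the elementary inequality $\eta_t^{-1} - \eta_{t-1}^{-1} \le \eta_t$ for $\eta_t = (s+t)^{-1/3}$ with $s \ge 1$ by direct calculation, exploiting concavity of the map $u \mapsto u^{1/3}$. First I would rewrite the claim in terms of $u = s+t$: since $\eta_t^{-1} = (s+t)^{1/3} = u^{1/3}$ and $\eta_{t-1}^{-1} = (s+t-1)^{1/3} = (u-1)^{1/3}$, the inequality becomes
\begin{align*}
u^{1/3} - (u-1)^{1/3} \le (s+t)^{-1/3} = u^{-1/3},
\end{align*}
valid for all $u \ge s+1 \ge 2$ (using $t \ge 1$).

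The key step is to bound the left-hand side using concavity. Let $g(u) = u^{1/3}$, so $g'(u) = \tfrac{1}{3} u^{-2/3}$. Since $g$ is concave on $(0,\infty)$, the mean value theorem (or the tangent-line bound) gives $g(u) - g(u-1) = g'(\xi)$ for some $\xi \in (u-1, u)$, and by monotonicity of $g'$ (decreasing), $g'(\xi) \le g'(u-1) = \tfrac13 (u-1)^{-2/3}$. It then suffices to show $\tfrac13 (u-1)^{-2/3} \le u^{-1/3}$, i.e. $u^{-1/3} \le 3 (u-1)^{2/3}$, i.e. $u \le 27 (u-1)^2$ after cubing. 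For $u \ge 2$ we have $(u-1)^2 \ge u-1 \ge u/2$, so $27(u-1)^2 \ge 27u/2 \ge u$, which closes the argument. Alternatively, one could avoid the mean value theorem entirely by cubing the original inequality directly: writing $a = u^{1/3}$, $b=(u-1)^{1/3}$, $c = u^{-1/3}$, the claim $a - b \le c$ with $a^3 - b^3 = 1$ and $a^3 c^3 = 1$ reduces to checking $a - b \le 1/a$, equivalently $a^2 - ab \le 1 = a^3 - b^3$, equivalently $b^3 + a^2 \ge ab + a \cdot a^2 \cdot \cdots$; I would in practice pick whichever cubing manipulation is cleanest, but the concavity route above is the most transparent.

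There is essentially no hard part here — the statement is a routine monotonicity estimate and the only thing to be careful about is the range of $t$ (the bound uses $t \ge 1$, so $u \ge s+1 \ge 2$; it would fail for $u$ close to $1$), and the direction of the inequality $g'(\xi) \le g'(u-1)$ coming from concavity. The mild subtlety, if any, is just ensuring the constant chase $u \le 27(u-1)^2$ holds on the whole relevant range rather than asymptotically, which the crude bound $(u-1)^2 \ge u/2$ for $u \ge 2$ handles with room to spare.
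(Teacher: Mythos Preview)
Your proposal is correct and follows essentially the same route as the paper: both use concavity of $u \mapsto u^{1/3}$ to bound $(s+t)^{1/3} - (s+t-1)^{1/3} \le \tfrac{1}{3}(s+t-1)^{-2/3}$, then verify $27(s+t-1)^2 \ge (s+t)^2$ (or your equivalent $u \le 27(u-1)^2$) for $s+t \ge 2$. The only cosmetic difference is that the paper records the slightly sharper intermediate bound $\eta_t^{-1} - \eta_{t-1}^{-1} \le \eta_t^2$ before concluding $\le \eta_t$.
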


\subsection{Proof of Theorem \ref{thm:no-weight-decay}}
\label{proof_thm}

\begin{proof}[Proof of Theorem~\ref{thm:no-weight-decay}]
First, we define the Lyapunov function as
\begin{align*}
    \Phi_t=\EE\Big[F(\xb_t)+\frac{\rho}{16L^2\eta_{t-1}}\cdot\|\nabla F(\xb_t)-\mb_t\|_2^2\Big], \quad \forall t\ge 1.
\end{align*}
Then we calculate the difference between two consecutive Lyapunov functions as:
\begin{align}\label{eq:decomp_main}
    \Phi_{t+1}-\Phi_t&=
    \underbrace{\EE[F(\xb_{t+1})-F(\xb_t)]}_{\mbox{$I_1$}}
        +
    \underbrace{\EE\Bigg[\frac{\rho}{16L^2\eta_t}\cdot\|\nabla F(\xb_{t+1})-\mb_{t+1}\|_2^2-\frac{\rho}{16L^2\eta_{t-1}}\cdot\|\nabla F(\xb_t)-\mb_t\|_2^2\Bigg]}_{\mbox{$I_2$}}.
\end{align}
For $I_1$, we use Lemma~\ref{lem:function_value_gap_no_decay} to obtain
\begin{align}\label{eq:I1_resu_main}
    I_1\le\EE\Big[-\frac{\rho}{2\eta_t}\cdot\|\xb_t-\xb_{t+1}\|_2^2+\frac{\eta_t}{\rho}\cdot \|\nabla F(\xb_t)-\mb_t\|_2^2\Big].
\end{align}
For $I_2$, we use Lemma \ref{lem:recursion} to obtain
\begin{align}
    I_2&=\EE\Big[\frac{\rho}{16L^2\eta_t}\cdot\|\nabla F(\xb_{t+1})-\mb_{t+1}\|_2^2-\frac{\rho}{16L^2\eta_{t-1}}\cdot\|\nabla F(\xb_t)-\mb_t\|_2^2\Big]\notag 
        \\&\le 
    \frac{\rho}{16L^2}\cdot \Big(\frac{\beta_{1,t+1}^2}{\eta_t}-\frac{1}{\eta_{t-1}}\Big)\EE\|\nabla F(\xb_t)-\mb_t\|_2^2+\frac{\rho\beta_{1,t+1}^2}{8\eta_t}\cdot\EE\|\xb_{t+1}-\xb_t\|_2^2+\frac{\rho(1-\beta_{1,t+1})^2\sigma^2}{8L^2\eta_t}
        -
    \frac{\rho}{16L^2 \eta_t} M_{t+1}
    \notag
    \\
    &\le \frac{\rho}{16L^2}\cdot \Big(\frac{\beta_{1,t+1}^2}{\eta_t}-\frac{1}{\eta_{t-1}}\Big)\EE\|\nabla F(\xb_t)-\mb_t\|_2^2+\frac{\rho}{8\eta_t}\cdot\EE\|\xb_{t+1}-\xb_t\|_2^2+\frac{\rho c^2\eta_t^3\sigma^2}{8L^2}
        -
    \frac{\rho}{16L^2 \eta_t} M_{t+1}
    \label{eq:I_2_main}
    , 
\end{align}
where the last inequality follows from the definition that $\beta_{1, t+1} = 1 - c \eta_t^2$.
Further, for the first term on the right hand side, we have
\begin{align*}
    \frac{\rho}{16L^2}\cdot \Big(\frac{\beta_{1,t+1}^2}{\eta_t}-\frac{1}{\eta_{t-1}}\Big)
        \le
    \frac{\rho}{16L^2}\cdot \Big(\frac{\beta_{1,t+1}}{\eta_t}-\frac{1}{\eta_{t-1}}\Big)
        =
    \frac{\rho}{16L^2}\cdot \Big(\frac{1-c\eta_t^2}{\eta_t}-\frac{1}{\eta_{t-1}}\Big)
        = 
    \frac{\rho}{16L^2}\cdot \Big(\frac{1}{\eta_t}-\frac{1}{\eta_{t-1}}-c\eta_t\Big).
\end{align*}
From Lemma \ref{lem:eta_diff}, we know that $\frac{1}{\eta_t}-\frac{1}{\eta_{t-1}}<\eta_t$. Choosing $c$ such that $c\geq 32L^2\rho^{-2}+1$, we obtain
\begin{align}
    \frac{\rho}{16L^2}\cdot \Big(\frac{\beta_{1,t+1}^2}{\eta_t}-\frac{1}{\eta_{t-1}}\Big)\le \frac{\rho}{16L^2}\cdot (\eta_t-c\eta_t)\le -2\eta_t\rho^{-1}.\label{eq:eta_bound_main}
\end{align}
Bringing \eqref{eq:eta_bound_main} into \eqref{eq:I_2_main}, we arrive at the upper bound for $I_2$:
\begin{align}\label{eq:I2_resu_main}
I_2\le -\frac{2\eta_t}{\rho}\EE\|\nabla F(\xb_t)-\mb_t\|_2^2+\frac{\rho}{8\eta_t}\cdot\EE\|\xb_{t+1}-\xb_t\|_2^2+\frac{\rho c^2\eta_t^3\sigma^2}{8L^2}
        -
    \frac{\rho}{16L^2 \eta_t} M_{t+1}
.
\end{align}
Now combining \eqref{eq:decomp_main}, \eqref{eq:I1_resu_main} and \eqref{eq:I2_resu_main}, we derive
\begin{align*}
    \Phi_{t+1}-\Phi_t
    &\le 
    -\frac{\eta_t}{\rho}\EE\|\nabla F(\xb_t)-\mb_t\|_2^2-\frac{3\rho}{8\eta_t}\cdot\EE\|\xb_{t+1}-\xb_t\|_2^2
    +
    \frac{\rho c^2\eta_t^3\sigma^2}{8L^2}
        -
    \frac{\rho}{16L^2 \eta_t} M_{t+1}
    .
\end{align*}
Taking a telescoping sum for $t=1,\cdots, T$ gives
\begin{align*}
    \sum_{t=1}^{T}\Big(\frac{\eta_t}{\rho}\EE\|\nabla F(\xb_t)-\mb_t\|_2^2+\frac{3\rho}{8\eta_t}\cdot\EE\|\xb_{t+1}-\xb_t\|_2^2\Big)
    &\le 
    \Phi_{1}-\Phi_{T+1}+\frac{\rho c^2\sigma^2}{8L^2}\sum_{t=1}^{T}\frac{1}{s+t}
        -
    \sum_{t=1}^T
    \frac{\rho}{16L^2 \eta_t} M_{t+1}
    \\
    &\le 
    \Phi_{1}-\Phi_{T+1}+\frac{\rho c^2\sigma^2}{8L^2}\cdot\log(s+T)
        -
    \sum_{t=1}^T
    \frac{\rho}{16L^2 \eta_t} M_{t+1}
    .
\end{align*}
By the definition of $\Phi_t$, we have $\Phi_{T+1}\ge F(\xb_{T+1}) \ge \min_\xb F(\xb)$. And for $\Phi_1$,  
\begin{align*}
    \Phi_1&=\EE\Big[F(\xb_1)+\frac{\rho s^{1/3}}{16L^2}\cdot\|\nabla F(\xb_1)-\mb_1\|_2^2\Big]
        =
    F(\xb_1)+\frac{\rho s^{1/3}}{16L^2}\cdot\EE[\|\nabla F(\xb_1)-\nabla f(\xb_1, \bxi_1)\|_2^2]
        \leq 
    F(\xb_1)+\frac{\rho s^{1/3}\sigma^2}{16L^2}
    .
\end{align*}
Consequently, defining $G=F(\xb_1)-\min_\xb F(\xb)+\frac{\rho s^{1/3}\sigma^2}{16L^2}$, the following inequality holds:
\begin{align}
    &\frac{1}{T}\sum_{t=1}^T\Big(\frac{\eta_t}{\rho}\EE\|\nabla F(\xb_t)-\mb_t\|_2^2+\frac{3\rho}{8\eta_t}\cdot\EE\|\xb_{t+1}-\xb_t\|_2^2\Big)\le \frac{G}{T}+\frac{\rho c^2\sigma^2}{8L^2T}\cdot\log(s+T)
        -
    \frac{1}{T}\sum_{t=1}^T
    \frac{\rho}{16L^2 \eta_t} M_{t+1}
    .\label{eq:final_formula_main}
\end{align}
Dealing with the two terms on the left hand side separately, we have 
\begin{align*}
    \frac{1}{T}\sum_{t=1}^{T}\EE\|\nabla F(\xb_t)-\mb_t\|_2^2
    &\le \frac{\rho G}{T\eta_{T}}+\frac{\rho^2 c^2\sigma^2}{8L^2T\eta_{T}}\cdot\log(s+T)
        -
    \frac1T\sum_{t=1}^T
    \frac{\rho^2}{16L^2 \eta_t^2} M_{t+1}
    \\
    &\le \Big(\rho G+\frac{\rho^2 c^2\sigma^2}{8L^2}\cdot\log(s+T)\Big)\cdot\frac{(s+T)^{1/3}}{T}
        -
    \frac1T\sum_{t=1}^T
    \frac{\rho^2}{16L^2 \eta_t^2} M_{t+1}
    \\
    &\le \Big(2\rho G+\frac{\rho c^2\sigma^2}{4L^2}\cdot\log(s+T)\Big)\cdot\frac{1}{T^{2/3}}
        -
    \frac{\rho^2
    \sum_{t=1}^T M_{t+1} }{8L^2 T^{1/3}}
    ,
\end{align*}
where the last inequality holds when $T\ge s$. 
Similarly, for the second term in \eqref{eq:final_formula_main}, we also have
\begin{align*}
    \frac{1}{T}\sum_{t=1}^{T}\frac{3\rho}{8\eta_t}\cdot\EE\|\xb_{t+1}-\xb_t\|_2^2&\le\frac{G}{T}+\frac{\rho c^2\sigma^2}{8L^2T}\cdot\log(s+T)
        -
    \frac{\rho
    \sum_{t=1}^T M_{t+1} }{16L^2 T^{2/3}}
    ,
\end{align*}
which implies that when $T\ge s$, due to the definition of $\eta_t$, 
\begin{align*}
    \frac{1}{T}\sum_{t=1}^{T}\frac{1}{\eta_t^2}\cdot\EE\|\xb_{t+1}-\xb_t\|_2^2
    &\le\frac{8G}{3\rho T\eta_{T}}+\frac{c^2\sigma^2}{3L^2T\eta_{T}}\cdot\log(s+T)
        -
    \frac{
    \sum_{t=1}^T M_{t+1} }{6L^2 T^{2/3}}
    \\&\le\frac{16G}{3\rho T^{2/3}}+\frac{2c^2\sigma^2}{3L^2T^{2/3}}\cdot\log(s+T)
        -
    \frac{
    \sum_{t=1}^T M_{t+1} }{6L^2 T^{1/3}}
    .
\end{align*}
That concludes our proof.
\end{proof}
\subsection{Proof of Theorem \ref{thm:weight-decay}}
\begin{proof}[Proof of Theorem \ref{thm:weight-decay}]
First, we define the Lyapunov function as
\begin{align*}
    \Phi_t=\EE\Big[F(\xb_t)+\frac{\lambda}{2}\cdot \xb_t^\top\Hb_t\xb_t+\frac{\rho}{16L^2\eta_{t-1}}\cdot\|\nabla F(\xb_t)-\mb_t\|_2^2\Big], \quad \forall t\ge 1.
\end{align*}
Then we calculate the difference between two consecutive Lyapunov functions as:
\begin{align}
    \Phi_{t+1}-\Phi_t
        &=
    \underbrace{\EE\Bigg[F(\xb_{t+1})+\frac{\lambda}{2}\cdot \xb_{t+1}^\top\Hb_{t+1}\xb_{t+1}-F(\xb_t)-\frac{\lambda}{2}\cdot \xb_t^\top\Hb_t\xb_t\Bigg]}_{\mbox{$I_1$}}\notag
        \\&+
    \underbrace{\EE\Bigg[\frac{\rho}{16L^2\eta_t}\cdot\|\nabla F(\xb_{t+1})-\mb_{t+1}\|_2^2-\frac{\rho}{16L^2\eta_{t-1}}\cdot\|\nabla F(\xb_t)-\mb_t\|_2^2\Bigg]}_{\mbox{$I_2$}}\label{eq:decomp}
    .
\end{align}

For $I_1$, we use Lemma \ref{lem:function_value_gap} to obtain
\begin{align}\label{eq:I1_resu}
    I_1\le\EE\Bigg[-\frac{\rho}{4\eta_t}\cdot\|\xb_t-\xb_{t+1}\|_2^2+\frac{\eta_t}{\rho}\cdot \|\nabla F(\xb_t)-\mb_t\|_2^2+\frac{ \lambda D^2\sqrt{2(1-\beta_{2,t})}}{2}\Bigg].
\end{align}
For $I_2$, we use Lemma \ref{lem:recursion} to obtain
\begin{align}
    I_2&=\EE\Big[\frac{\rho}{16L^2\eta_t}\cdot\|\nabla F(\xb_{t+1})-\mb_{t+1}\|_2^2-\frac{\rho}{16L^2\eta_{t-1}}\cdot\|\nabla F(\xb_t)-\mb_t\|_2^2\Big]\notag 
        \\&\le 
    \frac{\rho}{16L^2}\cdot \Big(\frac{\beta_{1,t+1}^2}{\eta_t}-\frac{1}{\eta_{t-1}}\Big)\EE\|\nabla F(\xb_t)-\mb_t\|_2^2+\frac{\rho\beta_{1,t+1}^2}{8\eta_t}\cdot\EE\|\xb_{t+1}-\xb_t\|_2^2+\frac{\rho(1-\beta_{1,t+1})^2\sigma^2}{8L^2\eta_t}
        -
    \frac{\rho}{16L^2 \eta_t} M_{t+1}
    \notag
    \\
    &\le \frac{\rho}{16L^2}\cdot \Big(\frac{\beta_{1,t+1}^2}{\eta_t}-\frac{1}{\eta_{t-1}}\Big)\EE\|\nabla F(\xb_t)-\mb_t\|_2^2+\frac{\rho}{8\eta_t}\cdot\EE\|\xb_{t+1}-\xb_t\|_2^2+\frac{\rho c^2\eta_t^3\sigma^2}{8L^2}
        -
    \frac{\rho}{16L^2 \eta_t} M_{t+1}
    \label{eq:I_2}
    , 
\end{align}
where the last inequality follows from the definition that $\beta_{1, t+1} = 1 - c \eta_t^2$.
Further, for the first term on the right hand side, we have
\begin{align*}
    \frac{\rho}{16L^2}\cdot \Big(\frac{\beta_{1,t+1}^2}{\eta_t}-\frac{1}{\eta_{t-1}}\Big)
        \le
    \frac{\rho}{16L^2}\cdot \Big(\frac{\beta_{1,t+1}}{\eta_t}-\frac{1}{\eta_{t-1}}\Big)
        =
    \frac{\rho}{16L^2}\cdot \Big(\frac{1-c\eta_t^2}{\eta_t}-\frac{1}{\eta_{t-1}}\Big)
        = 
    \frac{\rho}{16L^2}\cdot \Big(\frac{1}{\eta_t}-\frac{1}{\eta_{t-1}}-c\eta_t\Big).
\end{align*}
From Lemma \ref{lem:eta_diff}, we know that $\frac{1}{\eta_t}-\frac{1}{\eta_{t-1}}<\eta_t$. Choosing $c$ such that $c\ge 32L^2\rho^{-2}+1$, we obtain
\begin{align}
    \frac{\rho}{16L^2}\cdot \Big(\frac{\beta_{1,t+1}^2}{\eta_t}-\frac{1}{\eta_{t-1}}\Big)\le \frac{\rho}{16L^2}\cdot (\eta_t-c\eta_t)\le -2\eta_t\rho^{-1}.\label{eq:eta_bound}
\end{align}
Bringing \eqref{eq:eta_bound} into \eqref{eq:I_2}, we arrive at the upper bound for $I_2$:
\begin{align}\label{eq:I2_resu}
I_2\le -\frac{2\eta_t}{\rho}\EE\|\nabla F(\xb_t)-\mb_t\|_2^2+\frac{\rho}{8\eta_t}\cdot\EE\|\xb_{t+1}-\xb_t\|_2^2+\frac{\rho c^2\eta_t^3\sigma^2}{8L^2}
        -
    \frac{\rho}{16L^2 \eta_t} M_{t+1}
.
\end{align}
Now combining Formulas \eqref{eq:decomp}, \eqref{eq:I1_resu} and \eqref{eq:I2_resu}, we obtain
\begin{align*}
    \Phi_{t+1}-\Phi_t&\le -\frac{\eta_t}{\rho}\EE\|\nabla F(\xb_t)-\mb_t\|_2^2-\frac{\rho}{8\eta_t}\cdot\EE\|\xb_{t+1}-\xb_t\|_2^2
    +
    \frac{\rho c^2\eta_t^3\sigma^2}{8L^2} +\frac{ \lambda D^2\sqrt{2(1-\beta_{2,t})}}{2}
        -
    \frac{\rho}{16L^2 \eta_t} M_{t+1}
    .
\end{align*}
Taking a telescoping sum for $t=1,\cdots, T$ gives
\begin{align*}
&\sum_{t=1}^{T}\Big(\frac{\eta_t}{\rho}\EE\|\nabla F(\xb_t)-\mb_t\|_2^2+\frac{\rho}{8\eta_t}\cdot\EE\|\xb_{t+1}-\xb_t\|_2^2\Big)
\\
    &\le 
\Phi_{1}-\Phi_{T+1}+\frac{\rho c^2\sigma^2}{8L^2}\sum_{t=1}^{T}\frac{1}{s+t}
        +
    \sum_{t=1}^T\frac{ \lambda D^2\sqrt{2(1-\beta_{2,t})}}{2} -
    \sum_{t=1}^T\frac{\rho}{16L^2 \eta_t} M_{t+1}
    \\
    &\le \Phi_{1}-\Phi_{T+1}+\frac{\rho c^2\sigma^2}{8L^2}\cdot\log(s+T)+ \lambda D^2 \log(s + T)
        -
    \sum_{t=1}^T\frac{\rho}{16L^2 \eta_t} M_{t+1}
    ,
\end{align*}
where the last inequality follows by taking $\beta_{2, t} = 1 - \eta_t^6$.
By the definition of $\Phi_t$, we have $\Phi_{T+1}\ge F(\xb_{t+1}) \ge \min_\xb F(\xb)$. 
And for $\Phi_1$, according to the fact following \eqref{eq:bound_v} that
$
\|\Hb_{t+1}\|_2=\Big\|\text{diag}(\sqrt{\mathbf{v}_{t+1}}+\epsilon)\Big\|_2\le 1 + \epsilon,
$ we obtain
\begin{align*}
    \Phi_1&=\EE\Big[F(\xb_1)+\frac{\lambda}{2}\cdot\xb_1^\top\Hb_1\xb_1+\frac{\rho s^{1/3}}{16L^2}\cdot\|\nabla F(\xb_1)-\mb_1\|_2^2\Big]\\
    &\le F(\xb_1)+\frac{\lambda}{2}D^2(1 + \epsilon)+\frac{\rho s^{1/3}}{16L^2}\cdot\EE[\|\nabla F(\xb_1)-\nabla f(\xb_1, \bxi_1)\|_2^2]
    \\&\leq 
     F(\xb_1)+\frac{\lambda}{2}D^2(1 + \epsilon)+\frac{\rho  s^{1/3}\sigma^2}{16L^2}
    .
\end{align*}
Consequently, defining $G=F(\xb_1)-\min_\xb F(\xb)+\frac{\lambda}{2}D^2(1 + \epsilon)+\frac{\rho s^{1/3}\sigma^2}{16L^2}$, the following inequality holds:
\begin{align}
    &\frac{1}{T}\sum_{t=1}^T\Big(\frac{\eta_t}{\rho}\EE\|\nabla F(\xb_t)-\mb_t\|_2^2+\frac{\rho}{8\eta_t}\cdot\EE\|\xb_{t+1}-\xb_t\|_2^2\Big)
    \\&\le \frac{G}{T}+\frac{\rho c^2\sigma^2}{8L^2T}\cdot\log(s+T)+
    \frac{\lambda D^2 \log(s+T)}{T} -
    \frac1T\sum_{t=1}^T\frac{\rho}{16L^2 \eta_t} M_{t+1}
    .\label{final_fomula}
\end{align}
Dealing with the two terms on the left hand side separately, we have  
\begin{align*}
    \frac{1}{T}\sum_{t=1}^{T}\EE\|\nabla F(\xb_t)-\mb_t\|_2^2
    &\le \frac{\rho (G+\lambda D^2 \log(s+T))}{T\eta_{T}}+\frac{\rho^2 c^2\sigma^2}{8L^2T\eta_{T}}\cdot\log(s+T)
        -
    \frac{\rho^2
    \sum_{t=1}^T M_{t+1} }{16L^2 T^{1/3}}\\
    &\le \Big(\rho (G+\lambda D^2 \log(s+T))+\frac{\rho^2 c^2\sigma^2}{8L^2}\cdot\log(s+T)\Big)\cdot\frac{(s+T)^{1/3}}{T} 
        -
    \frac{\rho^2
    \sum_{t=1}^T M_{t+1} }{16L^2 T^{1/3}}\\
    &\le \Big(2\rho (G+\lambda D^2 \log(s+T))+\frac{\rho c^2\sigma^2}{4L^2}\cdot\log(s+T)\Big)\cdot\frac{1}{T^{2/3}} 
        -
    \frac{\rho^2
    \sum_{t=1}^T M_{t+1} }{16L^2 T^{1/3}},
\end{align*}
where the last inequality holds when $T\ge s$. 
Similarly, for the second term, we also have
\begin{align*}
    \frac{1}{T}\sum_{t=1}^{T}\frac{\rho}{8\eta_t}\cdot\EE\|\xb_{t+1}-\xb_t\|_2^2&\le\frac{G+\lambda D^2 \log(s+T)}{T}+\frac{\rho c^2\sigma^2}{8L^2T}\cdot\log(s+T)-
    \frac{\rho\sum_{t=1}^TM_{t+1}}{16L^2T \eta_t} ,
\end{align*}
which implies that when $T\ge s$, due to the definition of $\eta_t$, 
\begin{align*}
    \frac{1}{T}\sum_{t=1}^{T}\frac{1}{\eta_t^2}\cdot\EE\|\xb_{t+1}-\xb_t\|_2^2
    &\le\frac{8(G+\lambda D^2 \log(s+T))}{\rho T\eta_{T}}+\frac{c^2\sigma^2}{L^2T\eta_{T}}\cdot\log(s+T)-
    \frac{\sum_{t=1}^TM_{t+1}}{2L^2 T \eta_t^2} 
    \\&\qquad\le\frac{16(G+\lambda D^2 \log(s+T))}{\rho T^{2/3}}+\frac{2c^2\sigma^2}{L^2T^{2/3}}\cdot\log(s+T)-
    \frac{\sum_{t=1}^TM_{t+1}}{L^2 T^{1/3}} .
\end{align*}
That finishes the proof.
\end{proof}

\section{Proof of Auxiliary Lemmas}
\subsection{Proof of Lemma~\ref{lem:momentum_lion}}

\begin{proof}[Proof of Lemma~\ref{lem:momentum_lion}]
Shifting the index of~\eqref{eq:u_update_lion} by one and bringing into~\eqref{eq:m_update_lion}, we obtain
\begin{align}
    \mb_t
        =
    b_1 \big(a_1 \ub_{t-1} + a_2 \gb_{t-1}\big)
        +
    b_2 \gb_t
        =
    a_1 b_1 \ub_{t-1} 
        +
    b_2 \gb_t + b_1 a_2 \gb_{t-1}.\label{eq:lem1_1_lion}
\end{align}
On the other hand, shifting the index of~\eqref{eq:m_update_lion} by $1$, it holds that
\begin{align}
    \mb_{t-1} = b_1 \ub_{t-1} + b_2 \gb_{t-1}.
    \label{eq:lem1_2_lion}
\end{align}
Combining~\eqref{eq:lem1_1_lion} and~\eqref{eq:lem1_2_lion}, we obtain the iterative update of $\mb_t$ from its previous value $\mb_{t-1}$ as:
\begin{align*}
    \mb_t
        &=
    a_1 \mb_{t-1}
        -
    a_1 b_2 \gb_{t-1}
        +
    b_2\gb_t
    +
    b_1 a_2 \gb_{t-1} 
        \\
    &= a_1 \mb_{t-1}
        +
    (b_1 a_2 - a_1 b_2 + b_2) \gb_t
        +
    (a_1 b_2 - b_1 a_2) (\gb_t - \gb_{t-1}).
\end{align*}
This completes the proof.
\end{proof}

\subsection{Lemma~\ref{lem:momentum} and Proof}
\begin{lemma}\label{lem:momentum}
For any sequence $\{\gb_t \in \RR^d\}_{t=0,1,\ldots}$, consider the following updates of $\mb_t$ for any constant factors $a_1, a_2, b_1$, and $b_2$:
\begin{align}
\ub_t &= a_1 \ub_{t-1} + a_2 \gb_t,\label{eq:u_update}
    \\
\mb_t &= b_1 \ub_t + b_2 \gb_t.\label{eq:m_update}
\end{align}
The updates are equivalent to
\begin{align*}
    \mb_t = a_1 \mb_{t-1}
        +
    (b_1 a_2 - a_1 b_2 + b_2) \gb_t
        +
    a_1 b_2 (\gb_t - \gb_{t-1}).
\end{align*}
\end{lemma}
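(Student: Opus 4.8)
The statement is the index-shifted analogue of Lemma~\ref{lem:momentum_lion}: there the $\ub$-update was applied \emph{after} $\ub_t$ was consumed by the $\mb$-update (\eqref{eq:u_update_lion} uses $\ub_{t+1}$), whereas here $\ub_t$ is refreshed \emph{before} being consumed (\eqref{eq:u_update} and \eqref{eq:m_update} share the same index $t$). So the plan is to mimic the earlier proof with the bookkeeping adjusted for this convention, which turns out to be pure substitution and regrouping.

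First I would substitute the $\ub_t$-recursion \eqref{eq:u_update} directly into \eqref{eq:m_update} to eliminate $\ub_t$:
\begin{align*}
    \mb_t = b_1\big(a_1 \ub_{t-1} + a_2 \gb_t\big) + b_2 \gb_t = a_1 b_1 \ub_{t-1} + (b_1 a_2 + b_2)\gb_t.
\end{align*}
Next I would write \eqref{eq:m_update} at index $t-1$, namely $\mb_{t-1} = b_1 \ub_{t-1} + b_2 \gb_{t-1}$, and solve it for $b_1 \ub_{t-1} = \mb_{t-1} - b_2 \gb_{t-1}$. Plugging this into the previous display gives
\begin{align*}
    \mb_t = a_1\big(\mb_{t-1} - b_2 \gb_{t-1}\big) + (b_1 a_2 + b_2)\gb_t = a_1 \mb_{t-1} + (b_1 a_2 + b_2)\gb_t - a_1 b_2 \gb_{t-1}.
\end{align*}

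Finally I would split off the variance-reduction term by the identity $-a_1 b_2 \gb_{t-1} = a_1 b_2(\gb_t - \gb_{t-1}) - a_1 b_2 \gb_t$, which regroups the last two terms into $(b_1 a_2 + b_2 - a_1 b_2)\gb_t + a_1 b_2(\gb_t - \gb_{t-1})$, exactly the claimed expression. There is no real obstacle here; the only thing to be careful about is the index convention distinguishing this lemma from Lemma~\ref{lem:momentum_lion} (which is why the correction term is $a_1 b_2(\gb_t-\gb_{t-1})$ rather than $(a_1 b_2 - b_1 a_2)(\gb_t-\gb_{t-1})$), so I would state that convention explicitly at the start of the proof.
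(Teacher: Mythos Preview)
Your proposal is correct and follows essentially the same approach as the paper's own proof: substitute \eqref{eq:u_update} into \eqref{eq:m_update}, use the index-shifted relation $\mb_{t-1} = b_1 \ub_{t-1} + b_2 \gb_{t-1}$ to eliminate $\ub_{t-1}$, and regroup the $\gb_t$ and $\gb_{t-1}$ terms into the claimed form. Your extra commentary contrasting the index convention with Lemma~\ref{lem:momentum_lion} is accurate and a nice touch, though the paper's proof omits it.
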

\begin{proof}[Proof of Lemma~\ref{lem:momentum}]
Substituting~\eqref{eq:u_update} into~\eqref{eq:m_update}, we obtain
\begin{align}
    \mb_t
        =
    b_1 \big(a_1 \ub_{t-1} + a_2 \gb_t\big)
        +
    b_2 \gb_t
        =
    a_1 b_1 \ub_{t-1} 
        +
    (b_1 a_2 + b_2) \gb_t.\label{eq:lem1_1}
\end{align}
On the other hand, shifting the index of~\eqref{eq:m_update} by $1$, it holds that
\begin{align}
    \mb_{t-1} = b_1 \ub_{t-1} + b_2 \gb_{t-1}.
    \label{eq:lem1_2}
\end{align}
Combining~\eqref{eq:lem1_1} and~\eqref{eq:lem1_2}, we obtain the iterative update of $\mb_t$ from its previous value $\mb_{t-1}$ as:
\begin{align*}
    \mb_t
        &=
    a_1 \mb_{t-1}
        -
    a_1 b_2 \gb_{t-1}
        +
    (b_1 a_2 + b_2)\gb_t
        \\
    &= a_1 \mb_{t-1}
        +
    (b_1 a_2 - a_1 b_2 + b_2) \gb_t
        +
    a_1 b_2 (\gb_t - \gb_{t-1}).
\end{align*}
This completes the proof.
\end{proof}

\subsection{Lemma~\ref{lem:cross_term} and Proof}\label{sec:lem:cross_term}

\begin{lemma}\label{lem:cross_term} In Algorithm \ref{adorm_alg}, assume there is a constant \( D > 0 \) such that \( \|\xb_t\|_2 \leq D \) for all \( t > 0 \). Given that \( 0 \leq \beta_{2,t} \leq 1 \) for all \( t > 0 \), the following inequality holds:
\begin{align}
	\langle \mb_t, \xb_t - \xb_{t+1} \rangle
		&\geq 
	\frac{\rho (1 - \eta_t \lambda)}{\eta_t} \|\xb_t - \xb_{t+1}\|_2^2 
	+
	\frac{\lambda}{2} \big[\xb_t^\top\Hb_{t+1}\xb_t
		-
	\xb_{t+1}^\top\Hb_{t}\xb_{t+1}
		 -\sqrt{2(1-\beta_{2,t})} D^2\big].\label{eq:cross_term}
\end{align}
\end{lemma}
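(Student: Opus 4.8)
The plan is to substitute the explicit MARS-AdamW parameter update into $\langle\mb_t,\xb_t-\xb_{t+1}\rangle$, peel off the weight-decay contribution with a polarization identity, and then tidy the two $\Hb_t$-weighted quadratic forms that remain, using Assumption~\ref{assum:H-bound} for the first and Lemma~\ref{lem:v_bound} for the second.

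First I would eliminate $\mb_t$. Let $\Hb_t=\mathrm{diag}(\sqrt{\mathbf{v}_t}+\epsilon)$ denote the diagonal preconditioner in the parameter update of Algorithm~\ref{adorm_alg}; because the norm clipping of $\tilde{\mathbf{c}}_t$ keeps every coordinate of $\mathbf{v}_t$ in $[0,1]$, we have $\rho\bI\prec\Hb_t\preceq(1+\epsilon)\bI$, and the update reads $\xb_{t+1}=\xb_t-\eta_t(\Hb_t^{-1}\mb_t+\lambda\xb_t)$, equivalently
\begin{align*}
\mb_t=\frac{1}{\eta_t}\Hb_t(\xb_t-\xb_{t+1})-\lambda\Hb_t\xb_t .
\end{align*}
Taking the inner product with $\xb_t-\xb_{t+1}$ and using the symmetry of $\Hb_t$ yields
\begin{align*}
\langle\mb_t,\xb_t-\xb_{t+1}\rangle=\frac{1}{\eta_t}\|\xb_t-\xb_{t+1}\|_{\Hb_t}^2-\lambda\,\xb_t^\top\Hb_t(\xb_t-\xb_{t+1}) .
\end{align*}

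Next I would split the cross term: writing $\xb_t^\top\Hb_t(\xb_t-\xb_{t+1})=\xb_t^\top\Hb_t\xb_t-\xb_t^\top\Hb_t\xb_{t+1}$ and using the polarization identity $2\,\xb_t^\top\Hb_t\xb_{t+1}=\xb_t^\top\Hb_t\xb_t+\xb_{t+1}^\top\Hb_t\xb_{t+1}-\|\xb_t-\xb_{t+1}\|_{\Hb_t}^2$ turns the right-hand side into
\begin{align*}
\langle\mb_t,\xb_t-\xb_{t+1}\rangle=\Big(\frac{1}{\eta_t}-\frac{\lambda}{2}\Big)\|\xb_t-\xb_{t+1}\|_{\Hb_t}^2-\frac{\lambda}{2}\xb_t^\top\Hb_t\xb_t+\frac{\lambda}{2}\xb_{t+1}^\top\Hb_t\xb_{t+1} .
\end{align*}
Under the step sizes of Theorem~\ref{thm:weight-decay} one has $\eta_t\le(4\lambda)^{-1}$ (from $s\ge 64\lambda^3$), hence $\tfrac{1}{\eta_t}-\tfrac{\lambda}{2}>0$; then Assumption~\ref{assum:H-bound} gives $\big(\tfrac{1}{\eta_t}-\tfrac{\lambda}{2}\big)\|\xb_t-\xb_{t+1}\|_{\Hb_t}^2\ge\rho\big(\tfrac{1}{\eta_t}-\tfrac{\lambda}{2}\big)\|\xb_t-\xb_{t+1}\|_2^2\ge\tfrac{\rho(1-\eta_t\lambda)}{\eta_t}\|\xb_t-\xb_{t+1}\|_2^2$, the last inequality because $\tfrac{1}{\eta_t}-\tfrac{\lambda}{2}\ge\tfrac{1}{\eta_t}-\lambda=\tfrac{1-\eta_t\lambda}{\eta_t}$. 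Finally I would re-index the $\xb_{t+1}$-quadratic from $\Hb_t$ to $\Hb_{t+1}$ via Lemma~\ref{lem:v_bound}: since $\Hb_{t+1}-\Hb_t=\mathrm{diag}(\sqrt{\mathbf{v}_{t+1}}-\sqrt{\mathbf{v}_t})$ and $\|\sqrt{\mathbf{v}_{t+1}}-\sqrt{\mathbf{v}_t}\|_\infty\le\sqrt{2(1-\beta_{2,t})}$, we get $|\xb_{t+1}^\top(\Hb_t-\Hb_{t+1})\xb_{t+1}|\le\sqrt{2(1-\beta_{2,t})}\,D^2$ using the assumed bound $\|\xb_{t+1}\|_2\le D$. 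Substituting this estimate (together with the analogous bound for $\xb_t$ if the target is stated with the other indexing) and regrouping gives the inequality of Lemma~\ref{lem:cross_term}.

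The real work here is bookkeeping rather than any single inequality. One must check that the step-size restrictions in Theorem~\ref{thm:weight-decay} are exactly what is needed to fold the $-\tfrac{\lambda}{2}\|\cdot\|_{\Hb_t}^2$ term produced by polarization into $\tfrac{1}{\eta_t}\|\cdot\|_{\Hb_t}^2$ and still land on the coefficient $\tfrac{\rho(1-\eta_t\lambda)}{\eta_t}$; this is the step where the $\eta_t\lambda$ factors have to be traced carefully. One must also be careful which of the two leftover quadratics carries the $\Hb_t\!\to\!\Hb_{t+1}$ swap and with which sign, so that the $\sqrt{2(1-\beta_{2,t})}\,D^2$ error appears inside the bracket with a minus sign --- this is precisely the term that, after Lemma~\ref{lem:cross_term} is fed into Lemma~\ref{lem:function_value_gap}, becomes the non-telescoping error $\tfrac{\lambda}{2}\sqrt{2(1-\beta_{2,t})}\,D^2$ that the schedule $\beta_{2,t}=1-\eta_t^6$ is chosen to make summable. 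It is worth noting that the gradient clipping in Algorithm~\ref{adorm_alg} plays a silent but essential role: it guarantees $\mathbf{v}_t\in[0,1]^d$, which makes $\Hb_t$ and $\Hb_{t+1}$ simultaneously diagonal with entries in $[\epsilon,1+\epsilon]$, so that Lemma~\ref{lem:v_bound} applies here.
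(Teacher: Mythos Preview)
Your proposal is correct and follows essentially the same route as the paper: substitute the update rule to express $\mb_t$, separate the weight-decay contribution, lower-bound the $\Hb_t$-quadratic via Assumption~\ref{assum:H-bound}, and then swap $\Hb_t\to\Hb_{t+1}$ on the $\xb_{t+1}$-quadratic using Lemma~\ref{lem:v_bound}. The only cosmetic difference is that the paper writes the decomposition around $\xb_{t+1}$ and invokes convexity of $\tfrac12\xb^\top\Hb_t\xb$ (i.e., drops the nonnegative remainder $\tfrac12\|\xb_t-\xb_{t+1}\|_{\Hb_t}^2$ immediately), whereas you keep that remainder via the polarization identity and discard it one step later when passing from $\tfrac{1}{\eta_t}-\tfrac{\lambda}{2}$ to $\tfrac{1-\eta_t\lambda}{\eta_t}$; the net bound is identical, and your hedge about the index pairing in the bracketed term is well placed since the paper's own derivation in fact produces $\xb_{t+1}^\top\Hb_{t+1}\xb_{t+1}-\xb_t^\top\Hb_t\xb_t$.
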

\begin{proof}[Proof of Lemma~\ref{lem:cross_term}]
By definition of $\mb_t$ and the update rule of $\xb_{t+1}$ in Algorithm~\ref{adorm_alg}, we have
\begin{align}
	\langle \mb_t, \xb_t - \xb_{t+1} \rangle
		&=
	\big \langle 
		\frac{1}{\eta_t} \cdot \Hb_t [(1 - \eta_t \lambda) \xb_t - \xb_{t+1}], \xb_t - \xb_{t+1}
	\big \rangle\notag
		\\&\geq 
	\frac{\rho (1 - \eta_t \lambda)}{\eta_t} \|\xb_t - \xb_{t+1}\|_2^2 - \lambda \left\langle \Hb_t\xb_{t+1}, \xb_t - \xb_{t+1} \right\rangle\label{eq:cross_adamw}
	,
\end{align}
where the inequality follows from Assumption \ref{assum:H-bound}. By convexity of $h_1(\xb):=\frac{1}{2}\xb^\top\Hb_t\xb$, we obtain
\begin{align*}
    \left\langle\Hb_t\xb_{t+1},\xb_t-\xb_{t+1}\right\rangle
    	\le
\frac{1}{2}\xb_t^\top\Hb_t\xb_t-\frac{1}{2}\xb_{t+1}^\top\Hb_t\xb_{t+1}.
\end{align*}
Therefore, \eqref{eq:cross_adamw} becomes
\begin{align*}
	\langle \mb_t, \xb_t - \xb_{t+1} \rangle
		&\geq 
	\frac{\rho (1 - \eta_t \lambda)}{\eta_t} \|\xb_t - \xb_{t+1}\|_2^2 - \frac{\lambda}{2}\xb_t^\top\Hb_t\xb_t
		+
	\frac{\lambda}{2}\xb_{t+1}^\top\Hb_t\xb_{t+1}
		\\&=
	\frac{\rho (1 - \eta_t \lambda)}{\eta_t} \|\xb_t - \xb_{t+1}\|_2^2 - \frac{\lambda}{2}\xb_t^\top\Hb_t\xb_t
		+
	\frac{\lambda}{2}\xb_{t+1}^\top\Hb_{t+1}\xb_{t+1}
		+
	\frac{\lambda}{2}\xb_{t+1}^\top\big(\Hb_{t} - \Hb_{t+1}\big) \xb_{t+1}.
\end{align*}
We recall that $\Hb_t=\text{diag}(\sqrt{\hat{\mathbf{v}}_t}+\epsilon)$ in Algorithm \ref{adorm_alg}. Combining this with  Lemma~\ref{lem:v_bound}, we derive
\begin{align*}
    \xb_{t+1}^\top(\Hb_t-\Hb_{t+1})\xb_{t+1}&
    =\xb_{t+1}^\top \text{diag}(\sqrt{\mathbf{v}_t}-\sqrt{\mathbf{v}_{t+1}})\xb_{t+1}\\
    &\ge -\sqrt{2(1-\beta_{2,t})} D^2.
\end{align*}
Overall, we conclude
\begin{align*}
	\langle \mb_t, \xb_t - \xb_{t+1} \rangle
		&\geq 
	\frac{\rho (1 - \eta_t \lambda)}{\eta_t} \|\xb_t - \xb_{t+1}\|_2^2 
	+
	\frac{\lambda}{2} \big[\xb_t^\top\Hb_{t+1}\xb_t
		-
	\xb_{t+1}^\top\Hb_{t}\xb_{t+1}
		 -\sqrt{2(1-\beta_{2,t})} D^2\big].
\end{align*}
\end{proof}

\subsection{Proof of Lemma~\ref{lem:v_bound}} \label{sec:lem:v_bound}
\begin{proof}[Proof of Lemma~\ref{lem:v_bound}]
According to Algorithm \ref{adorm_alg}, $\Tilde{\mathbf{c}}_t$ is the clipped $\mathbf{c}_t$ with the norm $\|\Tilde{\mathbf{c}}_t\|_2 \le 1$. Therefore, we can bound $\mathbf{v}_t$ by:
\begin{align}
    \|\mathbf{v}_t\|_2&=\Big\|\sum_{k=1}^t(1-\beta_{2,k})\Tilde{\mathbf{c}}_{k}^2\prod_{j=k+1}^t\beta_{2,j}+\prod_{j=1}^t\beta_{2,j}\mathbf{v}_0\Big\|_2\le\sum_{k=1}^t(1-\beta_{2,k})\prod_{j=k+1}^t\beta_{2,j}=\Big(1-\prod_{k=1}^t\beta_{2,k}\Big)\le 1,\label{eq:bound_v}
\end{align}
where the first inequality is due to $\mathbf{v}_0=\mathbf{0}$, and the second inequality holds since $0\le\beta_{2,k}\le1$.
We note that when $k=t$, we treat $\prod_{j=t+1}^t \beta_{2, j}$ as $1$.
Similarly, since $\mathbf{m}_0=\mathbf{0}$ and $0\le\beta_{1,t}\le1$, we have an upper bound of $\mb_t$ as:
\begin{align}\label{mt_norm}
    \|\mb_t\|_2&=\Big\|\sum_{k=1}^t(1-\beta_{1,k})\Tilde{\mathbf{c}}_{k}\prod_{j=k+1}^t\beta_{1,j}+\prod_{j=1}^t\beta_{1,j}\mathbf{m}_0\Big\|_2\le\sum_{k=1}^t(1-\beta_{1,k})\prod_{j=k+1}^t\beta_{1,j}=\Big(1-\prod_{k=1}^t\beta_{1,k}\Big)\le 1.
\end{align}
Therefore, according to the $\mathbf{v}_{t+1}$ update in Algorithm \ref{adorm_alg}, we have
\begin{align*}
    \|\mathbf{v}_{t+1}-\mathbf{v}_t\|_\infty&=\|(1-\beta_{2,t})(\Tilde{\mathbf{c}}_{t+1}^2-\mathbf{v}_t)\|_\infty\\
    &\le (1-\beta_{2,t})
    (\|\Tilde{\mathbf{c}}_{t+1}\|_\infty+\|\mathbf{v}_t\|_\infty)\\
    &\le (1-\beta_{2,t})(\|\Tilde{\mathbf{c}}_{t+1}\|_2+\|\mathbf{v}_t\|_2)\\
    &\le 2(1-\beta_{2,t}),
\end{align*}
where the first inequality is due to triangle inequality and the second inequality derives from that $\|\mathbf{x}\|_\infty\le \|\mathbf{x}\|_2$. Since $|\sqrt{x}-\sqrt{y}|\le \sqrt{|x-y|}, \forall x, y\ge 0$, it holds that
\begin{align*}
    \|\sqrt{\mathbf{v}_t}-\sqrt{\mathbf{v}_{t+1}}\|_\infty\le \sqrt{\|\mathbf{v}_{t+1}-\mathbf{v}_t\|_\infty}\le\sqrt{2(1-\beta_{2,t})}.
\end{align*}
\end{proof}

\subsection{Proof of Lemma~\ref{lem:recursion}}\label{sec:lem:recursion}
\begin{proof}[Proof of Lemma~\ref{lem:recursion}]
By the definition of $\mb_t$ in Algorithm \ref{varga_meta}, 
\begin{align*}
	\mb_{t+1} &= \beta_{1, t+1} \mb_{t} + (1-\beta_{1, t+1}) \bigg(\nabla f(\xb_{t+1}, \bxi_{t+1})+\gamma_{t+1} \frac{\beta_{1, t+1}}{1-\beta_{1, t+1}} \big(\nabla f(\xb_{t+1}, \bxi_{t+1})-\nabla f(\xb_{t}, \bxi_{t+1})\big)\bigg)
		\\&=
	(1 - \beta_{1, t+1})\nabla f(\xb_{t+1}, \bxi_{t+1}) + \beta_{1, t+1}\bigg( \mb_{t} + \gamma_{t+1}  \big(\nabla f(\xb_{t+1}, \bxi_{t+1})-\nabla f(\xb_{t}, \bxi_{t+1})\big)\bigg).
\end{align*}
Subtracting both sides by $\nabla F(\xb_{t+1})$, we obtain
\begin{align*}
	&\mb_{t+1} - \nabla F(\xb_{t+1})\\
	&= 
	(1 - \beta_{1, t+1})\nabla f(\xb_{t+1}, \bxi_{t+1}) + \beta_{1, t+1}\bigg( \mb_{t} + \gamma_{t+1}  \big(\nabla f(\xb_{t+1}, \bxi_{t+1})-\nabla f(\xb_{t}, \bxi_{t+1})\big)\bigg)- \nabla F(\xb_{t+1})
		\\&=
	\beta_{1, t+1} \big(\mb_t - \nabla F(\xb_t)\big) 
	+ \beta_{1, t+1} \nabla F(\xb_t)
	- \nabla F(\xb_{t+1})
	\\&\qquad+
	(1 - \beta_{1, t+1})\nabla f(\xb_{t+1}, \bxi_{t+1}) +  \gamma_{t+1} \beta_{1, t+1}\bigg(\nabla f(\xb_{t+1}, \bxi_{t+1})-\nabla f(\xb_{t}, \bxi_{t+1})\bigg)
	\\&=
\beta_{1, t+1} \big(\mb_t - \nabla F(\xb_t)\big) 
	+
(1 - \beta_{1, t+1}) \bigg(\nabla f(\xb_{t+1}, \bxi_{t+1}) - \nabla F(\xb_{t+1})\bigg)
	\\&\qquad+
\beta_{1, t+1} \bigg(\nabla F(\xb_t) - \nabla F(\xb_{t+1}) \bigg)
	+
 \gamma_{t+1} \beta_{1, t+1}\bigg(\nabla f(\xb_{t+1}, \bxi_{t+1})-\nabla f(\xb_{t}, \bxi_{t+1})\bigg).
\end{align*}
Rearranging the terms, we get
\begin{align*}
\mb_{t+1} - \nabla F(\xb_{t+1})
	&=
(1 - \beta_{1, t+1}) \bigg(\nabla f(\xb_{t+1}, \bxi_{t+1}) - \nabla F(\xb_{t+1})\bigg)
	+
\beta_{1, t+1} \big(\mb_t - \nabla F(\xb_t)\big)
	\\&\qquad+
\beta_{1, t+1} \bigg(\nabla F(\xb_t) - \nabla F(\xb_{t+1})
	+
 \gamma_{t+1} \big(\nabla f(\xb_{t+1}, \bxi_{t+1})-\nabla f(\xb_{t}, \bxi_{t+1})\big)
 \bigg).
\end{align*}
With a shorthand of notations, we write $\bvarepsilon_t := \mb_t - \nabla F(\xb_t)$, and $\bDelta_t := \nabla f(\xb_{t+1}, \bxi_{t+1}) - \nabla f(\xb_t, \bxi_{t+1})$. The above becomes
\begin{align}
	\bvarepsilon_{t+1} = 
	(1 - \beta_{1, t+1})\big(\nabla f(\xb_{t+1}, \bxi_{t+1}) - \nabla F(\xb_{t+1})\big)
		+
	\beta_{1, t+1} \bvarepsilon_t 
		+
	\beta_{1, t+1} \big(\gamma_{t+1} \bDelta_t - \EE \bDelta_t \big),\label{eq:epsilon_recurse}
\end{align}
where the expectation in the last term is taken over the randomness in $\bxi_{t+1}$. Taking squared norm over both sides of \eqref{eq:epsilon_recurse} and then take expectation over $\bxi_{t+1}$, we have
\begin{align}
	\EE \|\bvarepsilon_{t+1}\|_2^2 
		&=
	\EE \|(1 - \beta_{1, t+1})\big(\nabla f(\xb_{t+1}, \bxi_{t+1}) - \nabla F(\xb_{t+1})\big)
		+
	\beta_{1, t+1} \bvarepsilon_t 
		+
	\beta_{1, t+1} \big(\gamma_{t+1} \bDelta_t - \EE \bDelta_t \big)\|_2^2\notag
		\\&=
	\EE \|(1 - \beta_{1, t+1})\big(\nabla f(\xb_{t+1}, \bxi_{t+1}) - \nabla F(\xb_{t+1})\big)
		+
	\beta_{1, t+1} \bvarepsilon_t 
		+
	\beta_{1, t+1} \big( \bDelta_t - \EE \bDelta_t+ (\gamma_{t+1} - 1) \bDelta_t\big)\|_2^2\notag
		\\&=
	\underbrace{\EE \|(1 - \beta_{1, t+1})\big(\nabla f(\xb_{t+1}, \bxi_{t+1}) - \nabla F(\xb_{t+1})\big)
		+
	\beta_{1, t+1} \bvarepsilon_t 
		+
	\beta_{1, t+1} \big( \bDelta_t - \EE \bDelta_t \big)\|_2^2 }_{\mbox{$I$}}\notag
		\\&+ \underbrace{\beta_{1, t+1}^2 (\gamma_{t+1} - 1)^2 \EE \|\bDelta_t \|_2^2}_{\mbox{$II$}}\notag
			\\&-
	2(1 - \gamma_{t+1})\beta_{1, t+1}\underbrace{\EE \bigg\langle \bDelta_t , (1 - \beta_{1, t+1})\big(\nabla f(\xb_{t+1}, \bxi_{t+1}) - \nabla F(\xb_{t+1})\big)
		+
	\beta_{1, t+1} \bvarepsilon_t 
		+
	\beta_{1, t+1} \big( \bDelta_t - \EE \bDelta_t\big)\bigg\rangle}_{\mbox{$III$}}.\label{eq:gamma_recursion}
\end{align}

For $I$, we observe that $\bvarepsilon_t$ is independent of $\bxi_{t+1}$ and the expectations of $\nabla f(\xb_{t+1}, \bxi_{t+1}) - \nabla F(\xb_{t+1})$ and $\bDelta_t - \EE \bDelta_t$ are all $0$. Therefore 
\begin{align}
	I
		&=
	 \EE \|(1 - \beta_{1, t+1})\big(\nabla f(\xb_{t+1}, \bxi_{t+1}) - \nabla F(\xb_{t+1})\big)
		+
	\beta_{1, t+1} \big( \bDelta_t - \EE \bDelta_t \big)\|_2^2 
		+
	\beta_{1, t+1}^2 \EE \|\bvarepsilon_t\|_2^2\notag
		\\&\leq 
	2 (1 - \beta_{1, t+1})^2 \EE \|\nabla f(\xb_{t+1}, \bxi_{t+1}) - \nabla F(\xb_{t+1})\|_2^2
		+
	2 \beta_{1, t+1}^2 \EE \|\bDelta_t - \EE \bDelta_t\|_2^2
		+
	\beta_{1, t+1}^2 \EE \|\bvarepsilon_t\|_2^2.\label{eq:mid_2}
\end{align}

	Minimizing $II - 2 (1 - \gamma_{t+1}) \beta_{1, t+1} III$ over $\gamma_{t+1}$ in \eqref{eq:gamma_recursion}, we know that when 
	\begin{align*}
		1 - \gamma_{t+1} 
			&=\frac{ \EE\bigg\langle \bDelta_t , (1 - \beta_{1, t+1})\big(\nabla f(\xb_{t+1}, \bxi_{t+1}) - \nabla F(\xb_{t+1})\big)
		+
	\beta_{1, t+1} \bvarepsilon_t 
		+
	\beta_{1, t+1} \big( \bDelta_t - \EE \bDelta_t\big)\bigg\rangle}{\beta_{1, t+1}\EE \|\bDelta_t\|_2^2},
	\end{align*}
$II - 2 (1 - \gamma_{t+1}) \beta_{1, t+1} III$ reaches optimality at 
\begin{small}
	\begin{align*}
II - 2 (1 - \gamma_{t+1}) \beta_{1, t+1} III = - \frac{ \Bigg(\EE\bigg\langle \bDelta_t , {\color{blue}(1 - \beta_{1, t+1})\big(\nabla f(\xb_{t+1}, \bxi_{t+1}) - \nabla F(\xb_{t+1})\big)
		+
	\beta_{1, t+1} \bvarepsilon_t} 
		+
	\beta_{1, t+1} \big( \bDelta_t - \EE \bDelta_t\big)\bigg\rangle\Bigg)^2}{\EE \|\bDelta_t\|_2^2}.
\end{align*}
\end{small}
Using $G_t$ to represent
	\begin{align*}
{\color{blue}G_{t+1} := (1 - \beta_{1, t+1}){\color{black}\EE\bigg\langle \bDelta_t,} \nabla f(\xb_{t+1}, \bxi_{t+1}) - \nabla F(\xb_{t+1}){\color{black}\bigg\rangle}+\beta_{1, t+1} {\color{black}\EE 
\bigg\langle \bDelta_t},\bvarepsilon_t {\color{black}\bigg\rangle}},
	\end{align*}
and defining
\begin{align*}
    A_{t+1}
        :=
    \frac{\Big({\color{blue}G_{t+1}} + \beta_{1, t+1} \big(\EE \|\bDelta_t\|_2^2 - \|\EE \bDelta_t\|_2^2\big) \Big)}{\EE \|\bDelta_t\|_2^2}.
\end{align*}
We have
\begin{align*}
    II - 2(1 - \gamma_{t+1})\beta_{1, t+1}III
        &=
    \EE \|\bDelta_t\|_2^2 
    \Bigg(
        \beta_{1, t+1} (1 - \gamma_{t+1})
            -
        A_{t+1}
    \Bigg)^2 
    - 
    \EE \|\bDelta_t\|_2^2 A_{t+1}^2
\end{align*}
and when \begin{align}
    \gamma_{t+1} = 1 - \frac{\Big(G_{t+1} + \beta_{1, t+1} \big(\EE \|\bDelta_t\|_2^2 - \|\EE \bDelta_t\|_2^2\big) \Big)}{\beta_{1, t+1} \EE \|\bDelta_t\|_2^2}
        =
    \frac{\Big(\|\EE \bDelta_t\|_2^2\big) - G_{t+1}  \Big)}{\beta_{1, t+1} \EE \|\bDelta_t\|_2^2}\label{eq:gamma_choice}
    .
\end{align}
\begin{align*}
    II - 2(1 - \gamma_{t+1})\beta_{1, t+1}III
        &=
    - 
    \EE \|\bDelta_t\|_2^2 A_{t+1}^2.
\end{align*}
Defining 
\begin{align*}
    M_{t+1} 
        :=
    \EE \|\bDelta_t\|_2^2 A_{t+1}^2
        -
     \EE \|\bDelta_t\|_2^2 
    \Bigg(
        \beta_{1, t+1} (1 - \gamma_{t+1})
            -
        A_{t+1}
    \Bigg)^2 
\end{align*}
we conclude that 
\begin{align*}
&\min_{\{\gamma_i\}_{i=1, ..., t}} \EE \|\bvarepsilon_{t+1}\|_2^2
	\\&\leq 
	\EE \|I\|_2^2 - M_{t+1}
	\\&\leq 
2 (1 - \beta_{1, t+1})^2 \EE \|\nabla f(\xb_{t+1}, \bxi_{t+1}) - \nabla F(\xb_{t+1})\|_2^2
		+
	2 \beta_{1, t+1}^2 \EE \|\bDelta_t - \EE \bDelta_t\|_2^2
		+
	\beta_{1, t+1}^2 \EE \|\bvarepsilon_t\|_2^2- M_{t+1}
		\\&\leq 
	2(1 - \beta_{1, t+1})^2 \sigma^2 + 2 \beta_{1, t+1}^2 L^2 \|\xb_{t+1} - \xb_t\|_2^2 + \beta_{1, t+1}^2 \EE \|\bvarepsilon_t\|_2^2- M_{t+1}
    .
\end{align*}
where in the last inequality we utilized the fact that $\EE \|\bDelta_t - \EE \bDelta_t\|_2^2 \leq \EE \|\bDelta_t\|_2^2$ and the $L$-smoothness of $f$.
Rearranging the terms finishes the proof.
\end{proof}

\subsection{Proof of Lemma~\ref{lem:function_value_gap_no_decay} and \ref{lem:function_value_gap}}\label{sec:lem:function_value_gap}
\begin{proof}[Proof of Lemma~\ref{lem:function_value_gap_no_decay}]
Given  the L-smoothness of $F(\xb)$ in Assuption \ref{assum:L-smooth}, we have
\begin{align}
    F(\xb_{t+1})&\le F(\xb_t)+\left\langle\nabla F(\xb_t),\xb_{t+1}-\xb_t\right\rangle+\frac{L}{2}\cdot\|\xb_{t+1}-\xb_t\|_2^2\notag
    	\\
    &=F(\xb_t)+\left\langle\mb_t,\xb_{t+1}-\xb_t\right\rangle
    	+
\left\langle\nabla F(\xb_t)-\mb_t, \xb_{t+1}
		-
	\xb_t\right\rangle
	+
\frac{L}{2}\cdot\|\xb_{t+1}-\xb_t\|_2^2 \label{eq:function_value_gap}
  .
\end{align}
By definition of $\mb_t$ and the update rule of $\xb_{t+1}$ in Algorithm~\ref{varga_meta}, we have
\begin{align}
	\langle \mb_t, \xb_t - \xb_{t+1} \rangle
		&=
	\big \langle 
		\frac{1}{\eta_t} \cdot \Hb_t [\xb_t - \xb_{t+1}], \xb_t - \xb_{t+1}
	\big \rangle\notag
		\\&\geq 
	\frac{\rho}{\eta_t} \|\xb_t - \xb_{t+1}\|_2^2. \label{eq:cross}
\end{align}
Here the inequality follows from Assumption \ref{assum:H-bound}. 
\end{proof}
\begin{proof}[Proof of Lemma \ref{lem:function_value_gap}]
Bringing \eqref{eq:cross} into \eqref{eq:function_value_gap}, we obtain
\begin{align*}
	F(\xb_{t+1}) 
		&\leq
	F(\xb_t) - \frac{\rho}{\eta_t} \|\xb_t - \xb_{t+1} \|_2^2
	+
\left\langle\nabla F(\xb_t)-\mb_t, \xb_{t+1}
		-
	\xb_t\right\rangle
	+
\frac{L}{2}\cdot\|\xb_{t+1}-\xb_t\|_2^2
	\\&\leq 
F(\xb_t) - \frac{\rho}{\eta_t} \|\xb_t - \xb_{t+1} \|_2^2
	+
\frac{\eta_t}{\rho} \|\nabla F(\xb_t) - \mb_t \|_2^2
	+
\frac{\rho}{4\eta_t} \| \xb_{t+1} - \xb_t \|_2^2
	+
\frac{L}{2}\cdot\|\xb_{t+1}-\xb_t\|_2^2
	\\&\leq 
F(\xb_t) - \frac{\rho}{2\eta_t} \|\xb_t - \xb_{t+1} \|_2^2
	+
\frac{\eta_t}{\rho} \|\nabla F(\xb_t) - \mb_t \|_2^2
,
\end{align*}
The second inequality follows from applying both the Cauchy-Schwarz and Young's inequalities. Moreover, the final inequality results from selecting $\eta_t$ to satisfy $\eta_t < \frac{\rho}{2L}$ . This completes our proof.

Bringing \eqref{eq:cross_term} in Lemma \ref{lem:cross_term} into \eqref{eq:function_value_gap}, we have
\begin{align*}
    F(\xb_{t+1})&\le F(\xb_t)+\left\langle\mb_t,\xb_{t+1}-\xb_t\right\rangle
    	+
\left\langle\nabla F(\xb_t)-\mb_t, \xb_{t+1}
		-
	\xb_t\right\rangle
	+
\frac{L}{2}\cdot\|\xb_{t+1}-\xb_t\|_2^2
	\\&\leq 
 F(\xb_t)
    	+
\left\langle\nabla F(\xb_t)-\mb_t, \xb_{t+1}
		-
	\xb_t\right\rangle
	+
\frac{L}{2}\cdot\|\xb_{t+1}-\xb_t\|_2^2
	\\&-
\frac{\rho (1 - \eta_t \lambda)}{\eta_t} \|\xb_t - \xb_{t+1}\|_2^2 
	-
	\frac{\lambda}{2} \big[\xb_t^\top\Hb_{t+1}\xb_t
		-
	\xb_{t+1}^\top\Hb_{t}\xb_{t+1}
		 -\sqrt{2(1-\beta_{2,t})} D^2\big].
\end{align*}
Taking $\eta_t < \min\{(4\lambda)^{-1}, \rho \cdot (2L)^{-1}\}$, we have 
\begin{align*}
-
\frac{\rho (1 - \eta_t \lambda)}{\eta_t} \|\xb_t - \xb_{t+1}\|_2^2 
	+
\frac{L}{2}\cdot\|\xb_{t+1}-\xb_t\|_2^2
	\leq 
\Big(- \frac{3\rho}{4 \eta_t} + \frac{L}{2}\Big) \|\xb_{t+1}-\xb_t\|_2^2
	\leq 
- \frac{\rho}{2\eta_t} \|\xb_{t+1}-\xb_t\|_2^2.
\end{align*} 
Therefore, 
\begin{align*}
    F(\xb_{t+1})&\leq
 F(\xb_t)
    	+
\left\langle\nabla F(\xb_t)-\mb_t, \xb_{t+1}
		-
	\xb_t\right\rangle
	-
\frac{\rho}{2\eta_t} \|\xb_t - \xb_{t+1}\|_2^2 
	\\&
	-
	\frac{\lambda}{2} \big[\xb_t^\top\Hb_{t+1}\xb_t
		-
	\xb_{t+1}^\top\Hb_{t}\xb_{t+1}
		 -\sqrt{2(1-\beta_{2,t})} D^2\big].
\end{align*}
By Cauchy-Schwarz inequality and Young's inequality, we have
\begin{align*}
    \left\langle\nabla F(\xb_t)-\mb_t, \xb_{t+1}-\xb_t\right\rangle\le\|\nabla F(\xb_t)-\mb_t\|_2\cdot \|\xb_{t+1}-\xb_t\|_2\le\frac{\eta_t}{\rho}\|\nabla F(\xb_t)-\mb_t\|_2^2+\frac{\rho}{4\eta_t}\|\xb_{t+1}-\xb_t\|_2^2.
\end{align*}
Therefore, we conclude that
\begin{align*}
    F(\xb_{t+1})&\leq
 F(\xb_t)
    	+
\frac{\eta_t}{\rho}\|\nabla F(\xb_t)-\mb_t\|_2^2+\frac{\rho}{4\eta_t}\|\xb_{t+1}-\xb_t\|_2^2
	-
\frac{\rho}{2\eta_t} \|\xb_t - \xb_{t+1}\|_2^2 
	\\&
	-
	\frac{\lambda}{2} \big[\xb_t^\top\Hb_{t+1}\xb_t
		-
	\xb_{t+1}^\top\Hb_{t}\xb_{t+1}
		 -\sqrt{2(1-\beta_{2,t})} D^2\big].
	\\&=
 F(\xb_t)
    	+
\frac{\eta_t}{\rho}\|\nabla F(\xb_t)-\mb_t\|_2^2-\frac{\rho}{4\eta_t}\|\xb_{t+1}-\xb_t\|_2^2
	-
	\frac{\lambda}{2} \big[\xb_t^\top\Hb_{t+1}\xb_t
		-
	\xb_{t+1}^\top\Hb_{t}\xb_{t+1}\big]
		 +\frac{\lambda}{2}\sqrt{2(1-\beta_{2,t})} D^2.
\end{align*}
Rearranging terms finishes the proof.
\end{proof}

\subsection{Proof of Lemma \ref{lem:eta_diff}}
\begin{proof}[Proof of Lemma \ref{lem:eta_diff}]
By the definition of $\eta_t$, it holds that
\begin{align*}
    \frac{1}{\eta_t}-\frac{1}{\eta_{t-1}}=(s+t)^{1/3}-(s+t-1)^{1/3}\le\frac{1}{3(s+t-1)^{2/3}}\le\frac{1}{(s+t)^{2/3}}=\eta_t^2\le\eta_t,
\end{align*}
where the first inequality follows by the concavity of $h_2(\xb)=x^{1/3}$, and the second inequality follows by $s\ge 1$, which implied that $s + t\ge 2$ and $27(s+t-1)^2\ge (s+t)^2$. This finishes the proof.
\end{proof}

\section{Additional Experiment Results}\label{sec:add_exp}
\subsection{Supplementary Results for the Main Experiments}\label{sec:supp_main_exp}
Here we display the supplementary results for the experiments in Section \ref{experiments}. The training and validation losses as well as wall-clock time curves for small and medium models are displayed in Figures \ref{fig:curve_small} and \ref{fig:curve_medium}. And the 0-shot and 5-shot evaluation results on different downstream tasks for small, medium and large models are listed in Tables~\ref{tab:small-0},\ref{tab:medium-0},\ref{tab:large-0} and Tables~\ref{tab:small-5},\ref{tab:medium-5}, respectively. It can be observed that different sizes of models trained with MARS-AdamW and MARS-Lion can achieve better performances than baseline optimization methods with respect to cross-entropy loss, time efficiency as well as downstream task performances.

\begin{table*}[htb!]
\caption{The evaluation results of small models pre-trained using the OpenWebText dataset (0-shot with lm-evaluation-harness). The best scores in each column are \textbf{bolded}. Abbreviations: HellaSwag = HellaSwag, WG = WinoGrande.}
\label{tab:small-0}
\centering
\small
\sc
\resizebox{\linewidth}{!}{
\begin{tabular}{ccccccccccc}
\hline
\textbf{Method} & \multicolumn{1}{c}{\textbf{ARC-E}} & \multicolumn{1}{c}{\textbf{ARC-C}} & \multicolumn{1}{c}{\textbf{BoolQ}} & \multicolumn{1}{c}{\textbf{HellaSwag}} & \multicolumn{1}{c}{\textbf{OBQA}} & \multicolumn{1}{c}{\textbf{PIQA}} & \multicolumn{1}{c}{\textbf{WG}} & \multicolumn{1}{c}{\textbf{MMLU}} & \multicolumn{1}{c}{\textbf{SciQ}} & \multicolumn{1}{c}{\textbf{Avg.}} \\ \hline
AdamW           & \textbf{41.37} & 22.27          & 55.02          & 31.73              & 27.80         & \textbf{63.00}& \textbf{52.01}                  & 22.97         & \textbf{67.50}& 42.63         \\
Lion            & 40.15          & 21.93          & \textbf{59.72} & 31.72              & 26.00         & 62.95         & 51.07       & 22.92         & 64.80         & 42.36         \\
Muon            & 39.73          & 23.55          & 57.31          & 30.84              & 25.00         & 61.48         & 50.36       & 22.89         & 62.70         & 41.54         \\ \hline
MARS-AdamW      & 40.70          & 23.63          & 59.17          & \textbf{32.46}     & 27.00         & 61.92         & 51.22       & \textbf{22.98}& 67.40         & \textbf{42.94}\\
MARS-Lion       & 40.78          & \textbf{23.72} & 51.74          & 31.59              & \textbf{29.20}& 62.68         & 51.30       & 22.94         & 65.50         & 42.16         \\ \hline
\end{tabular}
}
\end{table*}

\begin{table*}[htb!]
\caption{The evaluation results of medium models pre-trained using the OpenWebText dataset (0-shot with lm-evaluation-harness). The best scores in each column are \textbf{bolded}. Abbreviations: HellaSwag = HellaSwag, WG = WinoGrande.}
\label{tab:medium-0}
\centering
\small
\sc
\resizebox{\linewidth}{!}{
\begin{tabular}{ccccccccccc}
\hline
\textbf{Method} & \textbf{ARC-E} & \textbf{ARC-C} & \textbf{BoolQ} & \textbf{HellaSwag} & \textbf{OBQA}  & \textbf{PIQA}  & \textbf{WG}    & \textbf{MMLU}  & \textbf{SciQ}  & \textbf{Avg.}  \\ \hline
AdamW           & 43.43          & 23.98          & 58.13          & 37.76              & 27.20          & 65.56          & 52.49          & 22.80          & 67.60          & 44.33          \\
Lion            & 44.11          & 25.43          & \textbf{60.06} & 37.64              & \textbf{31.40} & 66.05          & 53.20          & 22.97          & 69.50          & \textbf{45.60} \\
Muon            & 43.01          & 24.57          & 58.93          & 35.85              & 30.60          & 64.85          & 51.54          & 22.89          & 66.70          & 44.33          \\ \hline
MARS-AdamW      & 43.94          & \textbf{25.85} & 54.50          & \textbf{39.88}     & 30.60          & \textbf{66.87} & 52.01          & 22.97          & \textbf{72.10} & 45.41          \\
MARS-Lion       & \textbf{45.33} & 24.74          & 55.84          & 38.80              & 30.60          & 64.96          & \textbf{53.83} & \textbf{23.33} & 68.70          & 45.13          \\ \hline
\end{tabular}
}
\end{table*}

\begin{table*}[htb!]
\caption{The evaluation results of large models pre-trained using the OpenWebText dataset (0-shot with lm-evaluation-harness). The best scores in each column are \textbf{bolded}. Abbreviations: HellaSwag = HellaSwag, WG = WinoGrande.}
\label{tab:large-0}
\centering
\small
\sc
\resizebox{\linewidth}{!}{
\begin{tabular}{ccccccccccc}
\hline
\textbf{Method} & \textbf{ARC-E} & \textbf{ARC-C} & \textbf{BoolQ} & \textbf{HellaSwag} & \textbf{OBQA}  & \textbf{PIQA}  & \textbf{WG}    & \textbf{MMLU}  & \textbf{SciQ}  & \textbf{Avg.}  \\ \hline
AdamW           & 46.30          & 26.19          & 59.91          & 41.70              & 31.40          & 68.12          & 51.46          & 23.10          & 72.80          & 46.78          \\
Lion            & 47.73          & 26.45          & 57.09          & 42.43              & 30.20          & 68.01          & 54.38          & 23.41          & \textbf{74.00} & 47.08          \\
Muon            & 45.45          & 26.37          & 59.69          & 40.28              & 31.00          & 67.08          & 52.41          & 23.26          & 66.70          & 45.80          \\ \hline
MARS-AdamW      & \textbf{48.11} & 25.77          & \textbf{62.26} & \textbf{44.64}     & \textbf{32.60} & 68.06          & \textbf{56.04} & 23.98          & 73.00          & \textbf{48.27} \\
MARS-Lion       & 47.77          & \textbf{26.71} & 59.45          & 43.07              & 31.20          & \textbf{68.39} & 55.72          & \textbf{24.53} & 72.50          & 47.70          \\ \hline
\end{tabular}
}
\end{table*}

\begin{table*}[htb!]
\caption{The evaluation results of small models pre-trained using the OpenWebText dataset (5-shot with lm-evaluation-harness). The best scores in each column are \textbf{bolded}. Abbreviations: HellaSwag = HellaSwag, WG = WinoGrande.}
\label{tab:small-5}
\centering
\small
\sc
\resizebox{\linewidth}{!}{
\begin{tabular}{ccccccccccc}
\hline
\textbf{Method} & \textbf{ARC-E} & \textbf{ARC-C} & \textbf{BoolQ} & \textbf{HellaSwag} & \textbf{OBQA}  & \textbf{PIQA}  & \textbf{WG}    & \textbf{MMLU}  & \textbf{SciQ}  & \textbf{Avg.}  \\ \hline
AdamW           & 41.75          & 22.78          & 54.04          & 32.33              & \textbf{28.20} & \textbf{63.38} & 52.57          & \textbf{26.88} & 76.00          & 44.21          \\
Lion            & 42.21          & 22.70          & 55.41          & 31.82              & 24.80          & 62.40          & \textbf{53.04} & 24.63          & 74.80          & 43.53          \\
Muon            & 41.50          & 23.46          & 48.78          & 30.48              & 24.60          & 61.26          & 52.01          & 24.63          & 67.20          & 41.55          \\ \hline
MARS-AdamW      & \textbf{45.24} & \textbf{24.66} & \textbf{56.97} & \textbf{32.76}     & 25.60          & 62.40          & 50.43          & 25.78          & \textbf{76.70} & \textbf{44.51} \\
MARS-Lion       & 43.06          & 22.78          & 55.66          & 32.17              & 26.20          & 62.24          & 50.59          & 25.32          & 72.80          & 43.42          \\ \hline
\end{tabular}
}
\end{table*}

\begin{table*}[htb!]
\caption{The evaluation results of medium models pre-trained using the OpenWebText dataset (5-shot with lm-evaluation-harness). The best scores in each column are \textbf{bolded}. Abbreviations: HellaSwag = HellaSwag, WG = WinoGrande.}
\label{tab:medium-5}
\centering
\small
\sc
\resizebox{\linewidth}{!}{
\begin{tabular}{ccccccccccc}
\hline
\textbf{Method} & \textbf{ARC-E} & \textbf{ARC-C} & \textbf{BoolQ} & \textbf{HellaSwag} & \textbf{OBQA}  & \textbf{PIQA}  & \textbf{WG}    & \textbf{MMLU}  & \textbf{SciQ}  & \textbf{Avg.}  \\ \hline
AdamW           & 48.23          & 25.43          & 45.26          & 38.32              & 27.60          & 65.83 & 52.33          & 26.21          & 80.90          & 45.57          \\
Lion            & \textbf{49.16} & 24.49          & 58.32          & 38.09              & 30.00          & \textbf{66.05}          & 51.22          & \textbf{26.43} & 81.20          & 47.22          \\
Muon            & 47.56          & 24.49          & \textbf{58.56} & 36.10              & 29.20          & 65.13          & 52.72          & 25.15          & 73.10          & 45.78          \\ \hline
MARS-AdamW      & 48.99          & 25.60          & 52.11          & \textbf{40.02}     & \textbf{30.80} & 65.56          & \textbf{54.30} & 25.49          & \textbf{83.50} & \textbf{47.37} \\
MARS-Lion       & 49.03          & \textbf{25.77} & 51.59          & 39.11              & 29.80          & 65.51          & 53.59          & 24.85          & 81.40          & 46.74          \\ \hline
\end{tabular}
}
\end{table*}

\begin{figure*}[htb!]
    \centering
    \subfigure[Training Loss]{
		\label{train_small}
		\includegraphics[width=0.315\linewidth]{./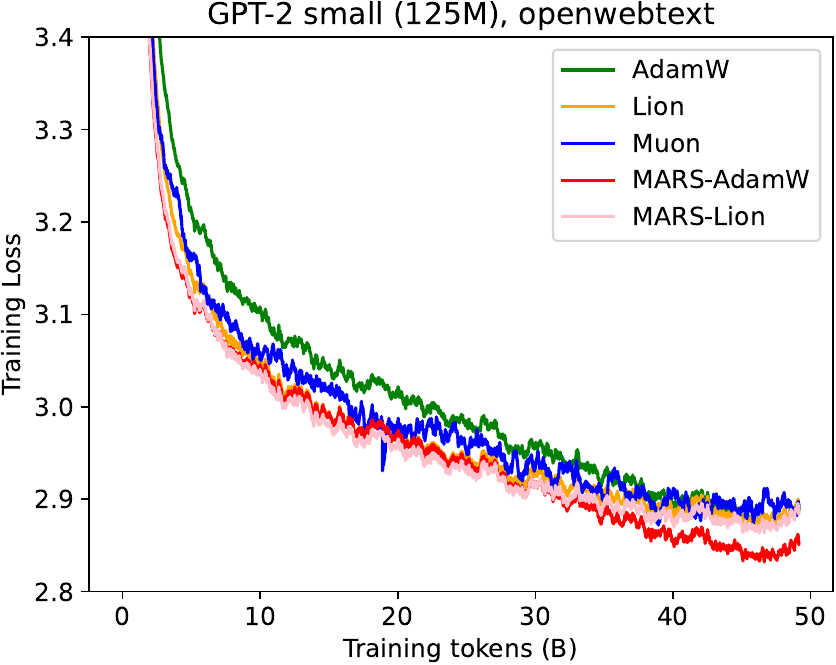}}
    \subfigure[Validation Loss]{
		\label{val_small}
		\includegraphics[width=0.315\linewidth]{./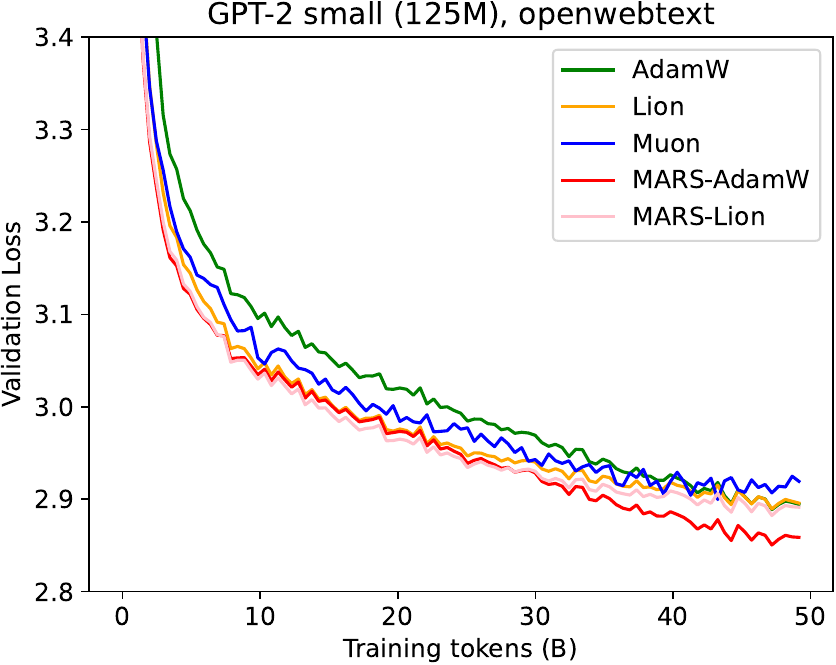}}
    \subfigure[Wall-clock time]{
		\label{time_small}
		\includegraphics[width=0.315\linewidth]{./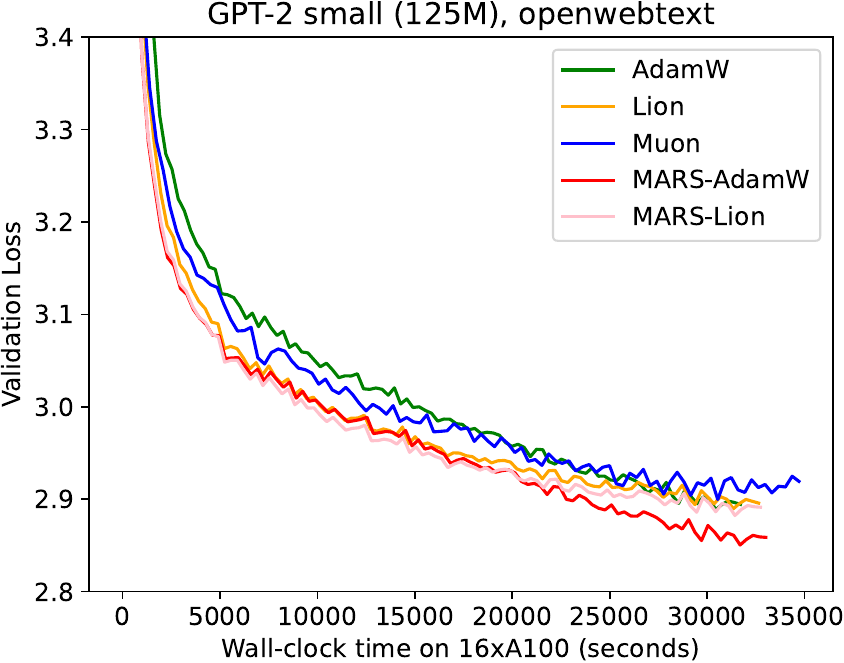}}
    \caption{The training and validation loss curves, plotted against both training tokens and wall-clock time on GPT-2 small model (125M).}
    \label{fig:curve_small}
\end{figure*}
\begin{figure*}[htb!]
    \centering
    \subfigure[Training Loss]{
		\label{train_medium}
		\includegraphics[width=0.315\linewidth]{./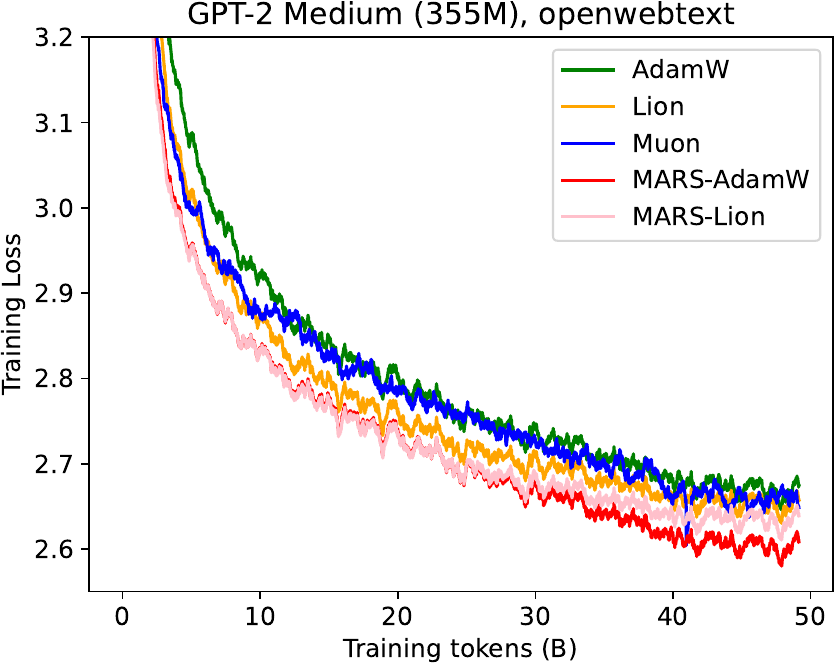}}
    \subfigure[Validation Loss]{
		\label{val_medium}
		\includegraphics[width=0.315\linewidth]{./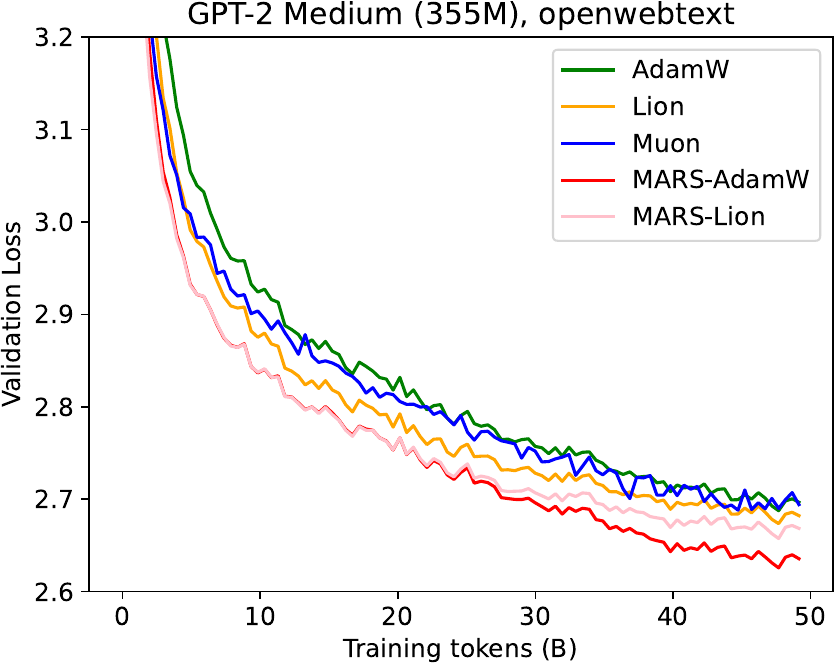}}
    \subfigure[Wall-clock time]{
		\label{time_medium}
		\includegraphics[width=0.315\linewidth]{./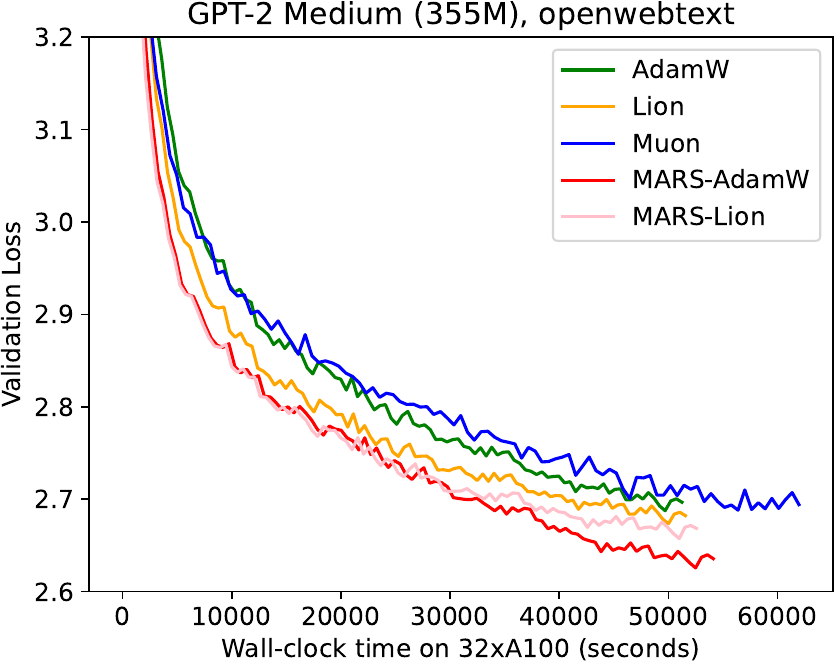}}
    \caption{The training and validation loss curves, plotted against both training tokens and wall-clock time on GPT-2 medium model (355M).}
    \label{fig:curve_medium}
\end{figure*}
\subsection{\text{\method} and \text{\methodap}.} 
\label{sec:approx}
We then conduct experiments to compare the performance of \text{\method} and \text{\methodap} (\text{\method}-AdamW instantiation) on GPT-2 small and medium models, the training and validation loss curves are shown in Figures~\ref{fig:exact-train} and~\ref{fig:exact-valid}.
Models trained with \text{\method} exhibit consistently better performance than those trained with \text{\methodap}. This suggests that:
(a) The exact version, which employs the variance reduction formulation, is more fundamental than the approximate version.
(b) The approximate version serves as a practical alternative in scenarios where computational efficiency is a priority, as it incurs only minimal performance loss. However, in settings where maximizing validation accuracy is crucial, the exact version is recommended.
\begin{figure}[htb!]
    \centering
    \includegraphics[width=0.45\linewidth]{./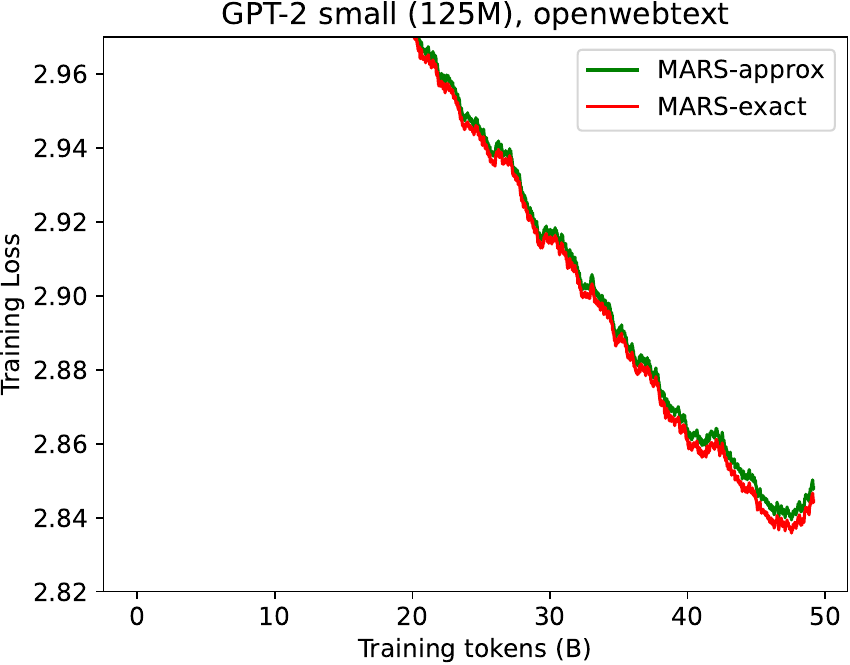}
    \includegraphics[width=0.45\linewidth]{./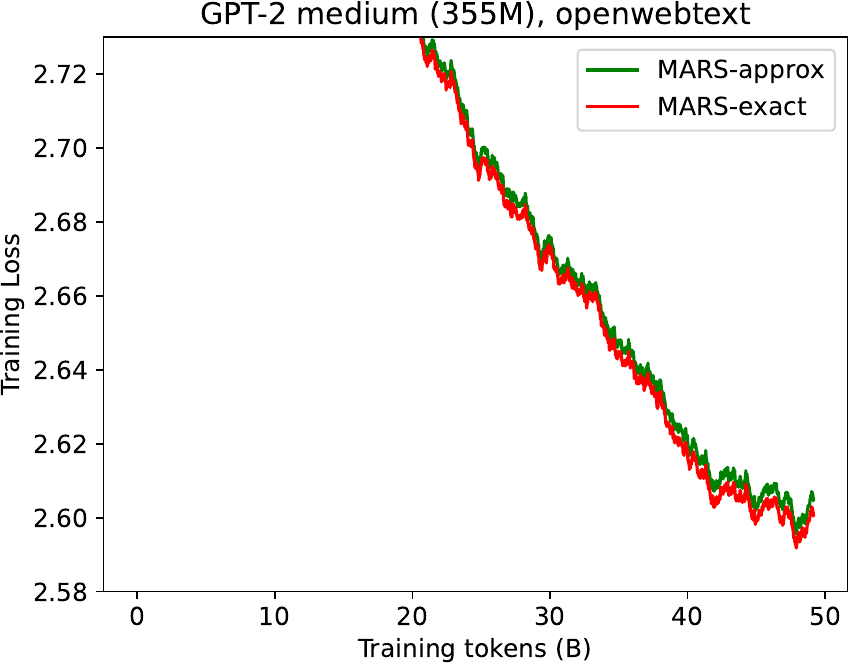}
    \caption{Training loss curves for MARS-AdamW and MARS-AdamW-approx on GPT-2 small (125M, left) and medium (355M, right), pretrained with OpenWebText dataset and plotted against training tokens.}
    \label{fig:exact-train}
\end{figure}
\begin{figure}[htb!]
    \centering
    \includegraphics[width=0.45\linewidth]{./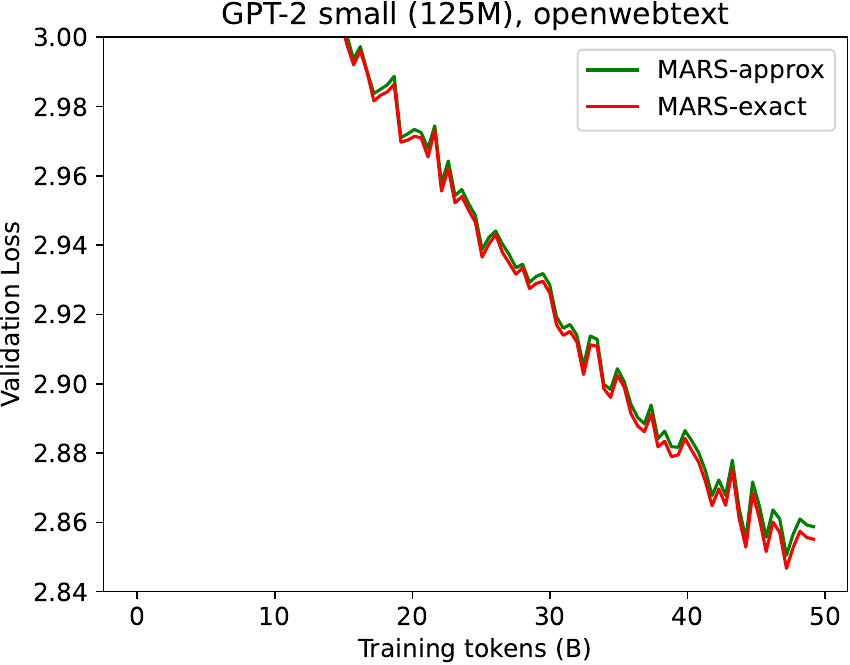}
    \includegraphics[width=0.45\linewidth]{./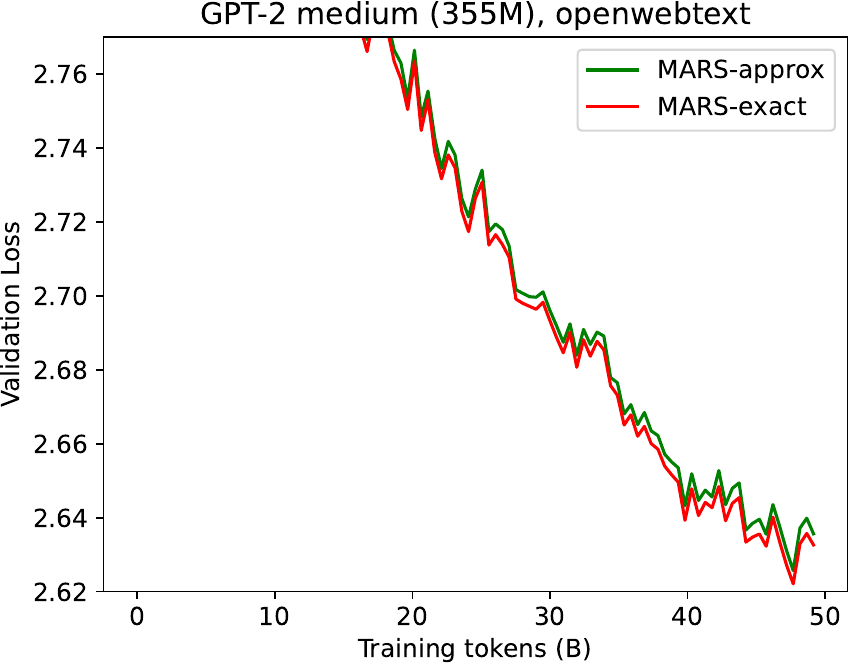}
    \caption{Validation loss curves for MARS-AdamW and MARS-AdamW-approx on GPT-2 small (125M, left) and medium (355M, right), pretrained with OpenWebText dataset.}
    \label{fig:exact-valid}
\end{figure}

\subsection{Experiments on FineWeb-Edu 100B Dataset}\label{sec:fw}
FineWeb-Edu dataset~\citep{lozhkov2024fineweb-edu} is a high-quality dataset based on well-filtered educational web pages. To better investigate the efficiency of our algorithm, we also use FineWeb-Edu 100B, a subset of FineWeb-Edu with around 100B tokens to train GPT-2 small (125M) and XL (1.5B, with the same learning rates as GPT-2 large models) models with optimizers including AdamW, Muon and MARS-AdamW-approx. We leave around 0.1B tokens for validation and other tokens for training. The training and evaluation curves are shown in Figures~\ref{fig:fineweb-train} and~\ref{fig:fineweb-valid}. It can be seen that our algorithms can also achieve better performances even with different datasets. For a comprehensive investigation, we evaluate these models on metrics same as experiments in Section~\ref{sec:exp-result}, and the results are shown in Tables~\ref{tab:small-0-fineweb} and~\ref{tab:xl-0-fineweb}. We also compare the results with the open-source GPT-2 models on Hugging Face~\cite{radford2019language} (denoted as ``OpenAI-Comm.'' in the tables). Compared with Table~\ref{tab:small-0}, it can be observed that this dataset is actually better for the superior performances. However, models trained with our algorithm can also show advantages over baseline optimization approaches trained with such a high-quality dataset. 
\begin{table*}[htb!]
\caption{The evaluation results of small models pre-trained using the FineWeb-Edu 100B dataset (0-shot with lm-evaluation-harness). The best scores in each column are \textbf{bolded}. Abbreviations: HellaSwag = HellaSwag, WG = WinoGrande.}
\label{tab:small-0-fineweb}
\centering
\small
\sc
\resizebox{\linewidth}{!}{
\begin{tabular}{l ccccccccc |c}
\toprule
Method & ARC-E & ARC-C & BoolQ & HellaSwag & OBQA & PIQA & WG & MMLU & SciQ & Avg. \\
\midrule
OpenAI-Comm. & 39.48 & 22.70 & 48.72 & 31.14 & 27.20 & 62.51 & \textbf{51.62} & 22.92 & 64.40 & 41.19\\
AdamW &  51.43 & 26.54 & 55.78 & 36.26 & 30.60 & 64.53 & 50.36 & \textbf{24.49} & \textbf{71.50} & 45.72 \\
Muon & 47.85 & \textbf{27.56} & \textbf{57.16} & 33.46 & 31.60 & 63.66 & 51.30 & 23.17 & 67.30 & 44.78\\
\cmidrule{1-11}
MARS-AdamW  & \textbf{52.23} & 27.39 & 55.84 & \textbf{36.91} & \textbf{32.20} & \textbf{64.80} & 49.96 & 22.95 & 71.10 & \textbf{45.93}\\
\bottomrule
\end{tabular}
}
\end{table*}

\begin{table*}[htb!]
\caption{The evaluation results of XL models pre-trained using the FineWeb-Edu 100B dataset (0-shot with lm-evaluation-harness). The best scores in each column are \textbf{bolded}. Abbreviations: HellaSwag = HellaSwag, WG = WinoGrande.}
\label{tab:xl-0-fineweb}
\centering
\small
\sc
\resizebox{\linewidth}{!}{
\begin{tabular}{l ccccccccc |c}
\toprule
Method & ARC-E & ARC-C & BoolQ & HellaSwag & OBQA & PIQA & WG & MMLU & SciQ & Avg. \\
\midrule
OpenAI-Comm. & 51.05 & 28.50 & 61.77 & 50.89 & 32.00 & 70.51 & \textbf{58.33} & 25.24 & 76.00 & 50.48\\
AdamW &  \textbf{68.22} & 38.40 & 61.13 & 53.93 & 39.00 & 72.69 & 54.78 & \textbf{25.47} & 85.30 & 55.43\\
Muon & 64.18 & 36.52 & 58.38 & 51.83 & 37.40 & 72.03 & 55.56 & 24.93 & 81.90 & 53.64\\
\cmidrule{1-11}
MARS-AdamW  & 66.54 & \textbf{39.85} & \textbf{63.82} & \textbf{56.52} & \textbf{41.20} & \textbf{73.34} & 56.59 & 23.86 & \textbf{86.00} & \textbf{56.41}\\
\bottomrule
\end{tabular}
}
\end{table*}

\begin{figure}[htb!]
    \centering
    \includegraphics[width=0.45\linewidth]{./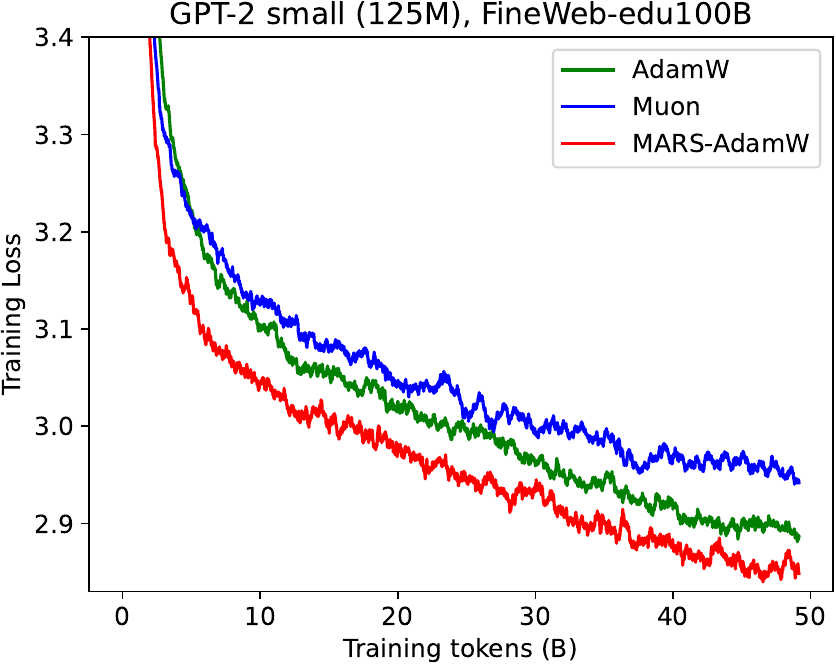}
    \includegraphics[width=0.45\linewidth]{./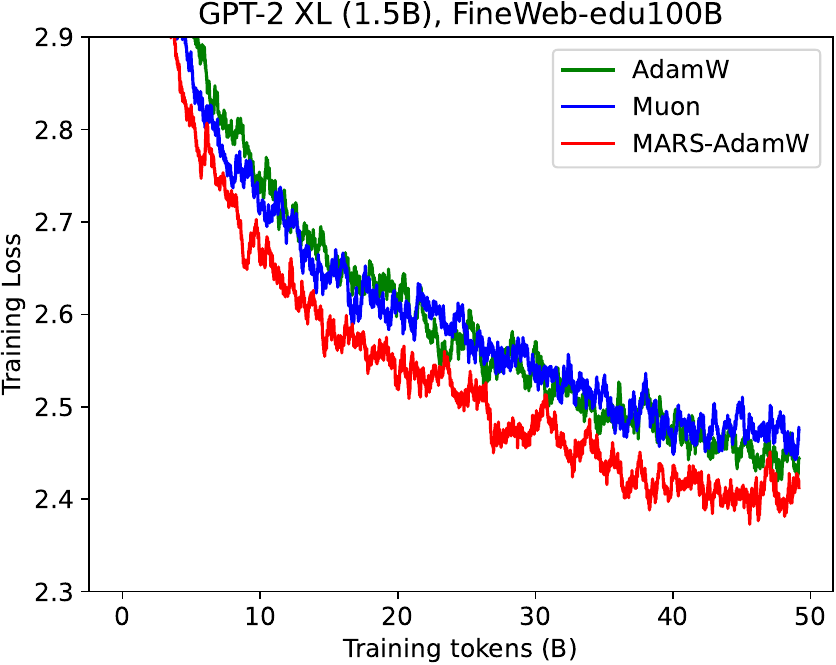}
    \caption{Training loss curves for AdamW, Muon and MARS-AdamW-approx on GPT-2 small (125M, left) and XL (1.5B, right), pretrained with FineWeb-edu 100B dataset and plotted against training tokens.}
    \label{fig:fineweb-train}
\end{figure}
\begin{figure}[htb!]
    \centering
    \includegraphics[width=0.45\linewidth]{./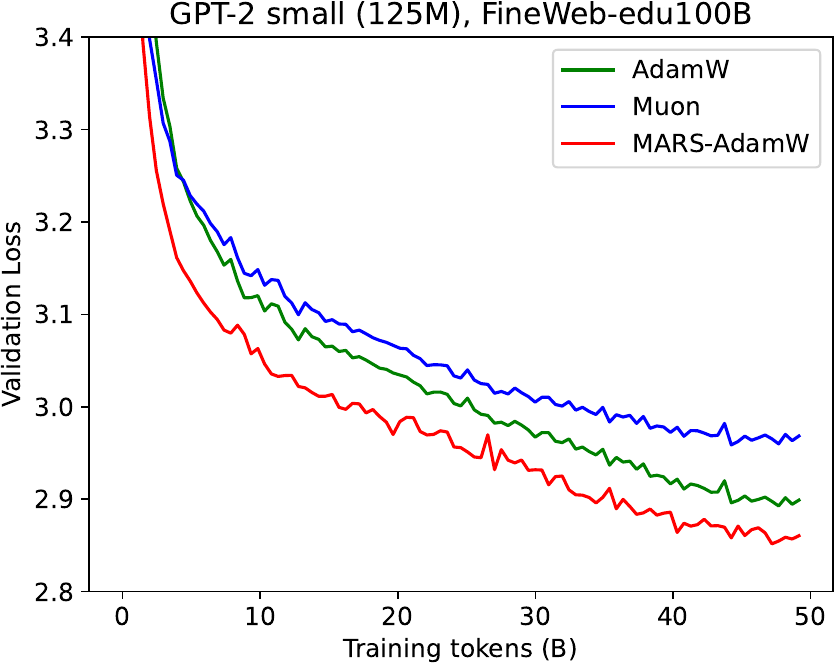}
    \includegraphics[width=0.45\linewidth]{./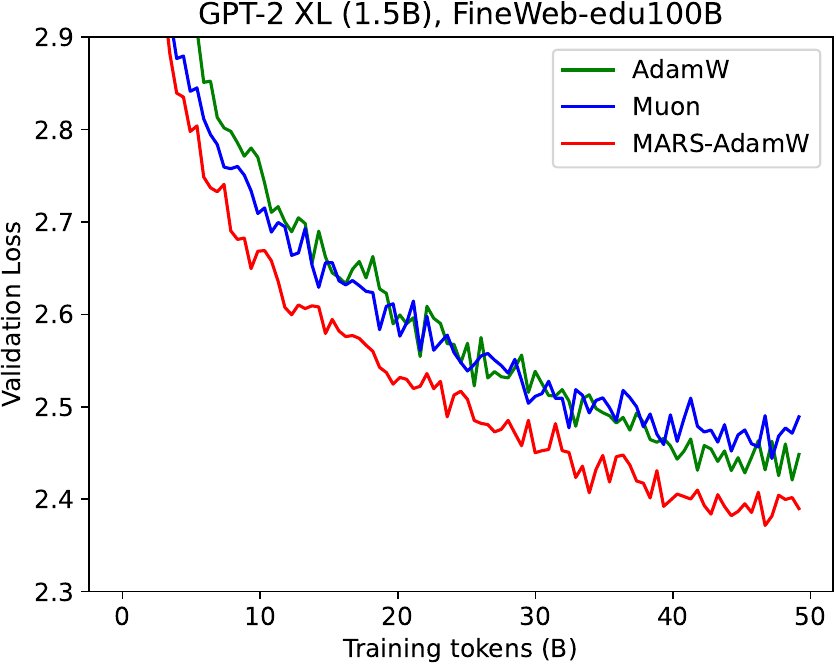}
    \caption{Validation loss curves for AdamW, Muon and MARS-AdamW-approx on GPT-2 small (125M, left) and XL (1.5B, right), pretrained with FineWeb-edu 100B dataset and plotted against training tokens.}
    \label{fig:fineweb-valid}
\end{figure}

\subsection{Computer Vision Experiments}\label{sec:cv}
We also carry out experiments on the classification task in the field of computer vision. We conduct experiments with ResNet-18 model~\citep{he2016deep} on the CIFAR-10 and CIFAR-100 datasets~\citep{krizhevsky2009learning} for \text{AdamW}, \text{Lion}, Shampoo\footnote{In practice, we use Distributed Shampoo~\citep{shi2023distributed} to facilitate the training of Shampoo.}, Muon, and variants of \text{\method} instantiation, following the setting in \citet{chen2018closing}. 

We do grid search to explore the best hyper-parameters for each of these optimization methods. We search over $\{10^{-5},...,10^{0}\}$ for the learning rate and $\{0, ..., 1.0\}$ for the weight decay. We set $\beta_1=0.9$ and search over $\{0.99, 0.999\}$ for $\beta_2$ for AdamW and Muon; fix $\beta_1=0.9$ and search $\beta_2$ over $\{0.99,0.999\}$ for Lion; search over $\{0.9,0.95\}$ for $\beta_1$ and $\{0.95, 0.99, 0.999\}$ for $\beta_2$ for Shampoo; and we fix the $(\beta_1,\beta_2)=(0.95,0.99)$ and $\gamma=0.025$ for \method~models. We train for 200 epochs with training batch size 128 on 1 NVIDIA A6000 GPU. And we also apply MultiStepLR scheduler so that the learning rate would decrease to 10\% of the original rate at the 100th epoch and to 1\% at the 150th epoch. We display the test loss and test accuracy for CIFAR-10 and CIFAR-100 datasets in Figures \ref{fig:cifar10} and \ref{fig:cifar100}, respectively. The results show that our algorithm can achieve better validation loss after the decay of learning rate and better test accuracy within the final stage of training.
\begin{figure}[htb!]
    \centering
    \subfigure[Test Loss]{
		\label{cifar10_test_loss}
		\includegraphics[width=0.45\linewidth]{./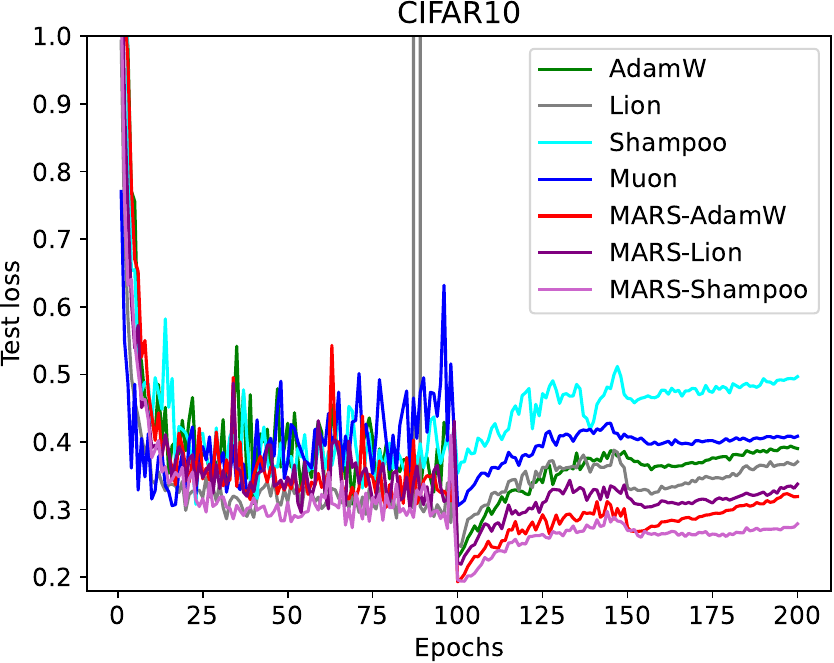}}
    \subfigure[Test Accuracy]{
		\label{cifar10_test_acc}
		\includegraphics[width=0.45\linewidth]{./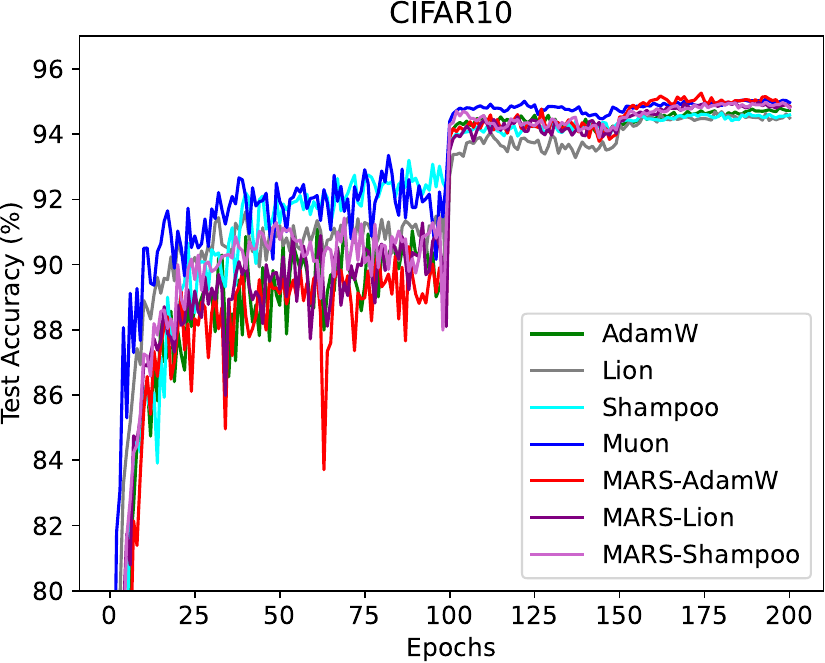}}
    \caption{The test loss and test accuracy for different optimizers on CIFAR-10 dataset.}
    \label{fig:cifar10}
\end{figure}
\begin{figure}[htb!]
    \centering
    \subfigure[Test Loss]{
		\label{cifar100_test_loss}
		\includegraphics[width=0.45\linewidth]{./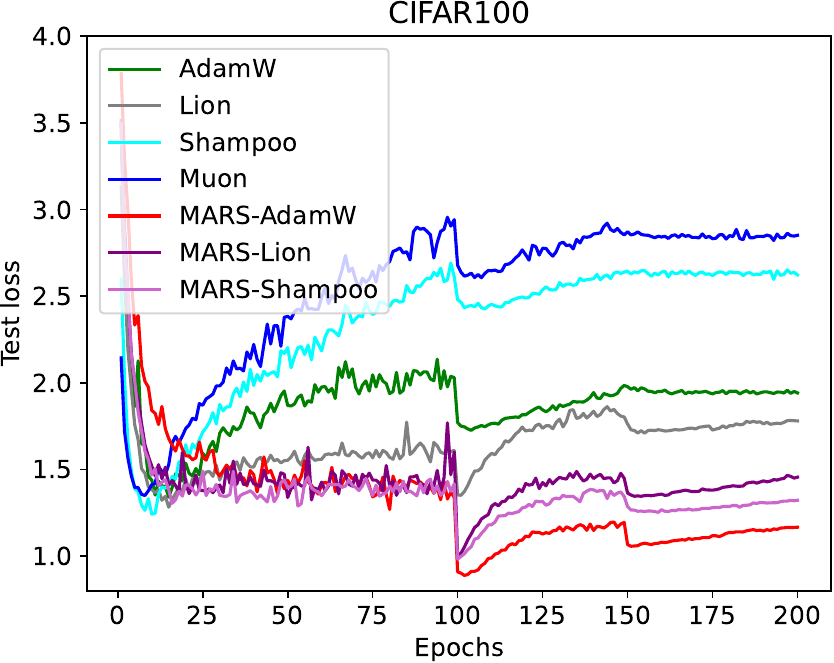}}
    \subfigure[Test Accuracy]{
		\label{cifar100_test_acc}
		\includegraphics[width=0.45\linewidth]{./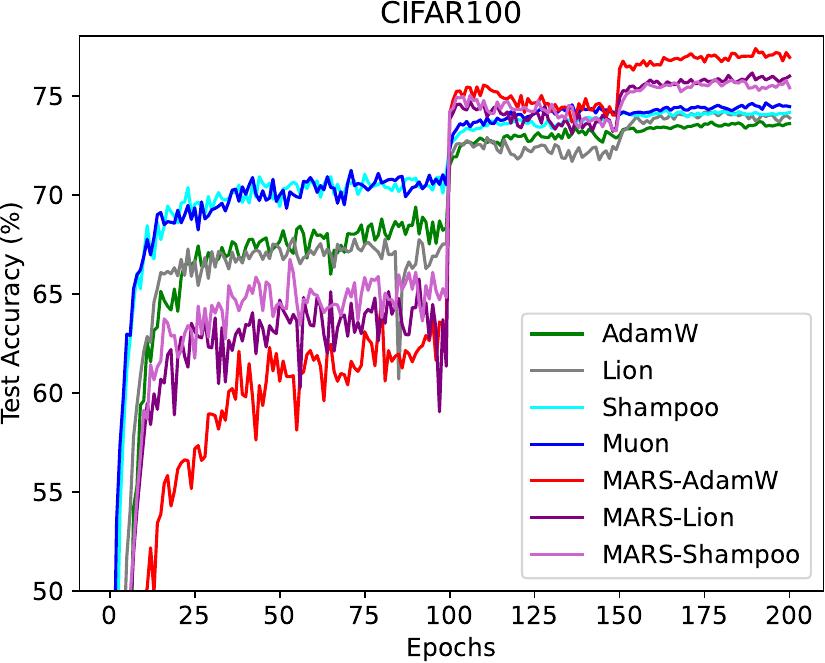}}
    \caption{The test loss and test accuracy for different optimizers on CIFAR-100 dataset.}
    \label{fig:cifar100}
\end{figure}

We also compare the performances among the baselines of AdamW, Lion and Shampoo without variance reduction, the approximate and the exact versions of MARS instantiations. The test loss and accuracy curves for CIFAR10 dataset are shown in Figures, and Figures, respectively. And the test loss and accuracy curves for CIFAR100 dataset are shown in Figures~\ref{fig:cifar10_test_loss_ablation}--\ref{fig:cifar10_test_acc_ablation}, and Figures~\ref{fig:cifar100_test_loss_ablation}--\ref{fig:cifar100_test_acc_ablation}, respectively. Moreover, we list the best test losses and accuracies in Table~\ref{tab:cv-evaluation}. It can be observed that the exact versions perform a little better than the approximate versions, but much better than the baseline approaches, showing the superiority of variance reduction in MARS.
\begin{table*}[htb!]
\caption{The best test losses and accuracies of different optimizers on CIFAR10 and CIFAR100 datasets during the training epochs. The \textbf{bolded} are the best test loss or best test accuracy among the listed optimizers.}
\label{tab:cv-evaluation}
\centering
\small
\begin{tabular}{ccccc}
\hline
\multirow{2}{*}{\textbf{Optimizer}} & \multicolumn{2}{c}{\textbf{Best Test Loss}} & \multicolumn{2}{c}{\textbf{Best Test Accuracy}} \\ \cline{2-5} & \textbf{CIFAR10}     & \textbf{CIFAR100}    & \textbf{CIFAR10}       & \textbf{CIFAR100}      \\ \hline
AdamW                                & 0.230                & 1.726                & 94.81                  & 73.70                   \\
Lion                                 & 0.245                & 1.351                & 94.68                  & 74.28                  \\
Shampoo                              & 0.354                & 2.426                & 94.65                  & 74.27                  \\
Muon                                 & 0.306                & 2.608                & 95.08                  & 74.64                  \\
\hline
MARS-AdamW-approx                    & 0.199                & 0.971                & \textbf{95.29}         & 76.97                  \\
MARS-AdamW                           & \textbf{0.193}       & \textbf{0.888}       & 95.26                  & \textbf{77.38}         \\
MARS-Lion-approx                     & 0.202                & 0.985                & 95.05                  & 75.97                  \\
MARS-Lion                            & 0.219                & 0.991                & 94.98                  & 76.15                  \\
MARS-Shampoo-approx                  & 0.202                & 1.256                & 94.92                  & 74.80                   \\
MARS-Shampoo                         & 0.194                & 0.982                & 94.98                  & 75.83                  \\ \hline
\end{tabular}

\end{table*}

\begin{figure}[htb!]
    \centering
    \subfigure[AdamW-series]{
		\label{cifar10_test_loss_adamw}
		\includegraphics[width=0.315\linewidth]{./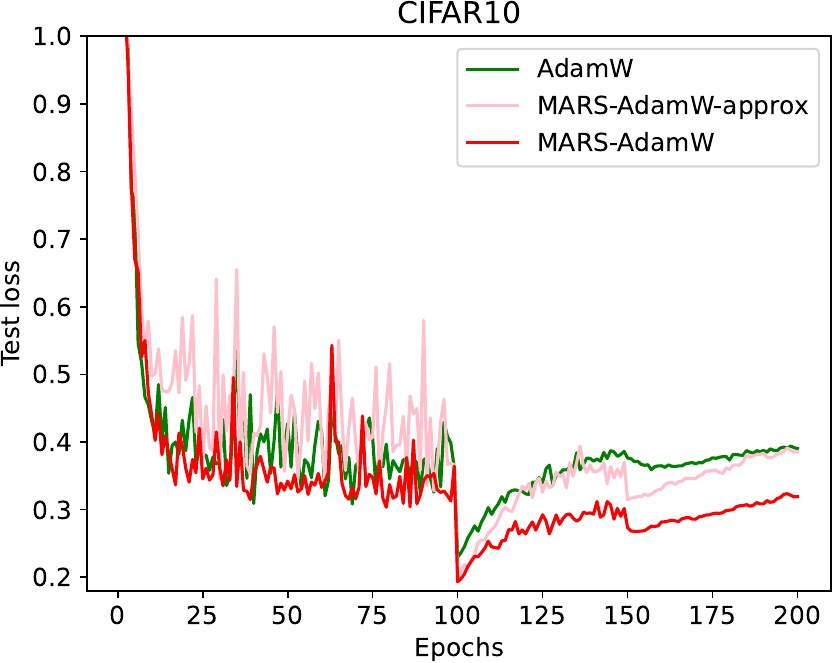}}
    \subfigure[Lion-series]{
		\label{cifar10_test_loss_lion}
		\includegraphics[width=0.315\linewidth]{./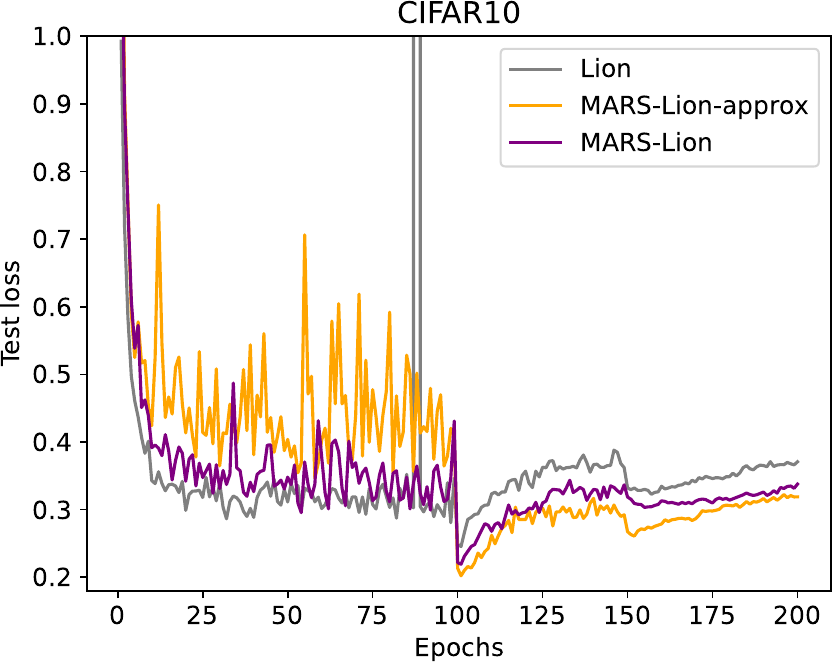}}
    \subfigure[Shampoo-series]{
		\label{cifar10_test_loss_shampoo}
		\includegraphics[width=0.315\linewidth]{./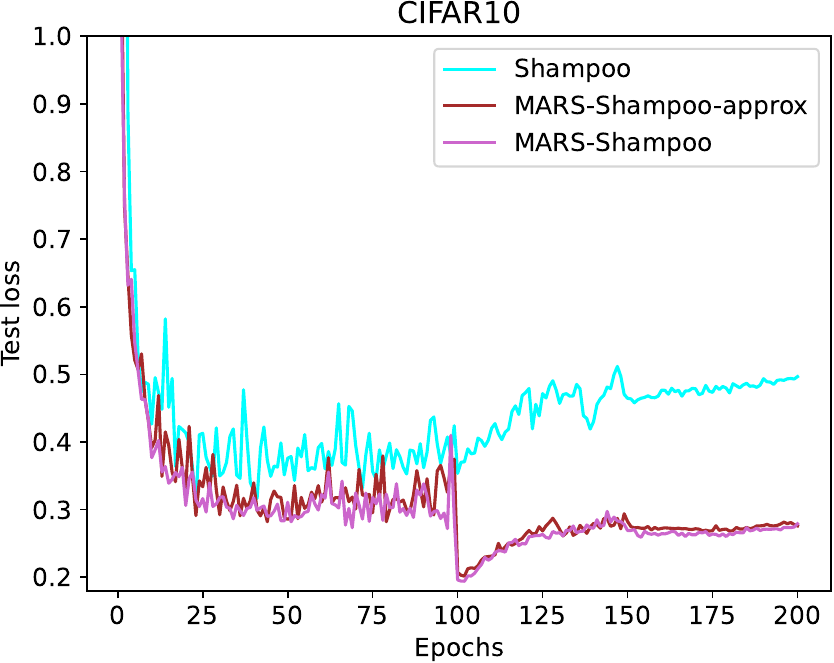}}
    \caption{The test loss curves for the baselines of AdamW, Lion and Shampoo without variance reduction, the approximate and the exact versions of MARS instantiations on CIFAR-10 dataset.}
    \label{fig:cifar10_test_loss_ablation}
\end{figure}
\begin{figure}[htb!]
    \centering
    \subfigure[AdamW-series]{
		\label{cifar10_test_acc_adamw}
		\includegraphics[width=0.315\linewidth]{./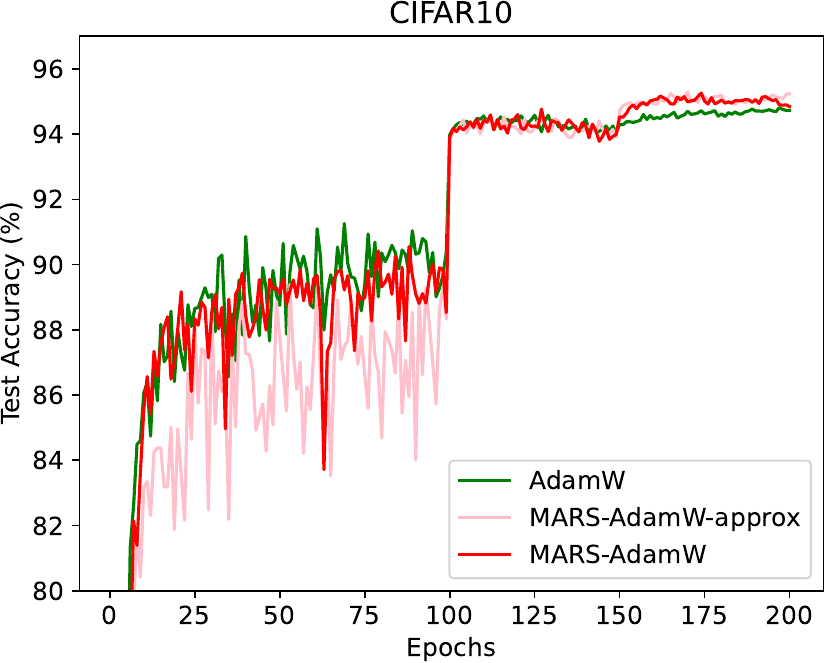}}
    \subfigure[Lion-series]{
		\label{cifar10_test_acc_lion}
		\includegraphics[width=0.315\linewidth]{./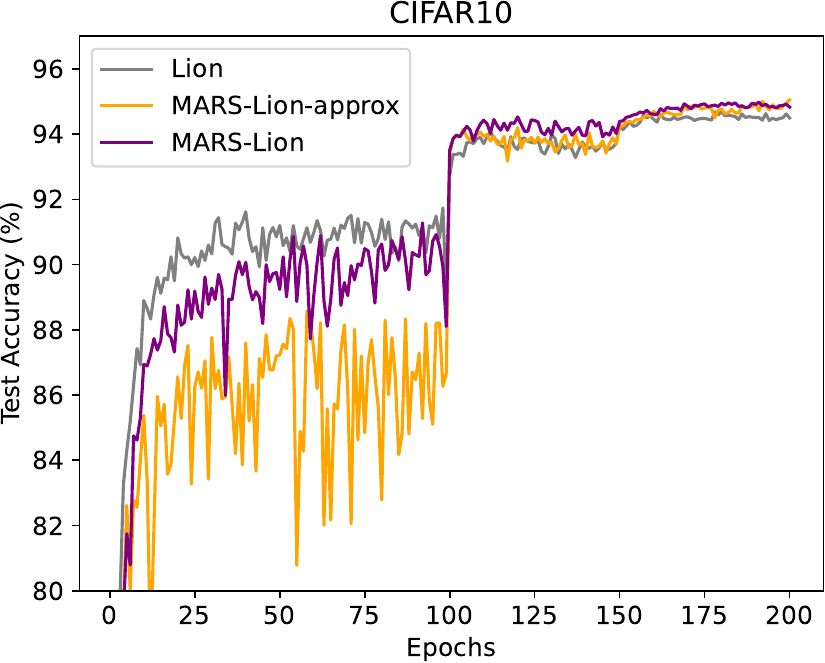}}
    \subfigure[Shampoo-series]{
		\label{cifar10_test_acc_shampoo}
		\includegraphics[width=0.315\linewidth]{./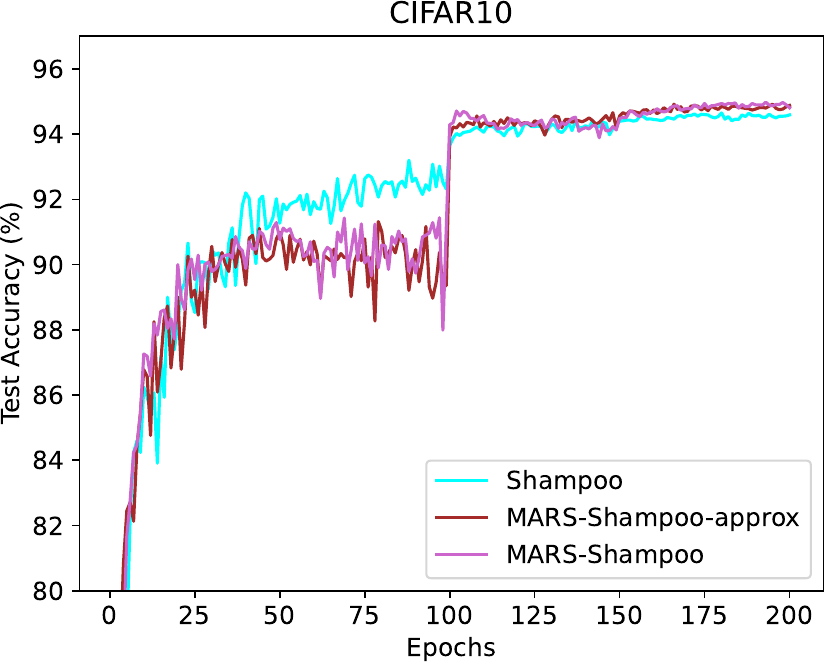}}
    \caption{The test accuracy curves for the baselines of AdamW, Lion and Shampoo without variance reduction, the approximate and the exact versions of MARS instantiations on CIFAR-10 dataset.}
    \label{fig:cifar10_test_acc_ablation}
\end{figure}
\begin{figure}[htb!]
    \centering
    \subfigure[AdamW-series]{
		\label{cifar100_test_loss_adamw}
		\includegraphics[width=0.315\linewidth]{./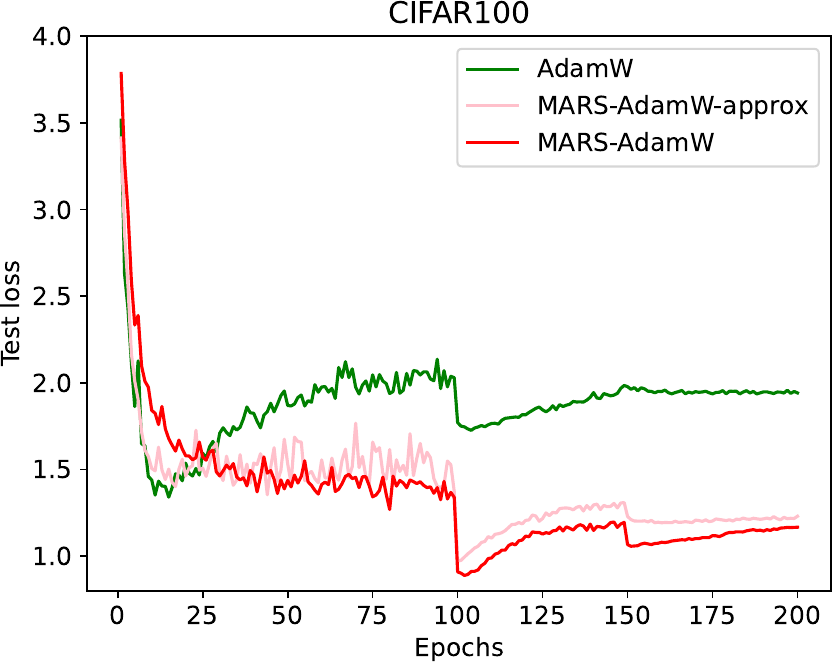}}
    \subfigure[Lion-series]{
		\label{cifar100_test_loss_lion}
		\includegraphics[width=0.315\linewidth]{./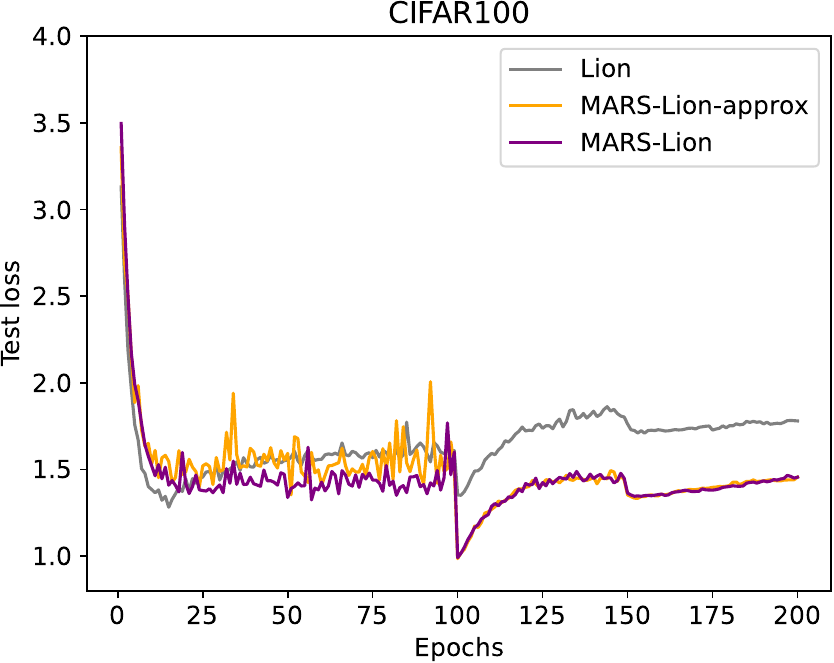}}
    \subfigure[Shampoo-series]{
		\label{cifar100_test_loss_shampoo}
		\includegraphics[width=0.315\linewidth]{./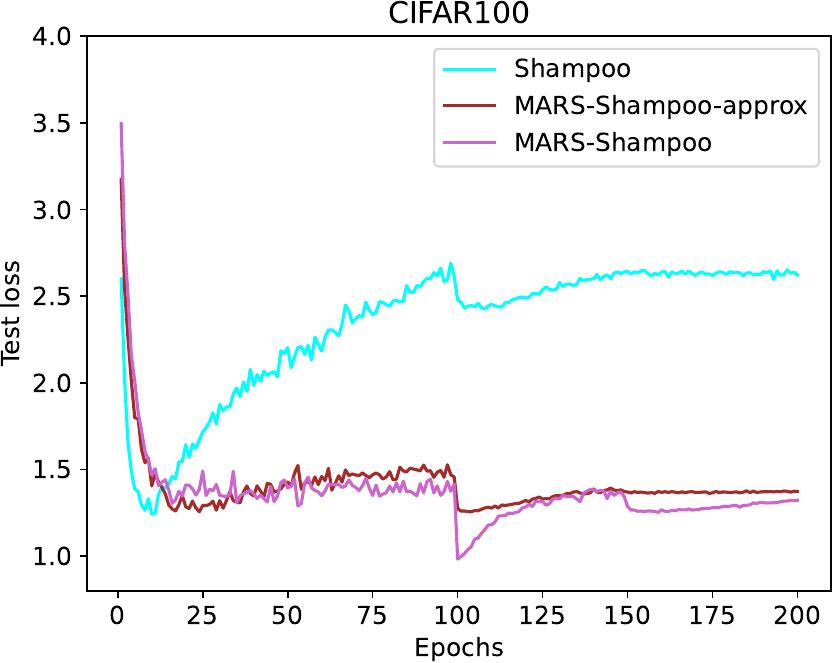}}
    \caption{The test loss curves for the baselines of AdamW, Lion and Shampoo without variance reduction, the approximate and the exact versions of MARS instantiations on CIFAR-100 dataset.}
    \label{fig:cifar100_test_loss_ablation}
\end{figure}
\begin{figure}[htb!]
    \centering
    \subfigure[AdamW-series]{
		\label{cifar100_test_acc_adamw}
		\includegraphics[width=0.315\linewidth]{./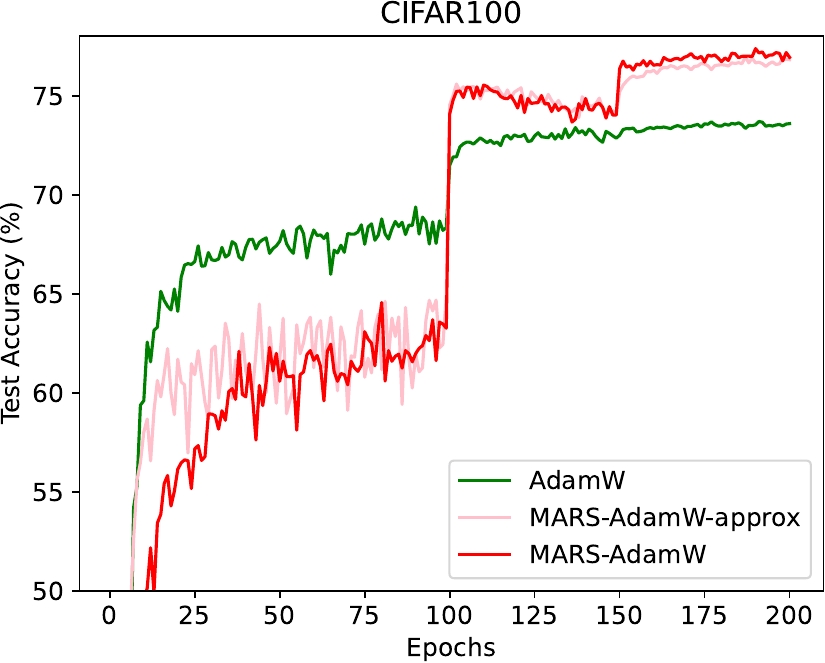}}
    \subfigure[Lion-series]{
		\label{cifar100_test_acc_lion}
		\includegraphics[width=0.315\linewidth]{./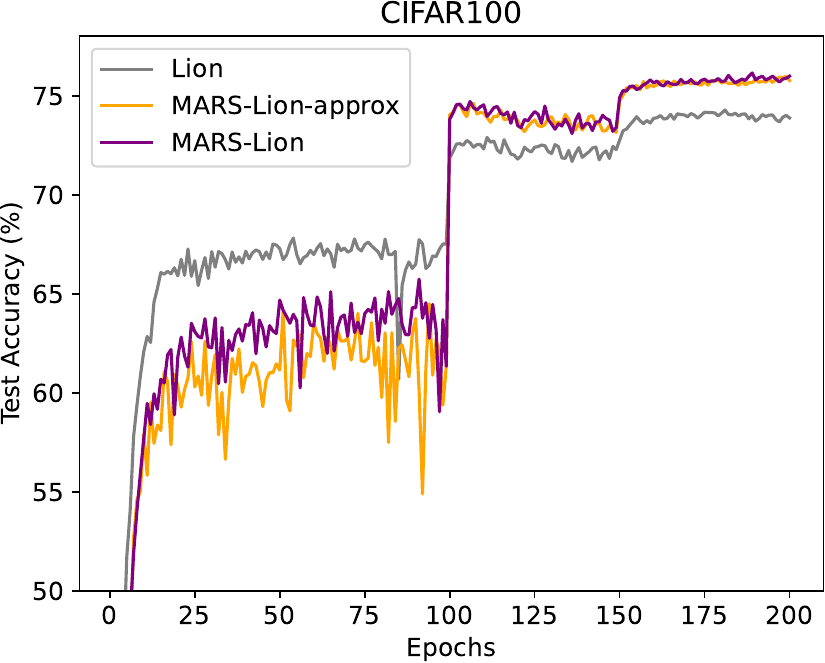}}
    \subfigure[Shampoo-series]{
		\label{cifar100_test_acc_shampoo}
		\includegraphics[width=0.315\linewidth]{./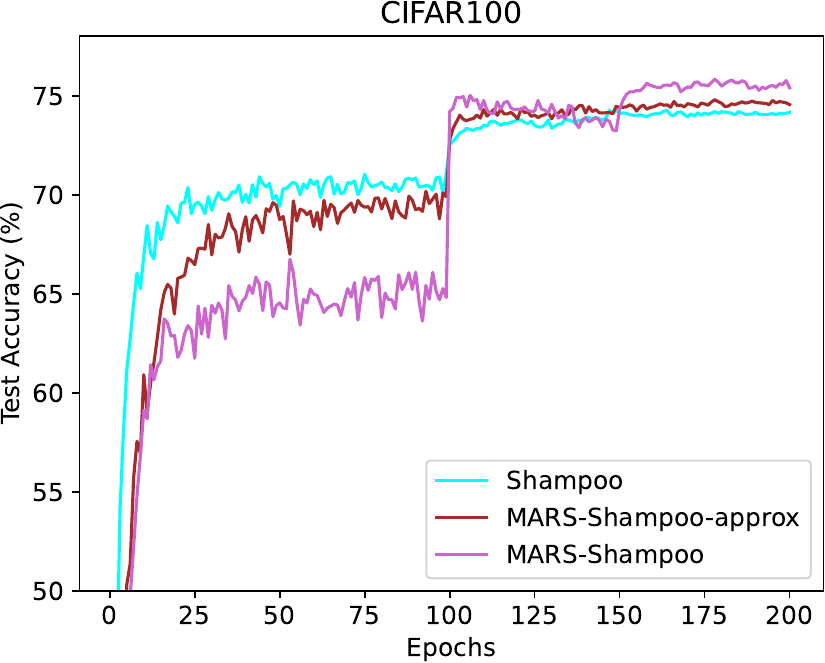}}
    \caption{The test accuracy curves for the baselines of AdamW, Lion and Shampoo without variance reduction, the approximate and the exact versions of MARS instantiations on CIFAR-100 dataset.}
    \label{fig:cifar100_test_acc_ablation}
\end{figure}

\subsection{Sensitivity to $\gamma$.} To explore the impact of $\gamma_t$, we test various $\gamma$s on GPT-2 small model, including constant and linearly changing schedules. And we plot the training and validation curves in Figure~\ref{fig:gamma}. It can be observed that there are slight differences among different $\gamma$s where 0.025 is the best $\gamma$. Therefore, we used $\gamma=0.025$ for other experiments in this paper.

\begin{figure}[htb!]
    \centering
    \includegraphics[width=0.45\linewidth]{./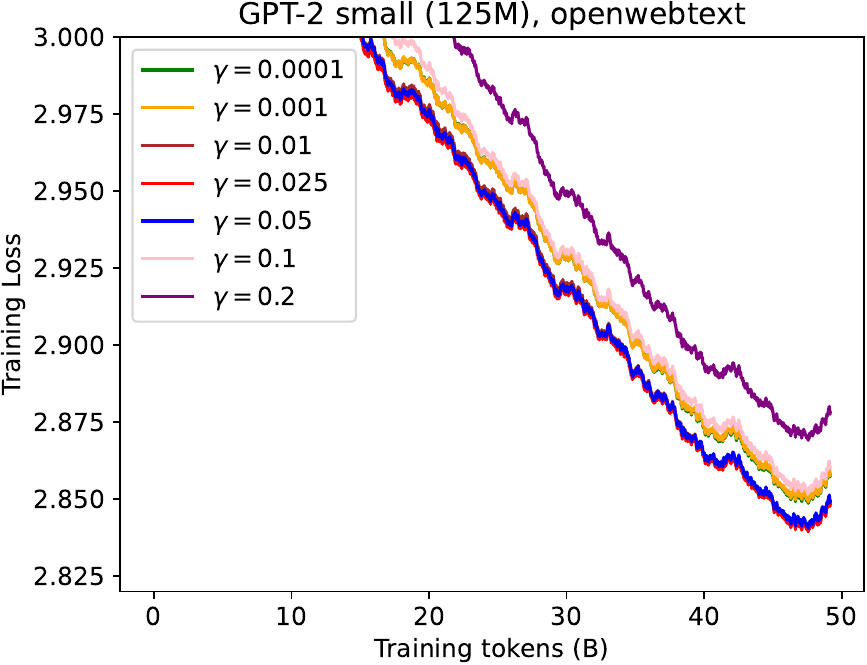}
    \includegraphics[width=0.45\linewidth]{./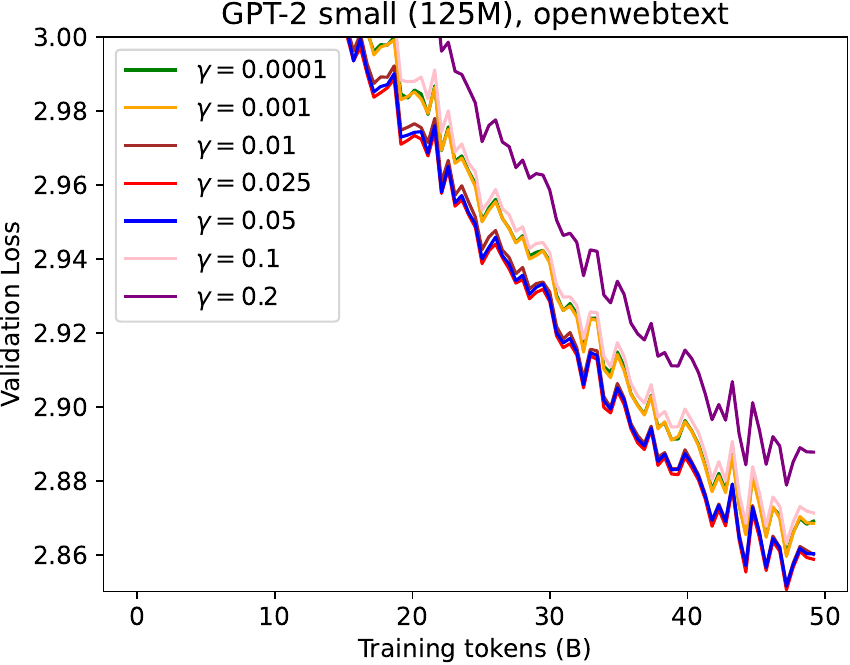}
    \caption{Training loss and validation loss curves for \text{\method} and \text{\methodap} on GPT-2 small (125M) models trained with MARS-AdamW-approx with different $\gamma$s, pretrained with OpenWebText dataset and plotted against training tokens.}
    \label{fig:gamma}
\end{figure}

\subsection{Different Learning Rate Scheduler}
\subsubsection{Constant LR}\label{app:const}
To ensure a fair comparison by eliminating the influence of learning rate changes during training and to explore the potential for continuous training with MARS, we conduct supplementary experiments on GPT-2 small, medium, and large models using a constant learning rate for both AdamW and MARS-AdamW-approx. For each group of experiments, we compared between 2 different maximum learning rates. The training and validation curves are displayed in Figures~\ref{fig:const_lr_train} and~\ref{fig:const_lr_valid}. The results also indicate that our algorithm has superior performances over AdamW optimizer under a fair comparison of constant learning rates. 
\begin{figure}[htb!]
    \centering
    \includegraphics[width=0.315\linewidth]{./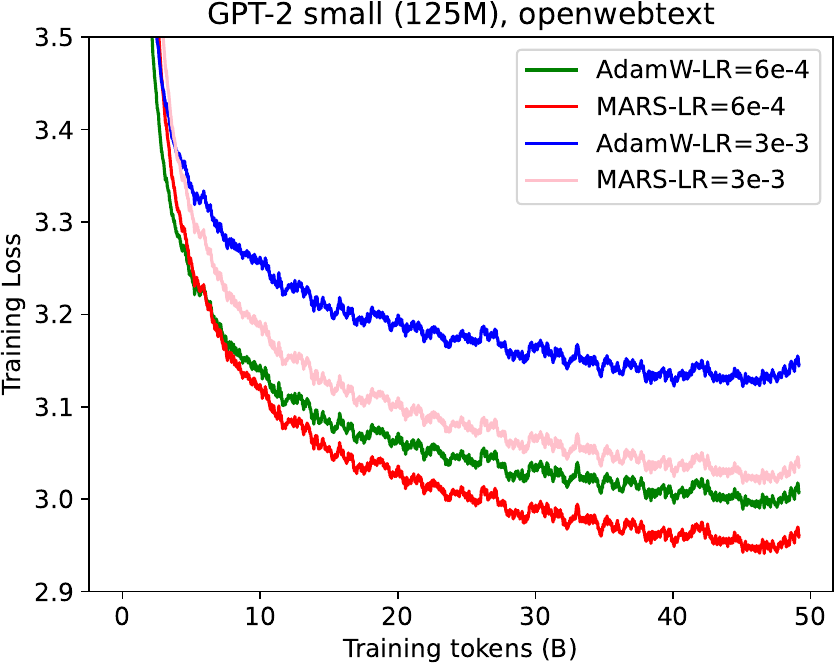}
    \includegraphics[width=0.315\linewidth]{./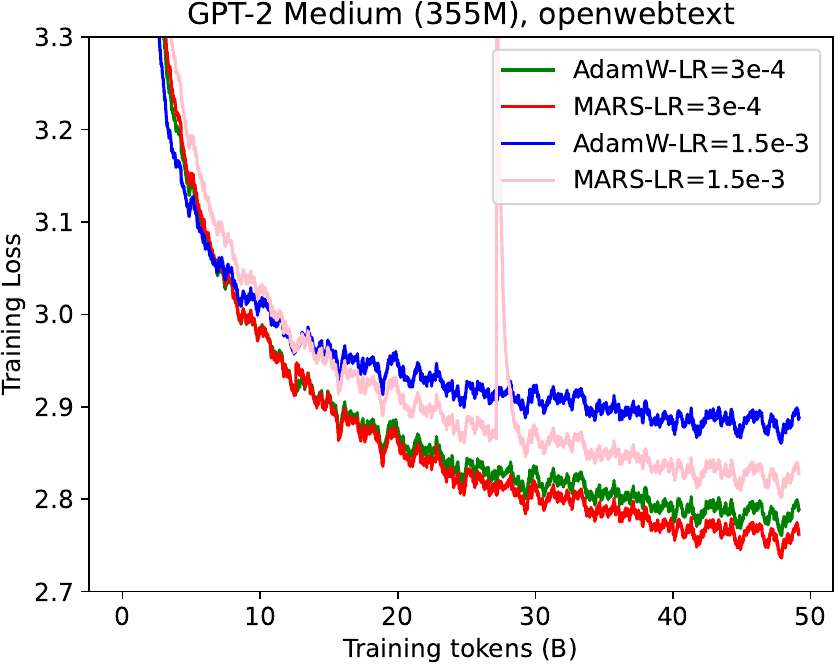}
    \includegraphics[width=0.315\linewidth]{./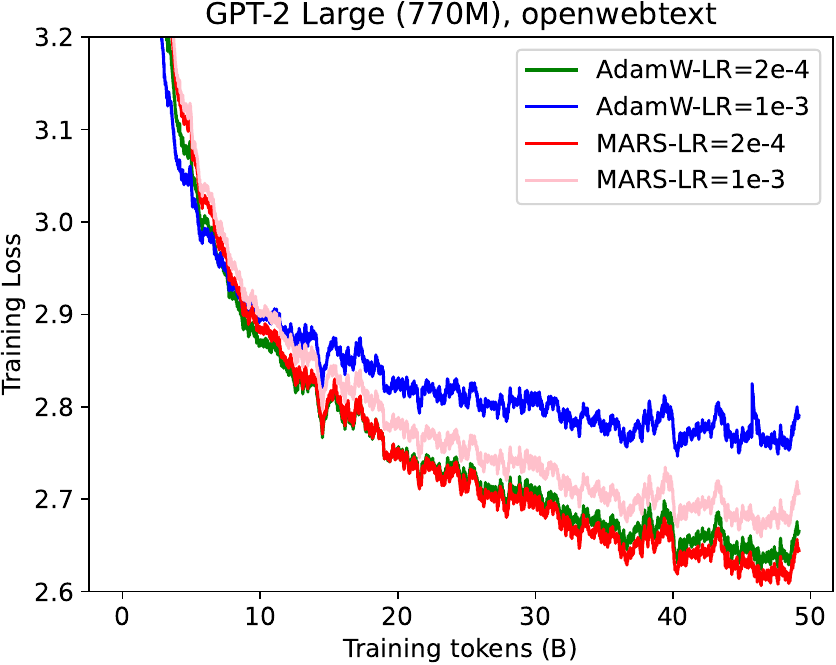}
    \caption{Training loss curves for \text{\method} and \text{\methodap} on GPT-2 small (125M, left), medium (355M, middle) and large (770M, right) with constant learning rate, pretrained with OpenWebText dataset and plotted against training tokens.}
    \label{fig:const_lr_train}
\end{figure}
\begin{figure}[htb!]
    \centering
    \includegraphics[width=0.315\linewidth]{./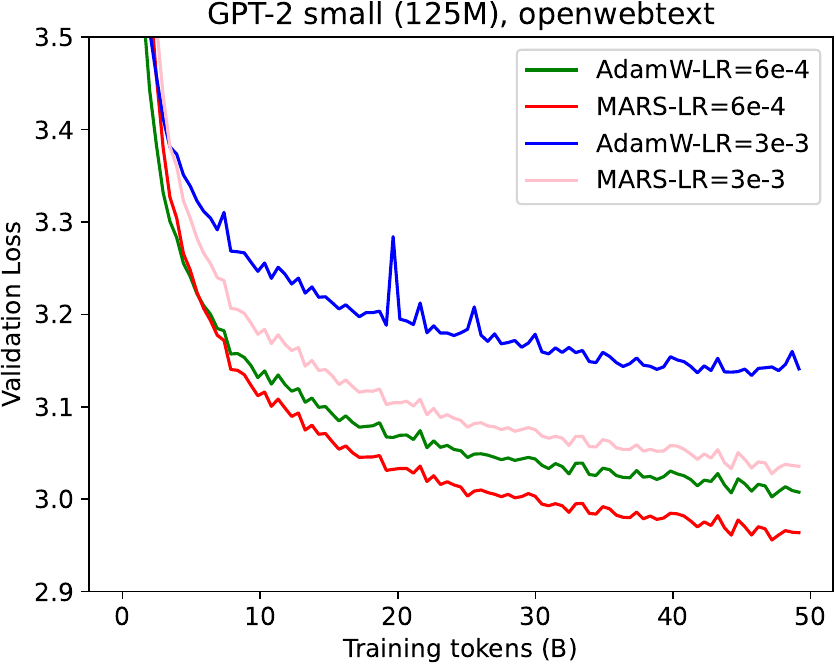}
    \includegraphics[width=0.315\linewidth]{./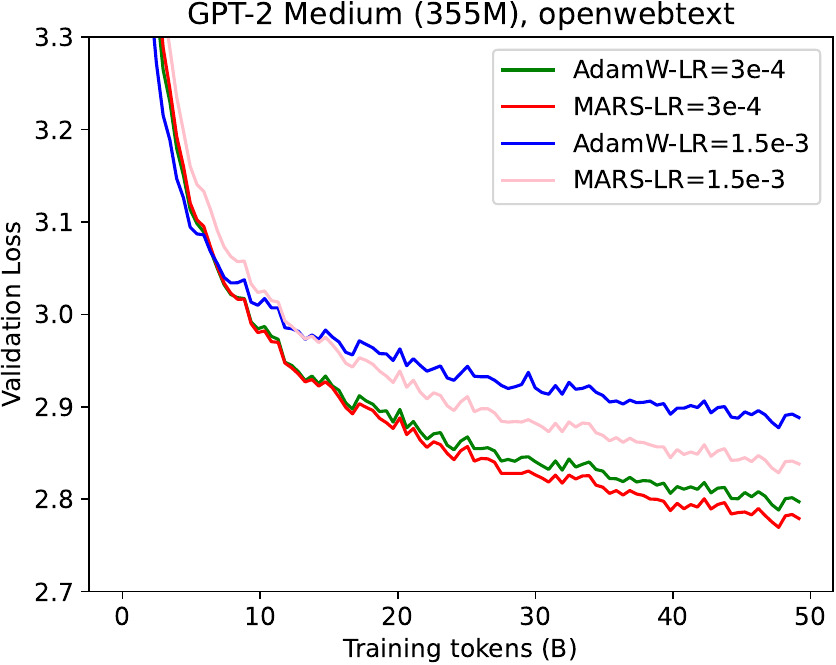}
    \includegraphics[width=0.315\linewidth]{./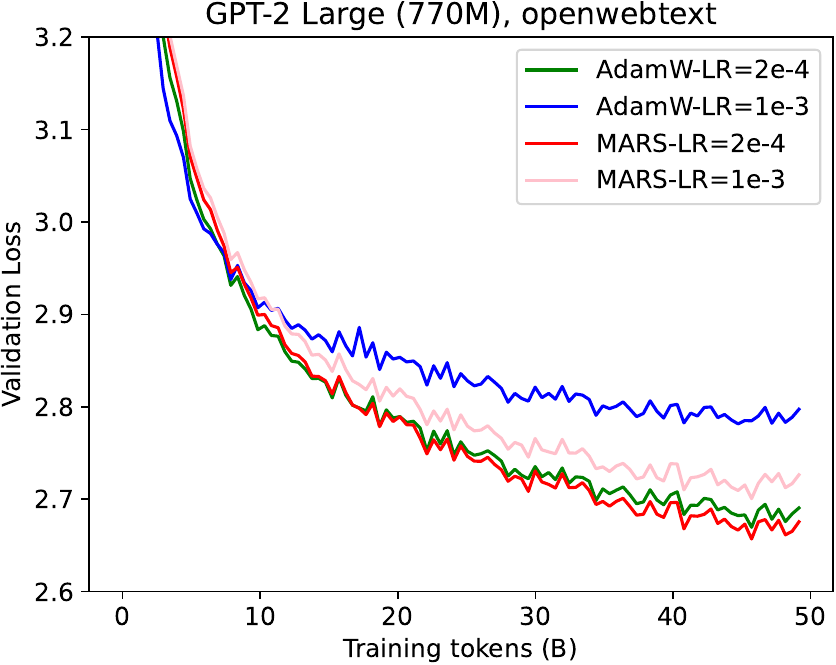}
    \caption{Validation loss curves for \text{\method} and \text{\methodap} on GPT-2 small (125M, left), medium (355M, middle) and large (770M, right) with constatnt learning rate, pretrained with OpenWebText dataset and plotted against training tokens.}
    \label{fig:const_lr_valid}
\end{figure}
\subsubsection{WSD Scheduler}\label{app:wsd}
Cosine learning rate scheduler is utilized in most of our experiments. Recently \citet{hu2024minicpm} introduced a novel learning rate scheduler called Warmup-Stable-Decay (WSD) scheduler, which composed of 3 stages, including learning rate linear-warmup stage, constant learning rate stage, as well as learning rate decay stage. To test the flexibility and ability of continuous training of MARS trained with respect to learning rate schedulers, we implement experiments on GPT-2 small and medium models with AdamW and MARS-AdamW-approx scheduled with WSD for 10k, 20k, 50k and 100k steps. The training and validation loss curves are shown in Figures~\ref{fig:wsd_train} and~\ref{fig:wsd_valid}. The curves indicate that MARS has a better potential for continuous training and exhibits an explicit edge over baseline algorithm.
\begin{figure}[htb!]
    \centering
    \includegraphics[width=0.45\linewidth]{./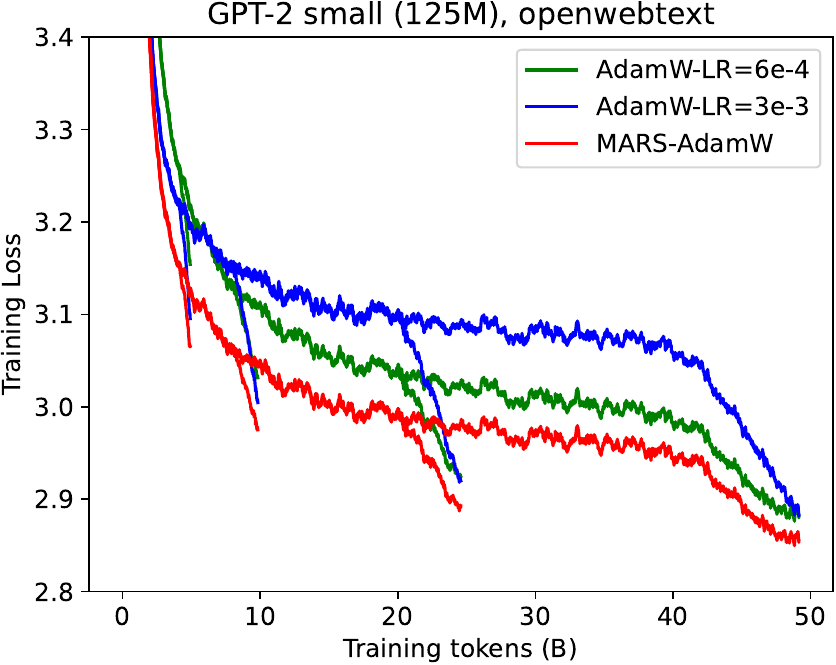}
    \includegraphics[width=0.45\linewidth]{./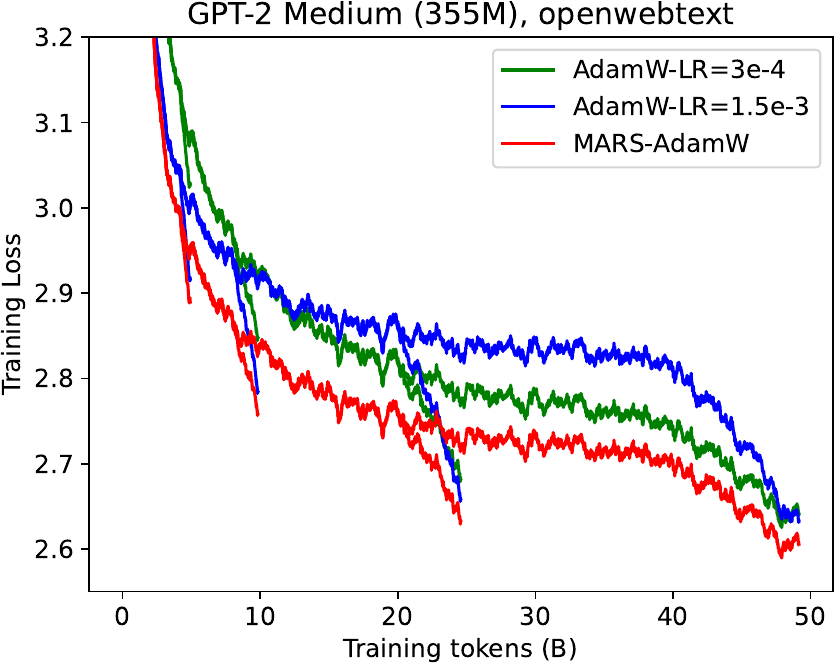}
    \caption{Training loss curves for AdamW (with different maximum learning rates, labeled with ``LR'') and MARS-AdamW-approx on GPT-2 small (125M, left) and medium (355M, right) with WSD Scheduler for 10k, 20k, 50k and 100k steps, pretrained with OpenWebText dataset and plotted against training tokens.}
    \label{fig:wsd_train}
\end{figure}
\begin{figure}[htb!]
    \centering
    \includegraphics[width=0.45\linewidth]{./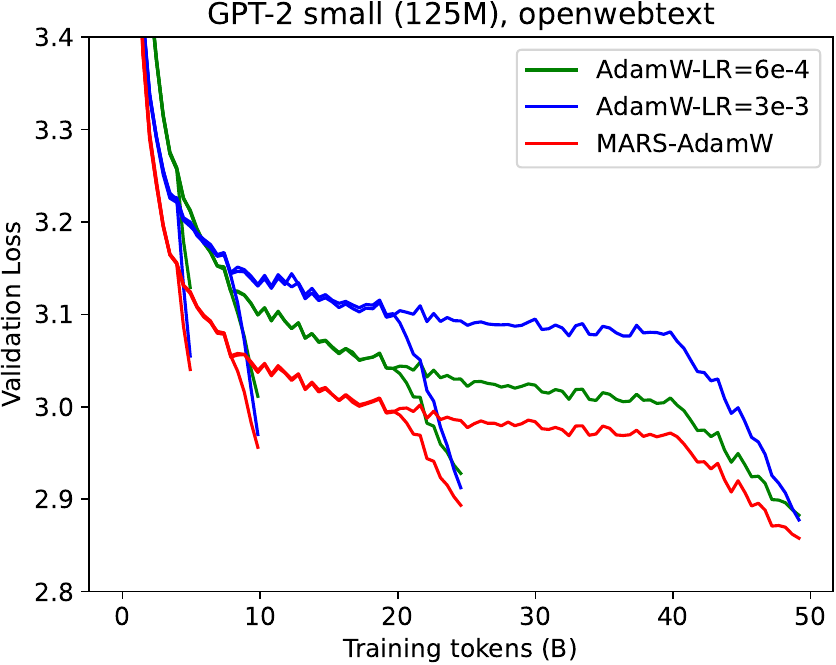}
    \includegraphics[width=0.45\linewidth]{./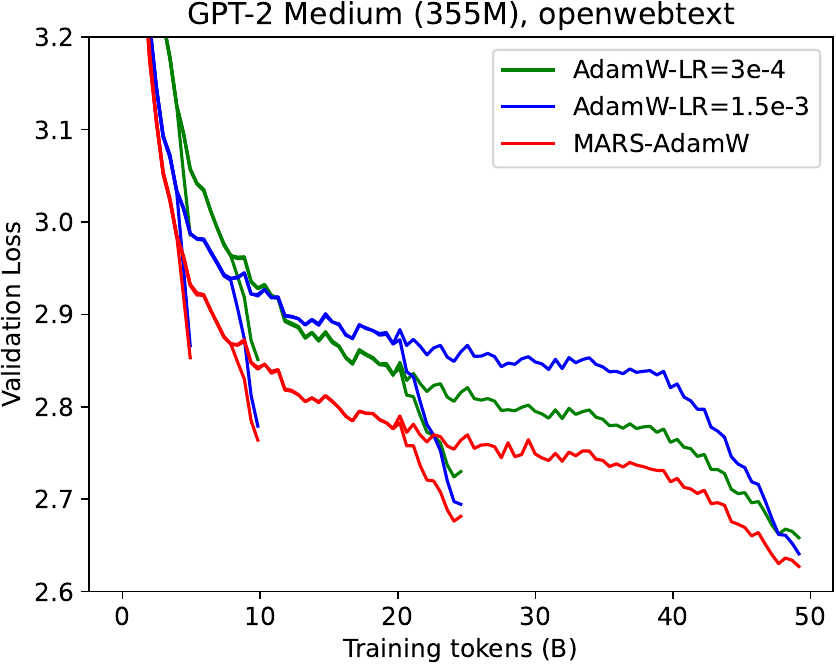}
    \caption{Validation loss curves for AdamW (with different maximum learning rates, labeled with ``LR'') and MARS-AdamW-approx on GPT-2 small (125M, left) and medium (355M, right) with WSD Scheduler for 10k, 20k, 50k and 100k steps, pretrained with OpenWebText dataset and plotted against training tokens.}
    \label{fig:wsd_valid}
\end{figure}

\subsection{Sensitivity to Batch Size}
We also investigate the sensitivity to batch size of our algorithm. We implement experiments with MARS-AdamW on GPT-2 small with batch size 240, 480 or 960, and compare them to AdamW. We fix other hyper-parameters the same as the main experiments and trained with 80,000 steps. The results are plotted in Figure~\ref{fig:sens_bsz}. It can be seen that MARS-AdamW consistently outperforms AdamW, with the performance gap widening at smaller batch sizes—supporting the intuition that variance reduction is especially effective in high-variance (small-batch) settings.
\begin{figure}[htb!]
    \centering
    \includegraphics[width=0.45\linewidth]{./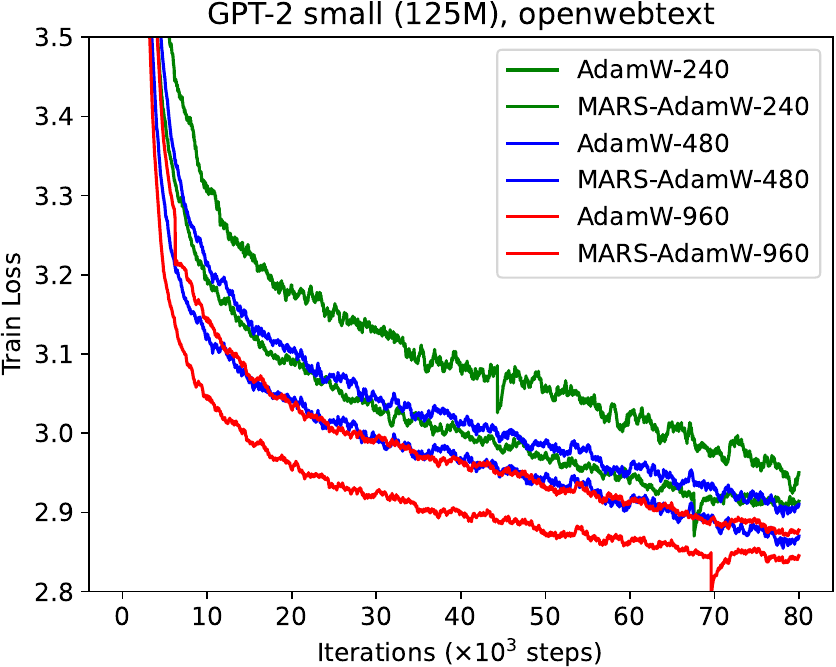}
    \includegraphics[width=0.45\linewidth]{./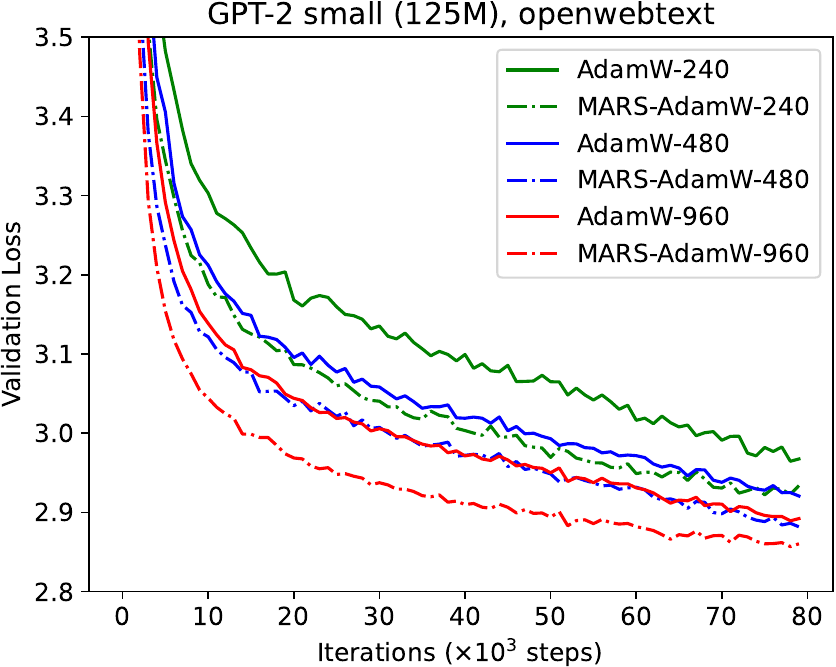}
    \caption{The training and validation loss curves for varying global batch sizes (240/480/960), plotted against iteration steps on GPT-2 small model (125M).}
    \label{fig:sens_bsz}
\end{figure}

\section{Hyper-parameter Settings}\label{appendix_hyper}
For training parameters, we did a grid search over learning rates between $\{1e-4, 1.5e-4, 3e-4, 6e-4, 1e-3, 1.5e-3, 3e-3, 6e-3\}$, for weight decay coefficient, we did a grid search over $\{1e-1, 1e-2, 1e-3\}$. For AdamW baseline, although we utilized the golden standard learning rates in literature (a parameter search for AdamW have been done in~\citet{liu2023sophia}), we also did a grid search on different parameters. Part of the results of different learning rates and learning rate schedules are also shown in Section~\ref{app:const} and~\ref{app:wsd}, respectively.
Table \ref{arch_hyperparam} summarizes the architectural hyperparameters for GPT-2 models with 125M (small), 355M (medium), 770M (large) and 1.5B (XL, only used in Appendix~\ref{sec:fw}) parameters.  Table \ref{tb:hyperparameter} lists the general hyperparameters used across all experiments, while Table \ref{tb:hyperparameter-train} present the training hyperparameters for the small, medium, and large models, respectively.

\begin{table}[htb!]
\caption{Architecture hyperparameters for GPT-2 series models \citep{radford2019language}.}
\label{arch_hyperparam}
\vskip 0.15in
\begin{center}
\begin{small}
\begin{tabular}{ccccc}
\toprule
\textbf{Model} & \textbf{\#Param} & \textbf{\#Layer} & \multicolumn{1}{l}{\textbf{$n_{\text{head}}$}} & \multicolumn{1}{l}{\textbf{$d_{\text{emb}}$}} \\
\midrule
GPT-2 small    & 125M             & 12               & 12                                  & 768                                 \\
GPT-2 medium   & 355M             & 24               & 16                                  & 1024                                \\
GPT-2 large    & 770M             & 36               & 20                                  & 1280                                \\
GPT-2 XL    & 1.5B             & 48               & 25                                  & 1600                                \\
\bottomrule
\end{tabular}
\end{small}
\end{center}
\vskip -0.1in
\end{table}

\begin{table}[htb!]
\caption{General hyper-parameters for the experiments.}
\label{tb:hyperparameter}
\vskip 0.15in
\begin{center}
\begin{small}
\begin{tabular}{cc}
\toprule
\textbf{Hyper-parameter} & \textbf{Value} \\
\midrule
Steps                   & 100,000          \\
Batch size in total      & 480     \\
Context length      & 1024     \\
Gradient clipping threshold & 1.0 \\
Dropout & 0.0 \\
Learning rate schedule       & Cosine          \\
Warm-up steps & 2000 \\
Base seed                   & 5000          \\
\bottomrule
\end{tabular}
\end{small}
\end{center}
\vskip -0.1in
\end{table}

\begin{table}[htb!]
\caption{Hyper-parameters for GPT-2 experiments. We use $\gamma_t\equiv0.025$ for \method~and $\mu=0.95$ for Muon.}
\label{tb:hyperparameter-train}
\vskip 0.15in
\begin{center}
\begin{small}
\begin{tabular}{ccccc}
\hline
\textbf{Hyper-parameter}           & \textbf{GPT-2 Size} & AdamW      & Muon       & MARS-AdamW  \\ \hline
\multirow{3}{*}{Max learning rate} & small (125M)        & 6e-4       & 2e-2       & 6e-3        \\
                                   & medium (355M)       & 3e-4       & 1e-2       & 3e-3        \\
                                   & large (770M)        & 2e-4       & 6.67e-3    & 2e-3        \\ \hline
\multirow{3}{*}{Min learning rate} & small (125M)        & 3e-5       & 3e-5       & 3e-5        \\
                                   & medium (355M)       & 6e-5       & 6e-5       & 6e-5        \\
                                   & large (770M)        & 1e-5       & 1e-5       & 1e-5        \\ \hline
$(\beta_1, \beta_2)$               & small/medium/large  & (0.9,0.95) & - & (0.95,0.99) \\ \hline
\end{tabular}
\end{small}
\end{center}
\vskip -0.1in
\end{table}
\end{document}